\newcommand{\mathcolorbox}[3]{%
  \begingroup
  \setlength{\fboxsep}{1pt}
  \colorbox{#1}{$\m@th#2#3$}%
  \endgroup
}
\definecolor{myblue}{RGB}{173, 216, 230}   
\definecolor{myorange}{RGB}{255, 200, 140} 
\definecolor{mygreen}{RGB}{220, 250, 225}
\definecolor{mygray}{RGB}{235, 235, 235}   
\theoremstyle{plain}
\newtheorem{theorem}{Theorem}[section]
\newtheorem{proposition}[theorem]{Proposition}
\newtheorem{lemma}[theorem]{Lemma}
\newtheorem{corollary}[theorem]{Corollary}
\theoremstyle{definition}
\newtheorem{definition}[theorem]{Definition}
\theoremstyle{remark}
\newcommand{\figref}[1]{Fig.~\ref{#1}}
\newcommand{\tabref}[1]{Table~\ref{#1}}
\newcommand{\eqnref}[1]{\text{Eq.}~\ref{#1}}
\newcommand{\secref}[1]{\S\ref{#1}}
\definecolor{hydralora}{gray}{0.93}          
\definecolor{mixlora}{RGB}{210, 235, 245} 
\definecolor{smore}{RGB}{220, 245, 220}   
\definecolor{hydraloraavg}{gray}{0.87}   
\definecolor{mixloraavg}{RGB}{195, 220, 235} 
\definecolor{smoreavg}{RGB}{190, 235, 190}
\newcommand{\defeq}{\vcentcolon=}
\newcommand{\smorelogo}[1][1.0em]{\raisebox{-0.6ex}{\includegraphics[height=#1]{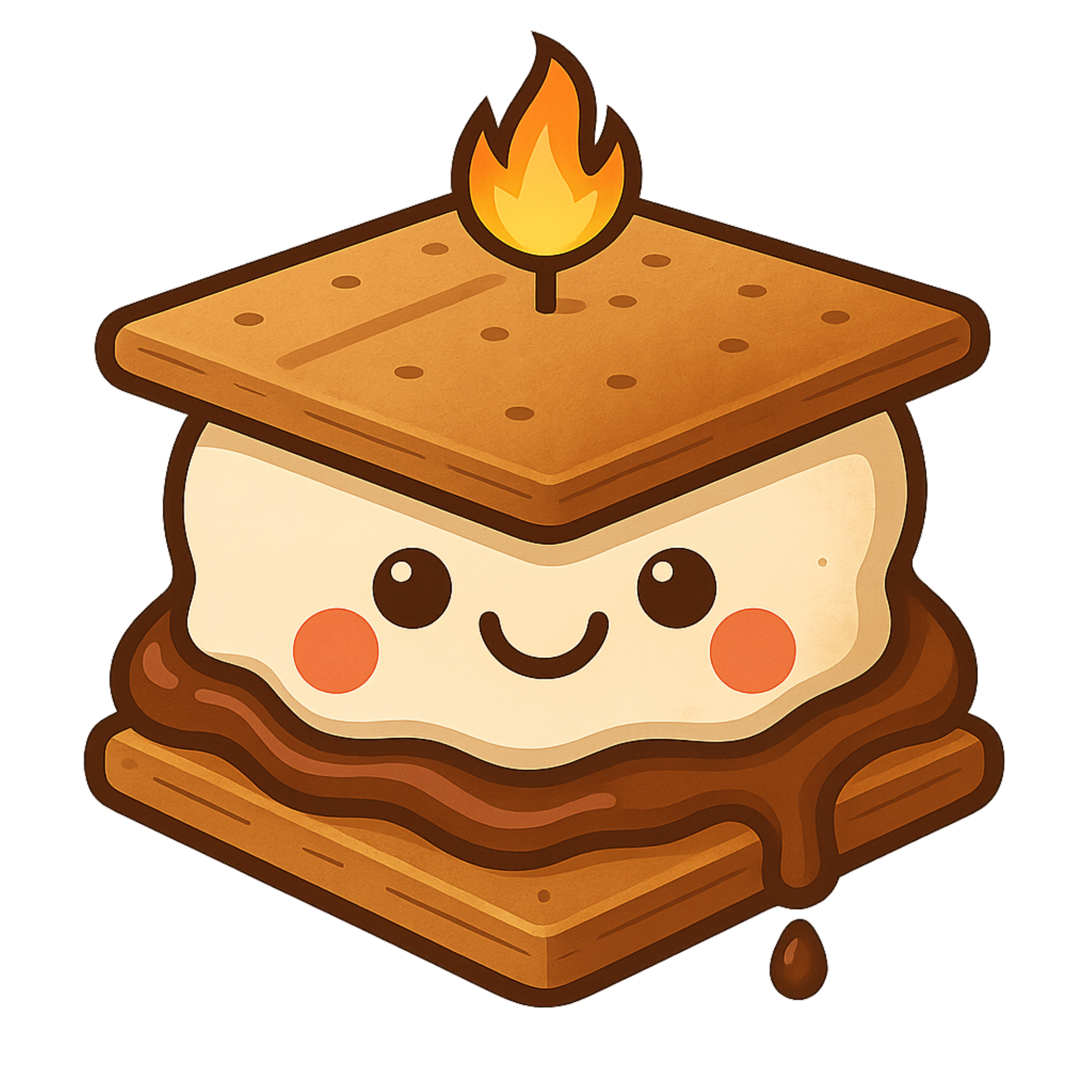}}\hspace{0.35em}}
\newcommand{\smore}{{\fontfamily{lmtt}\selectfont S'MoRE}\xspace}%
\NewDocumentCommand{\x}{O{} O{}}{%
    \IfStrEq{#1}{}{%
        \bm{x} 
    }{%
        \IfStrEq{#2}{}{%
            \bm{x}_{#1} 
        }{%
            \bm{x}_{#1}^{#2} 
        }%
    }%
}
\NewDocumentCommand{\UP}{O{} O{}}{%
    \IfStrEq{#1}{}{%
        \bm{B} 
    }{%
        \IfStrEq{#2}{}{%
            \bm{B}_{#1} 
        }{%
            \bm{B}_{#1}^{#2} 
        }%
    }%
}
\NewDocumentCommand{\wExp}{O{} O{}}{%
    \IfStrEq{#1}{}{%
        \ensuremath{\alpha} 
    }{%
        \IfStrEq{#2}{}{%
            \ensuremath{\alpha_{#1}} 
        }{%
            \ensuremath{\alpha_{#1}^{#2}} 
        }%
    }%
}
\NewDocumentCommand{\DOWN}{O{} O{}}{%
    \IfStrEq{#1}{}{%
        \bm{A} 
    }{%
        \IfStrEq{#2}{}{%
            \bm{A}_{#1} 
        }{%
            \bm{A}_{#1}^{#2} 
        }%
    }%
}
\NewDocumentCommand{\W}{O{} O{}}{%
    \IfStrEq{#1}{}{%
        \bm{W} 
    }{%
        \IfStrEq{#2}{}{%
            \bm{W}_{#1} 
        }{%
            \bm{W}_{#1}^{#2} 
        }%
    }%
}
\NewDocumentCommand{\N}{O{} O{}}{%
    \IfStrEq{#1}{}{%
        \mathcal{N} 
    }{%
        \IfStrEq{#2}{}{%
            \mathcal{N}_{#1} 
        }{%
            \mathcal{N}_{#1}^{#2} 
        }%
    }%
}
\newcommand{\RMLP}[2]{\ensuremath{\mathtt{MLP}_{#1}\left(#2\right)}}
\NewDocumentCommand{\ROUTEx}{O{} O{}}{%
    \IfStrEq{#1}{}{%
        \ensuremath{\mathtt{ROUTE}\left(\bm{x}\right)}
    }{%
        \IfStrEq{#2}{}{
            \ensuremath{\mathtt{ROUTE}\left(\bm{x}\right)^{#1}}
        }{ 
            \ensuremath{\mathrm{\mathtt{ROUTE}}_{#2}\left(\bm{x}\right)^{#1}}
        }%
    }%
}
\newcommand{\Res}[1]{\mathcal{R}_{#1}}
\newcommand{\func}[2][]{
    {\text{\fontfamily{lmtt}\selectfont #1}\left(#2\right)}}
\newcommand{\G}[1][]{
    \IfEq{#1}{}
    {\mathcal{G}}
    {\mathcal{G}}_{#1}}
\newcommand{\V}[1][]{
    \IfEq{#1}{}
    {\mathcal{V}}
    {\mathcal{V}_{#1}}}
\newcommand{\Vp}[1][]{
    \IfEq{#1}{}
    {\mathcal{V}}
    {\mathcal{V}^{(#1)}}}
\newcommand{\E}[1][]{
    \IfEq{#1}{}
    {\mathcal{E}}
    {\mathcal{E}_{#1}}}
\newcommand{\M}[1][]{
    \IfEq{#1}{}
    {\mathcal{M}}
    {\mathcal{M}}_{#1}}
\newcommand{\Ss}[1][]{
    \IfEq{#1}{}
    {\mathcal{S}}
    {\mathcal{S}}_{#1}}
\newcommand{\X}[1][]{
    \IfEq{#1}{}
    {\bm{X}}
    {\bm{X}^{\paren{#1}}}}
\newcommand{\Asym}[1][]{
    \IfEq{#1}{}
    {\widetilde{\bm{A}}}
    {\widetilde{\bm{A}}_{#1}}}
\newcommand{\Arw}[1][]{
    \IfEq{#1}{}
    {\widehat{\bm{A}}}
    {\widehat{\bm{A}}_{#1}}}
\newcommand{\A}[1][]{
    \IfEq{#1}{}
    {\bm{A}}
    {\bm{A}}_{#1}}
\newcommand{\Hx}[1][]{
    \IfEq{#1}{}
    {\bm{H}}
    {\bm{H}}^{(#1)}}
\newcommand{\size}[1]{\left\lvert #1 \right\rvert}
\newcommand{\paren}[1]{\left( #1 \right)}
\DeclarePairedDelimiterX\set[1]\lbrace\rbrace{\def\given{\;\delimsize\vert\;}#1}
\def\secref#1{\S\ref{#1}}
\def\eqref#1{equation~\ref{#1}}
\DeclareMathAlphabet{\mathsfit}{\encodingdefault}{\sfdefault}{m}{sl}
\SetMathAlphabet{\mathsfit}{bold}{\encodingdefault}{\sfdefault}{bx}{n}
\newcommand{\R}{\mathbb{R}}
\newcommand\DoToC{%
  \startcontents
  \printcontents{}{1}{\textbf{Table of Contents (Appendix)}\vskip9pt\hrule\vskip5pt}
  \vskip3pt\hrule\vskip5pt
}
\titlespacing\section{0pt}{0pt}{0pt}
\titlespacing\subsection{0pt}{0pt}{0pt}
\titlespacing\paragraph{0pt}{0pt}{4pt}
\title{\smore: Structural Mixture of Residual Experts for Parameter-Efficient LLM Fine-tuning}
\author{%
  Hanqing~Zeng\\
  Meta AI\\
  \texttt{zengh@meta.com} \\
  \And
  Yinglong Xia\\
  Meta AI\\
  \texttt{yxia@meta.com}
  \And
  Zhuokai Zhao\\
  Meta AI\\
  \texttt{zhuokai@meta.com}
  \And
  Chuan Jiang\\
  Meta AI\\
  \texttt{gjiang@meta.com}
  \And
  Qiang Zhang\\
  Meta AI\\
  \texttt{qiangzhang@meta.com}
  \And
  Jiayi Liu\\
  Meta AI\\
  \texttt{liujiayi@meta.com}
  \And
  Qunshu Zhang\\
  Meta AI\\
  \texttt{qunshuzhang@meta.com}
  \And
  Lizhu Zhang\\
  Meta AI\\
  \texttt{lizhu@meta.com}
  \And
  Xiangjun Fan\\
  Meta AI\\
  \texttt{maxfan@meta.com}
  \And
  Benyu Zhang\\
  Meta AI\\
  \texttt{byzhang@meta.com}
}
\begin{document}

\maketitle

\begin{abstract}
Fine-tuning pre-trained large language models (LLMs) presents a dual challenge of balancing parameter efficiency and model capacity.
Existing methods like low-rank adaptations (LoRA) are efficient but lack flexibility, while Mixture-of-Experts (MoE) enhance model capacity at the cost of more \& under-utilized parameters. 
To address these limitations, we propose \textbf{S}tructural \textbf{M}ixture \textbf{o}f \textbf{R}esidual \textbf{E}xperts (\textbf{\smore}), a novel framework that seamlessly integrates the efficiency of LoRA with the flexibility of MoE.
Conceptually, \smore employs \emph{hierarchical} low-rank decomposition of expert weights, yielding residuals of varying orders \emph{interconnected} in a multi-layer structure. 
By routing input tokens through sub-trees of residuals, \smore emulates the capacity of numerous experts by instantiating and assembling just a few low-rank matrices. 
We craft the inter-layer propagation of {\smore}'s residuals as a special type of Graph Neural Network (GNN), and prove that under similar parameter budget, {\smore} improves \textit{structural flexibility} of traditional MoE (or Mixture-of-LoRA) by exponential order. 
Comprehensive theoretical analysis and empirical results demonstrate that \smore achieves superior fine-tuning performance, offering a transformative approach for efficient LLM adaptation.
Our implementation is available at: \url{https://github.com/ZimpleX/SMoRE-LLM}. 
\end{abstract}
\section{Introduction}\label{sec: intro}

Large language models (LLMs) have achieved remarkable success across a wide range of tasks by leveraging extensive pretraining on vast datasets, which equips them with general-purpose knowledge~\citep{achiam2023gpt, dubey2024llama, team2024gemini, liu2024deepseek, TheC3}.
However, the versatility of pre-trained LLMs often falls short when applied to specialized tasks~\citep{hadi2023survey, ling2023domain, chen2024mj}. 
Fine-tuning addresses this limitation by refining LLM's capabilities to focus on the nuances of different domains~\citep{zhang2023instruction, wang2024preference, wang2025beyond}. 
However, it introduces a fundamental tension between balancing parameter efficiency and the need for expanded model capacity to capture task-specific complexity~\citep{wang2024parameter}.

Low-Rank Adaptations (LoRA)~\citep{hu2021lora, mao2025survey} are parameter efficient but lack the capacity required by complex tasks. 
On the other hand, Mixture-of-Experts (MoE)~\citep{lepikhin2020gshard, fedus2022switch, cai2024survey, deepseek-moe} architectures improve model capacity by enabling conditional computation where
 different tokens activate different experts. 
However, traditional MoEs are often less parameter- or data-efficient since the multiple experts need to separately learn their own parameters. 
Moreover, increasing the number of experts poses the challenge of balancing their utilization with low routing overhead.

Thus, to improve model capacity while maintaining parameter efficiency, we look into other scaling dimensions, such as \emph{routing flexibility} unveiled by recent literature. 
\cite{deepseek-moe, moe-scaling-law,mome} empirically show that under the same parameter budget, a large number of small (i.e., fine-grained) experts is more powerful than a small number of large experts. \citet{moe-scaling-law} quantitatively derives the MoE scaling law with respect to experts' granularity. 
DeepSeek-MoE \citep{deepseek-moe} elaborates the intuition: 
Consider an MoE system with 16 rank-128 experts. Under top-2 routing, each token has $\binom{16}{2}=120$ ways to select its own expert combination. 
If we break down each rank-128 expert as 4 rank-32 experts (and correspondingly use top-8 routing to keep the same activated parameters), each token now has $\binom{4\times 16}{8}=4.4\text{B}$ routing choices. 

Yet, ``breaking down experts into finer granularity'' may not be the most effective way of increasing routing flexibility.  
There are two potential limitations: 1) in the parameter-efficient fine-tuning (PEFT) scenario, each expert is already of a low rank, making it questionable if finer granularity is desirable;
and 2) with more number of experts comes higher requirement on the router's capability \& the learning algorithm -- it may be challenging to maintain good expert utilization when the number of experts grows.
These lead to our goal: \emph{without} increasing the number of experts, we aim at emulating more routing choices by exploiting the \emph{power of structure}. 
That is, instead of merely addressing the problem of ``which experts to activate'' (like most existing literature), we further ask: ``how should the activated experts be connected''?
To answer it, we first arrange experts into multiple layers. 
Then the router iterates through the layers and constructs a tree of activated experts, through which the input token propagates. 
The key observation is that, the same set of experts can interconnect in different ways to form exponentially many \emph{non-isomorphic} tree structures, each yielding a distinct output. 
We thus formalize such intuition by extending the aforementioned 
routing flexibility into a new metric, \textbf{structural flexibility}, and theoretically quantify its exponential growth enabled by {\smore}'s design.

\paragraph{Proposed work.} We propose \textbf{S}tructural \textbf{M}ixture \textbf{o}f \textbf{R}esidual \textbf{E}xperts (\smorelogo[1.2em]\textbf{\smore}), a novel 
PEFT architecture that improves MoE's model flexibility \& capacity by exploiting experts' structural relationship, while being as parameter-efficient as LoRA. 
We start from hierarchical residual decomposition of expert weights, where low-rank parameters of different orders form a multi-layer inter-connection network. 
We craft the model architecture so that when residuals aggregate and propagate across layers, they 
\begin{enumerate*}
\item remain low-rank to maintain overall efficiency,
\item can generate distinct embeddings for \emph{all} non-isomorphic router-selected sub-trees, which theoretically guarantees high structural flexibility, and
\item can express the standard single-layer MoE model variants, which makes {\smore} a strictly more powerful upgrade. 
\end{enumerate*}
To customize the expert structure for each token, we design a hierarchical router that efficiently and iteratively selects the children residuals when traversing down the selected ancestors. 
{\smore} can be conceptually seen as a novel Graph Neural Network (GNN), where the ``graph'' emerges dynamically from the router's selection, and  the {\smore} layers can simulate the graph isomorphism test \citep{wl, gin} to ensure high structural flexibility. 
Overall, {\smore} achieves the benefits of both LoRA and fine-grained MoE, and addresses their limitations by exploiting experts' structure -- 
{\smore} emulates the capacity of \emph{exponentially more} experts than physically instantiated, while keeping each residual low-rank. 
We extensively evaluate {\smore} on 3 base models (LLaMA 3.2 1B, LLaMA 3 8B and Gemma 2 9B), 7 fine-tuning benchmarks, 3 types of router gates, and across different scales. 
{\smore} consistently and significantly outperform state-of-the-art models and the 1-layer baselines in terms of both accuracy and parameter efficiency,
validating the direction towards better PEFT adapters via structural mixture.

\section{Background and Related Work}\label{sec: related work}
\paragraph{Parameter efficient fine-tuning (PEFT).}
Given a pre-trained model, PEFT 
trains a light-weight adapter whose number of trainable parameters is just a small fraction of the pre-trained weights. 
LoRA \citep{hu2021lora} and its variants \citep{dora, vera} achieve good empirical performance by learning only a low-rank matrix as the adapter, 
where rank controls the efficiency-accuracy tradeoff. 
Despite the high parameter efficiency,
their model capacities are limited.  

\paragraph{Mixture-of-Experts (MoE).}
Mixture-of-Experts designs have been shown to boost LLM's model capacity \citep{deepseek-moe, fedus2022switch, soft-moe} due to their flexibility in conditionally activating different sets of parameters for different input tokens. 
Recent works have tailored MoE for PEFT. 
MixLoRA \citep{mixlora} constructs the adapter as a set of LoRA experts where each token activates its own top-$k$ experts. 
Similarly, MoLE \citep{mole} considers a flat layer of LoRA experts and implements a flexible branch selection scheme. 
To enhance the mixing flexibility,
SMoRA \citep{smora} decomposes LoRA into single-rank fine-grained experts, and MoSLoRA \citep{moslora} integrates a  subspace fusing matrix in the low-rank space. MoV \citep{mov} and MoLORA \citep{mov} proposes MoE variants that mix (IA)\textsuperscript{3} vectors \citep{ia3} or LoRA weights for adapting attention modules. 
HydraLoRA \citep{hydralora} splits LoRA's up-projection matrix into multiple heads, and then performs weighted sum of each head's output by the gating weights. 
In the existing PEFT-MoE models, the expert-selection gates often follow classic designs, such as the noisy top-$k$ \citep{sparse_moe} or the switch-transformer \citep{fedus2022switch} gates. 
With more experts, it is more challenging to ensure all experts are well utilized \citep{sparse-gate-survey}. 
Such limitation to model scale-up can be addressed by experts' structural composition in {\smore}, as we dramatically improve the structural flexibility (see definition in \secref{sec: model capacity}) without adding more experts. 
See Appendix \ref{appendix: related work} for other MoE variants.  

\paragraph{Heterogeneous experts. }
In most MoE models, experts are homogeneous and may have an identical model architecture.
Heterogeneous experts have been recently explored in language modeling~\citep{mod, colt5} and graph learning~\citep{mowst}.
MoD~\citep{mod} and CoLT5~\citep{colt5} consider the combination of light and heavy experts so that tokens through the light path can be processed faster and more cheaply.
Mowst~\citep{mowst} further discovers that mixture of weak and strong experts can enhance MoE's capacity beyond that of a strong expert alone.
However, existing heterogeneous MoE designs consider a \textit{horizontal} stacking of different types of experts, where the weak/light branch operates in parallel to the strong/heavy one.
However, in \smore, the residuals of different orders can also be interpreted as experts of different strength, with the 1st-order residuals being the strongest. 
And we explore a \textit{vertical} stacking design where the higher-order residuals transform and propagate to the lower-order ones. 
The heterogeneous expert design in \smore encodes additional structural information than existing models. 

\subsection{Preliminaries}\label{sec:prelim}
\label{sec: prelim}
We consider one transformer layer and omit the layer index. 
Let $\bm{x}\in\R^{d}$ be the $d$-dimensional token embedding input to the adapter. 
The adapter's output $\bm{x}'$ is added back to the output generated by the pre-trained weights. 
All adapters here can be applied to weights of both FFN and attention modules. 

\paragraph{LoRA formulation.}
The adapter consists of a down-projection matrix $\DOWN\in\R^{d\times r}$ and an up-projection matrix $\UP\in\R^{r\times d}$, with rank $r\ll d$ to achieve parameter efficiency. 
LoRA maps the input token embedding $\bm{x}$ to adapter's output $\bm{x}'$ as follows: $\bm{x}' = \UP\cdot \DOWN\cdot \bm{x}$.

\paragraph{MoE formulation.}
Let $s$ be the number of experts. The router maps each input token to different experts, where $\ROUTEx[i]\in\R$ gives expert $i$'s score for token $\bm{x}$. 
If the router performs top-$k$ sparse gating, then only top-$k$ values of $\ROUTEx[i]$ are kept and the rest are cast to 0. 
Suppose each expert $i$ performs linear transformation via matrix $\bm{W}^i$. The MoE layer performs:
\vspace{-0.1in}
\begin{equation}
\label{eq: moe basic}
    \bm{x}' = \sum_{i=1}^{s} \ROUTEx[i]\cdot \bm{W}^i \cdot \bm{x}
\end{equation}

\section{\smorelogo[1.35em]{\smore}}\label{sec:smore}
\subsection{Low-Rank MoE Variants}
\label{sec: smore baseline}
\paragraph{Mixture of low-rank experts (MoLRE).}
To improve parameter efficiency of \eqnref{eq: moe basic}, we can approximate its $\W^i$ by some low rank $\UP^i\cdot \DOWN^i$ as defined in \secref{sec: prelim} (e.g., we can perform SVD on $\W^i$ and derive $\UP^i\cdot \DOWN^i$ corresponding to the largest singular values). 
We term such a model family as \textit{mixture of low-rank experts (MoLRE)}~\citep{mole, loramoe, mixlora}. 
MoLRE's operation is derived by updating \eqnref{eq: moe basic} as follows: $\bm{x}' = \sum_{i=1}^{s} \ROUTEx[i]\cdot \UP^i\cdot \DOWN^i \cdot \bm{x}$

\paragraph{Mixture of multi-order residues (MoMOR).}
We can generalize MoLRE's low-rank approximation into this form $\W^i \approx \sum_{\ell=0}^{L-1}\UP^i_\ell\cdot\DOWN^i_\ell$, where each $\UP^i_\ell\cdot\DOWN^i_\ell$ has a low rank (so MoLRE corresponds to $L=0$). 
We call $\UP^i_\ell\cdot\DOWN^i_\ell$ as the $\paren{\ell+1}$\textsuperscript{th}-order residual term, and denote its rank as $r_\ell$. 
The sum $\sum_{\ell=0}^{L-1}\UP^i_\ell\cdot\DOWN^i_\ell$ can have a rank up to $\sum_{\ell=0}^{L-1} r_\ell$, which is higher than the individual residuals.

We thus introduce \textit{mixture of multi-order residues (MoMOR)}, an extension to MoLRE.
Let $\Res{\ell} = \set{\UP^1_\ell\cdot\DOWN^1_\ell, \UP^2_\ell\cdot\DOWN^2_\ell, \hdots}$ be the set of order-$\paren{\ell+1}$ residues, MoMOR model performs the following:
\begin{equation}
\label{eq: moe multi-res}
    \bm{x}' = \sum_{\ell=0}^{L-1} \sum_{i=1}^{s_\ell}\ROUTEx[i][\ell]\cdot \UP^i_\ell\cdot\DOWN^i_\ell\cdot \bm{x}
\end{equation}
where the model dynamically selects and combines different orders of residuals via routing.
MoMOR can adaptively distribute computation across different levels of approximation, improving efficiency and expressivity.
Notably, when we set $L=2$ and $\ROUTEx[i][0]$ as a dense gate, the order-1 experts are activated for all tokens. MoMOR becomes a \emph{shared-expert} MoE. This is a design adopted by DeepSeek-v3~\citep{deepseek} and many others~\citep{pr_moe, mixlora}.

\subsection{Structural Mixture}\label{sec: struct_mix}
In the following, ``layer'' refers to a {\smore} layer with a collection of residual experts, rather than a transformer layer. 
Based on MoMOR, we arrange all the residues $\Res{0}, \hdots, \Res{L-1}$ into a $L$-layer structure. 
For each token $\bm{x}$, we activate a sub-structure that interconnects correlated residues in adjacent layers. 
The token propagates along the sub-structure layer by layer. 
Each layer implements a lightweight function to aggregate previous-layer residues.
Extending the standard MoE to multiple layers improves model capacity by drastically increasing the model's \textit{structural flexibility} (\secref{sec: model capacity}). 

\paragraph{Parameters.}
Let $\bm{x}\in\R^{d}$ be the $d$-dimensional token embedding. 
Layer $\ell + 1$ (for $0\leq\ell\leq L-1$) consists of $s_{\ell}$ residual experts. 
Each expert $i$ (with $1 \leq i\leq s_\ell$) consists of a down-projection matrix $\DOWN[\ell][i]\in\R^{r_\ell\times d}$ and an up-projection matrix $\UP[\ell][i]\in\R^{d_{\ell+1}\times r_\ell}$, where $r_\ell$ is the experts' rank and $d_{\ell+1}$ is the output dimension of layer $\ell+1$.
Layer $\ell+1$ also has a learnable $\W[\ell]\in\R^{d_{\ell+1}\times d_\ell}$ that projects the layer-$\ell$ output to the $d_{\ell+1}$-dimensional subspace. 
Thus, $\Res{\ell}$ consists of all $\DOWN[\ell][i]$ and $\UP[\ell][i]$ for $1\leq i\leq s_\ell$.

\paragraph{Propagation.}
\begin{figure*}[t]  
    \centering
    \begin{subfigure}[t]{0.44\textwidth}
        \centering
        \includegraphics[width=\linewidth]{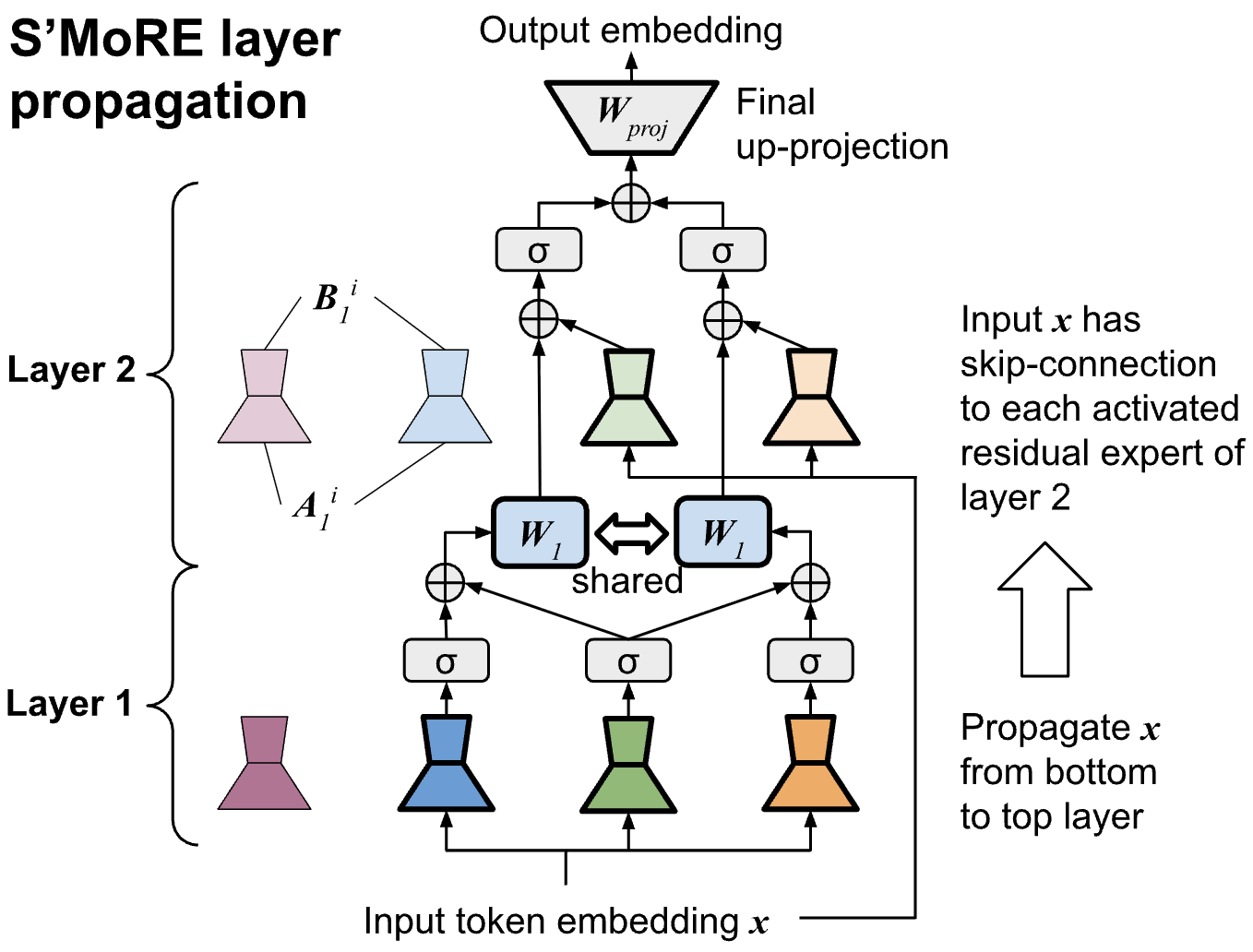}
        \caption{Propagation of residuals across multiple {\smore} layers (see \eqnref{eq: layer aggr l}). Here we consider 2 layers. Layer 1 has 3 activated residuals, where the dark green residual is selected by both the light green and the light orange parents in layer 2. }
        \label{fig:sub1}
    \end{subfigure}
    \hfill
    \begin{subfigure}[t]{0.52\textwidth}
        \centering
        \includegraphics[width=\linewidth]{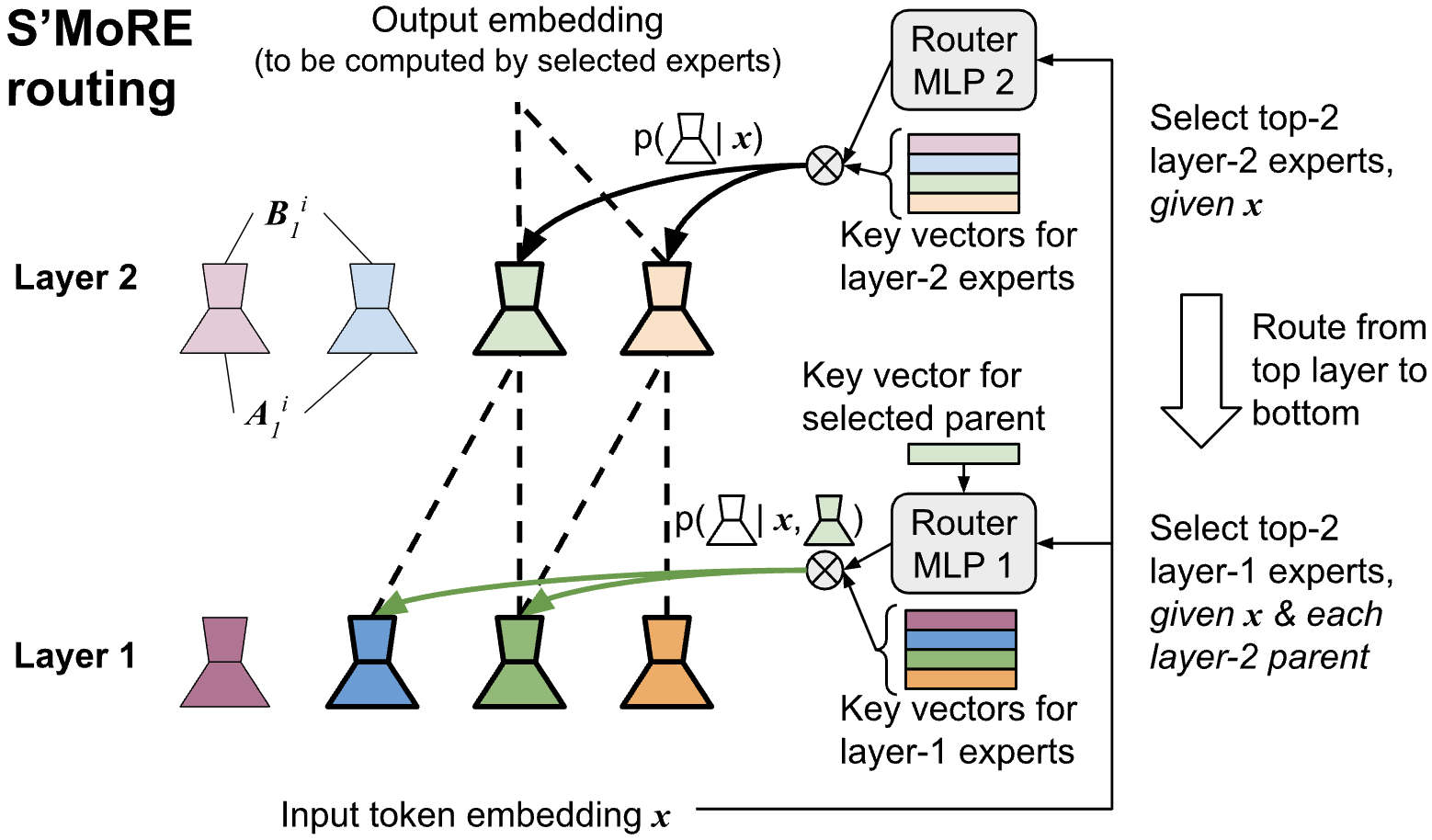}
        \caption{Recursive routing of {\smore} (\secref{sec: router}). The router first selects the layer 2 residuals for token $\bm{x}$. Then it selects the layer 1 children conditioned on the activated layer 2 parent. We use a lightweight MLP to generate the query vector from the token embedding and the parent's key embedding. }
        \label{fig:sub2}
    \end{subfigure}
    \caption{Illustration of the layer propagation and routing process of {\smore}. }
    \label{fig:main}
\vspace{-0.2in}
\end{figure*}

Token $\bm{x}$ propagates in the $L$-layer structure in two phases.
In the \underline{\emph{routing} phase}, the router activates the best-matching experts \emph{top-down} (from layer $L$ to $1$). 
At layer $L$, the router selects experts from $\Res{L-1}$ using standard gates (e.g.,~\citet{fedus2022switch}). 
At an intermediate layer $\ell<L$, the router computes the score to activate an expert in $\Res{\ell-1}$, \emph{conditioned} on the already activated ancestors in layers $\ell'>\ell$. 
This ensures the selected children are connected to their activated parents. 
Different from the traditional routers, the {\smore} router customizes a depth-$L$ ``residual tree'' for each token. 
See \secref{sec: router} for router architecture and \secref{sec: model capacity} for structural flexibility of the tree-based routing. 

In the \underline{\emph{aggregation} phase}, the token propagates along the activated residual tree \emph{bottom-up} (from layer $1$ to $L$). 
Layer $\ell+1$ aggregates the information from the activated children experts in $\Res{\ell}$, and generates output embedding for the parent expert in $\Res{\ell+1}$. 
For each parent expert $i$, define $\N[\ell][i]$ as the set containing the indices of $i$'s children experts\footnote{We abuse notation here for ease of description. The nuance is that the same expert can be activated multiple times by different parents / ancestors. So $i$ should refer to the index of a node in the activated tree, rather than the index of just an expert. Similarly, superscript $n$ of $\bm{x}_\ell^n$ should be updated to $i\rightarrow n$ as a unique identifier (otherwise it creates ambiguity when expert $n$ is a child of multiple parents). See \eqnref{eq: updated aggr} in Appendix \secref{thm: smore flex}.}. 
Layer $\ell+1$ operates as follows:
\begin{equation}\label{eq: layer aggr l}
    \x[\ell+1][i] = \sum_{n\in\N[\ell][i]}\wExp[\ell][i,n]\cdot\sigma\paren{\UP[\ell][n]\cdot\DOWN[\ell][n]\cdot \x + \W[\ell]\cdot \x[\ell][n]}
\end{equation}

where $\sigma\paren{\cdot}$ is a non-linear function which can be just an activation (e.g., ReLU~\citep{agarap2018deep}).
The scalar $\wExp[\ell][i,n]$ is the router-generated score, elaborated in \secref{sec: router}. 
Inputs to \eqnref{eq: layer aggr l} consist of two parts: 1) Raw token embedding $\bm{x}$, which acts as skip connection to residuals $\UP[\ell][n]\cdot\DOWN[\ell][n]$ of various orders; and 2) $\x[\ell][n]$ output from the previous layer, which enables \emph{deep} interaction among multi-order residuals given non-linear $\sigma\paren{\cdot}$ (compared to the shallow aggregation in \eqnref{eq: moe multi-res}). 
For $\ell=0$, input $\x[0][n]$ does not exist.
To simplify notation, we define $d_0\defeq 0$, making $\x[0][i]\in \R^{0}$ and $\W[0]\in\R^{d_1\times 0}$ as an empty vector / matrix.
Then \eqnref{eq: layer aggr l} applies to all layers $0\leq \ell \leq L-1$.
The last layer $L$ has a single output node (i.e., $s_L=1$) generated by aggregating information from the entire residual tree. 
Define $\x[L] \defeq \x[L][0]$. 

\paragraph{Dimensionality $d_\ell$.}
We should set the output $d_\ell$ to 
1) avoid information loss in the aggregation process, and
2) keep the overall $L$-layer propagation efficient. 
A na\"ive choice following LoRA is $d_{\ell+1} = d\gg r_\ell$ (e.g., $d=4096$, $r_\ell=16$), 
which makes multiplication with $\W[\ell]$ prohibitively expensive. 
To reduce cost, we should find the smallest $d_{\ell+1}$ that preserves the same amount of information as the vanilla setting $d_{\ell+1} = d$.
The problem is equivalent to finding the maximum dimension of the subspace that $\x[\ell+1][i]$ (\eqnref{eq: layer aggr l}) can span for any $\N[\ell][i]$, $\UP[\ell][n]$, $\DOWN[\ell][n]$ and activated $i$. 
To simplify discussion, ignore activation $\sigma\paren{\cdot}$:
    1) For any $\x$, output $\UP[\ell][n]\cdot \DOWN[\ell][n]\cdot \x$ maximally spans a $d'$-dimensional subspace of the original $\R^d$, where $d'= \min\set{d_{\ell+1}, r_\ell}$;
    2) There are $s_\ell$ possible $n$, leading to $s_\ell$ different $d'$-dimensional subspaces. 
    When mutually orthogonal, they maximally span $\min\set{d_{\ell+1}, s_\ell\cdot r_\ell}$ dimensions;
    3) $\W[\ell]\cdot \x[\ell][n]$ can span another subspace of dimension $\min\set{d_{\ell+1}, d_\ell}$ defined by $\W[\ell]$ (independent of $n$). 
    So $\UP[\ell][n]\cdot \DOWN[\ell][n]\cdot \x + \W[\ell]\cdot \x[\ell][n]$ maximally spans $d''= \min\set{d_{\ell+1}, d_\ell+s_\ell\cdot r_\ell}$ dimensions;
    4) Since a subspace is closed under linear combinations, $\sum_{n\in\N[\ell][i]}\wExp[\ell][i,n]\paren{\UP[\ell][n]\cdot \DOWN[\ell][n]\cdot \x +\W[\ell]\cdot \x[\ell][n]}$ remains in the $d''$-dimensional subspace, regardless of $\N[\ell][i]$.
For the vanilla case $d_{\ell+1}=d$ with large enough $d$, we have $d''=\min\set{d_{\ell+1}, d_\ell+s_\ell\cdot r_\ell}=d_\ell + s_\ell\cdot r_\ell$.
Thus, the minimum $d_{\ell+1}$ is $d''$:
\begin{equation}
    d_{\ell+1} = d_{\ell} + s_{\ell} \cdot r_{\ell}\quad\quad\Rightarrow\quad d_\ell = \sum_{i=0}^{\ell-1} s_i\cdot r_i,~~\text{where } d_0\defeq 0\text{ and }\ell\in [0,L-1]\label{eq: d final}
\end{equation}

\paragraph{Final projection.}
After the last layer $L$, we map the $d_L$-dimensional output $\x[L]$ to the final output dimension $d_\text{out}$ (i.e., $d_\text{out}$ is the dimensionality of $\x'$ in \eqnref{eq: moe basic} and \eqnref{eq: moe multi-res}).
We thus have a projection matrix $\bm{W}_\text{proj}\in\R^{d_L\times d}$ that simply performs $\x' = \bm{W}_\text{proj}\cdot \x[L]$. 

\subsection{Hierarchical Routing}
\label{sec: router}
\figref{fig:sub2} illustrates the top-down routing. 
We start from layer $L$. The router computes $p\paren{i_{L-1} \mid \x}$, the probability to activate an expert $i_{L-1}$ in $\Res{L-1}$ given token $\x$.
The top-$f_{L-1}$ experts with the highest $p\paren{i_{L-1} \mid \x}$ are selected.
Next, for each selected expert $i_{L-1}$, we compute $p\paren{i_{L-2} \mid i_{L-1}, \x}$, which is the conditional probability to activate $i_{L-2}$ in $\Res{L-2}$ given its activated parent $i_{L-1}$ and $\x$.
Each activated $i_{L-1}$ further activates $f_{L-2}$ children with the highest $p\paren{i_{L-2} \mid i_{L-1}, \x}$.
Generally, the router computes the conditional probability $p\paren{i_{\ell-1} \mid i_{L-1},\hdots i_\ell, \x}$, with $i_{L-1},\hdots i_\ell$ being all the activated ancestors of the candidate $i_{\ell-1}$.
All activated experts form a depth-$L$ tree.
Each depth-$\ell$ node fans out to $f_{L-\ell-1}$ children experts (the activated layer-$L$ experts are the depth-$1$ tree nodes).

Let $f_{\ell}$ be the fanout factor of each parent expert, $F_\ell$ be the total number of experts selected from $\Res{\ell}$ (i.e., $F_\ell$ is the total number of depth-$\paren{L-\ell}$ experts in the activated tree). 
The same expert can be selected multiple times by ancestors on different paths --
It is possible that $F_\ell > s_\ell$.
We derive $F_\ell$ as:
\vspace{-0.05in}
\begin{equation}
\label{eq: all fanout}
    F_{\ell} = \prod_{i=\ell}^{L-1} f_i
\end{equation}
\paragraph{Router architecture.}
For each expert $i$ in $\Res{\ell}$, we instantiate a learnable $m$-dimensional \emph{key} vector $\bm{k}_{\ell}^i\in\R^m$. 
For the whole candidate pool $\Res{\ell}$, we instantiate a neural network, $\RMLP{\ell}{\cdot}$, to generate an $m$-dimensional \emph{query} vector based on $\x$ and the ancestors.
The routing probability over $\Res{\ell}$ is computed by the normalized key-query dot product. 
For a path of activated ancestors, ``expert $i'$ in $\Res{\ell+1}$, ..., expert $i^{\prime\cdots\prime}$ in $\Res{L-1}$'', the router generates the query vector $\bm{q}$ and the router score $\wExp[\ell][i]$ as follows, where $\func[concat]{\cdot}$ performs vector concatenation and $\func[softmax]{\cdot}$ normalizes over $\Res{\ell}$.
\begin{align}\label{eq: router}
    \bm{q} &=
    \RMLP{\ell}{
    \func[concat]{\x, \bm{k}_{\ell+1}^{i'}, \cdots, \bm{k}_{L-1}^{i^{\prime\cdots\prime}}}}\\
    \wExp[\ell][i] &= \func[softmax]{\langle \bm{k}_{\ell}^i, \bm{q}\rangle}
\end{align}

\paragraph{Computation optimization.}
\eqnref{eq: router} can be computationally expensive when all $\RMLP{\ell}{\cdot}$ need to process the high-dimensional $\x$.
To reduce computation, we first project the $d$-dimensional $\x$ to a $d_\text{down}$-dimensional $\x_\text{down}$ (e.g., $d=4096$, $d_\text{down}=24$), and then replace $\x$ with $\x_\text{down}$ in \eqnref{eq: router}.
The dimension of the input to $\RMLP{\ell}{\cdot}$ then becomes $d_\text{down} + \paren{L-\ell-1}\cdot m$. 

\paragraph{Gating types. }
Our router and layer designs are compatible with various types of gates. 
In our experiments (\secref{sec: exp}), we have evaluated:
\begin{enumerate*}
    \item \emph{Dense} gate~\citep{hydralora}, which activates all children experts ($f_\ell=s_\ell$);
    \item \emph{Sparse} noisy top-$k$ gate~\citep{sparse_moe};
    \item \emph{Sparse} switch gate~\citep{fedus2022switch}.
\end{enumerate*}
The two sparse gates only activate a subset of the children experts ($f_\ell<s_\ell$) by the top routing scores $\alpha$. 
To avoid expert under-utilization and ensure all experts see sufficient amount of tokens during training, we implement an auxiliary load-balance loss according to the original papers~\citep{sparse_moe, fedus2022switch}. 
See Appendix \ref{appendix: gates} for more algorithmic details. 

\subsection{Parameter \& Computation Efficiency}
\label{sec: complexity}
Although {\smore} introduces structural learning modules, our design ensures 
\textbf{similar efficiency to the vanilla LoRA} (w.r.t. both computation and trainable parameters) under the same total rank.

\paragraph{Parameter efficiency. }
Each {\smore} layer $\ell+1$ consists of the following trainable parameters: $\UP[\ell][n]$, $\DOWN[\ell][n]$ and $\W[\ell]$. 
The total trainable parameters equals:
\begin{equation}\label{eq: param layer l}
    P_{\ell+1} = s_\ell\cdot \paren{d\cdot r_\ell + r_\ell\cdot d_{\ell+1}} + d_\ell\cdot d_{\ell+1}
    = s_\ell\cdot d\cdot r_\ell + d_{\ell+1}\cdot\paren{s_\ell\cdot r_\ell + d_\ell}
    \overset{\text{(a)}}{=} s_\ell\cdot d\cdot r_\ell + d_{\ell+1}^2
\end{equation}
where the last step ``(a)'' is according to \eqnref{eq: d final}. 
The final projection matrix (end of \secref{sec: struct_mix}) requires $P_\text{proj} = d\cdot d_L$ parameters. 
So the total number of parameters for all {\smore} layers equals:
\begin{equation}\label{eq: total param}
    P_\text{proj} +\sum_{\ell=1}^L P_\ell = d\cdot d_L + d\cdot \paren{\sum_{\ell=0}^{L-1}s_\ell\cdot r_\ell} + \Delta
    \overset{\text{(b)}}{=} 2\cdot d\cdot d_L + \Delta
    \overset{\text{(c)}}{\approx} 2\cdot d\cdot d_L
\end{equation}
where $\Delta = \sum_{\ell=1}^{L} d_\ell^2$.
Step ``(b)'' is by \eqnref{eq: d final}; $\Delta$ is the overhead due to multi-layer propagation. 
Since $d_1< \hdots < d_L \ll d$ (e.g., $d_L=64$, $d=4096$), 
we have $\Delta\ll 2\cdot d\cdot d_L$. This justifies step ``(c)''. 
In Table \ref{tab: param cost}, we empirically validated the small overhead $\Delta$. 
With $f_\ell=2$, $s_\ell=4$, and $r_\ell=8$ or $16$ for all layers $\ell$ (consistent with the \secref{sec: exp} experiments), $\Delta$ is no more than \textbf{2\%} for 2-layer \smore. 

\begin{wraptable}{r}{0.38\textwidth}
\vspace{-0.2in}
\caption{Overhead $\Delta$ compared with the main computation cost $2\cdot d\cdot d_L$}
\vspace{-0.05in}
\resizebox{0.38\textwidth}{!}{
\centering
\begin{tabular}{cccccc}
    \toprule
    $r_\ell$ & $L$ & $d_L$ & $2\cdot d\cdot d_L$ & $\Delta$ & Overhead ratio\\
    \hline
    \hline
    \multirow{3}{*}{8} & 2 & 64 & 0.5M & 0.005M & 1.0\%\\
    & 3 & 96 & 0.8M & 0.014M & 1.8\% \\
    & 4 & 128 & 1.0M & 0.031M & 2.9\% \\
    \hline
    \multirow{3}{*}{16} & 2 & 128 & 1.0M & 0.020M & 2.0\%\\
    & 3 & 192 & 1.6M & 0.057M & 3.6\% \\
    & 4 & 256 & 2.1M & 0.123M & 5.9\% \\
    \bottomrule
  \end{tabular}
}
\label{tab: param cost}
\vspace{-.63cm}
\end{wraptable}
The router's trainable parameters come from:
1) down-projection for $\bm{x}_\text{down}$, which requires $d\cdot d_\text{down}$ parameters,  
2) per-layer ``query'' MLP. 
By \secref{sec: router}, the MLP's input dimension is $d_\text{down} + \paren{L-\ell}\cdot m$, where $m\ll d$ is the dimension of the ``key'' vectors. 
In practice, we set the MLP hidden dimension as $m$. 
Since $m$ and $d_\text{down}$ are both very small, the router's parameter count is practically negligible. 

In total, {\smore} approximately has $2\cdot d\cdot d_L$ parameters -- \emph{the same as the parameter count for a vanilla LoRA} with rank $d_L$ (the $2$ factor is due to LoRA's down- and up-project matrices $\bm{A}$ and $\bm{B}$). 

\paragraph{Computation cost. }
Following similar steps, we can derive the overhead in computation. 
The computation cost of the baseline LoRA is $2\cdot d\cdot d_L$. 
The overhead introduced by {\smore} is $\Delta'\leq \sum_{\ell=0}^{L-1} F_\ell\cdot d_{\ell+1}\cdot \paren{d_{\ell} +r_\ell}$, which is again \emph{neglible} in practice. 
See Appendix \ref{appendix: comp cost} for details.

\subsection{Model Capacity}\label{sec: model capacity}
We theoretically show {\smore} enhances model capacity compared with baselines (see Appendix \ref{appendix: capacity proof} for proofs). 
First, we show that the two low-rank MoE variants in \secref{sec: smore baseline} are special cases of {\smore}. 
\begin{proposition}
\label{prop: eq molre}
{\smore} can express MoLRE, when $L=1$ and $\sigma\paren{\cdot}$ is the identity mapping.
\end{proposition}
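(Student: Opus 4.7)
The plan is to exhibit, for any MoLRE instance with $s$ rank-$r$ experts and router $\ROUTE$, a \smore configuration with $L=1$ whose output agrees pointwise. With $L=1$, \eqnref{eq: layer aggr l} at $\ell=0$ degenerates: the convention $d_0\defeq 0$ makes $\W[0]\cdot\x[0][n]$ an empty product that vanishes, and identity $\sigma(\cdot)$ collapses the layer to
$$
\x[1][0] = \sum_{n\in\N[0][0]}\wExp[0][0,n]\cdot \UP[0][n]\cdot\DOWN[0][n]\cdot \x,
$$
which is then passed through the final projection $\x' = \bm{W}_\text{proj}\cdot \x[1][0]$. The last-layer convention $s_L=1$ means layer $1$ has a single aggregation node whose children $\N[0][0]$ are exactly the experts selected from $\Res{0}$, so the sum ranges over all (or a top-$k$ subset of) the $s_0$ experts.

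Next I would match dimensions and residuals. Set $s_0 := s$ and $r_0 := r$, which by \eqnref{eq: d final} forces $d_1 = sr$. For each $n\in\set{1,\ldots,s}$, take $\DOWN[0][n] := \DOWN^n$ and build $\UP[0][n]\in\R^{sr\times r}$ as a block selector: rows $(n-1)r+1,\ldots,nr$ form the identity $\bm{I}_r$ and all other rows are zero. Concatenate MoLRE's up-projections column-wise into $\bm{W}_\text{proj} := \left[\UP^1\mid\UP^2\mid\cdots\mid\UP^s\right]\in\R^{d\times sr}$. With this block structure, each summand $\UP[0][n]\DOWN[0][n]\x$ is a length-$sr$ vector whose $n$-th length-$r$ block equals $\DOWN^n\x$ and whose remaining blocks are zero; left-multiplying by $\bm{W}_\text{proj}$ extracts the $n$-th block and hits it with $\UP^n$. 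Assembling,
$$
\x' = \bm{W}_\text{proj}\cdot\x[1][0] = \sum_{n}\wExp[0][0,n]\cdot\UP^n\cdot\DOWN^n\cdot\x,
$$
which agrees with MoLRE's formula once the gates match.

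For the routing, with $L=1$ the hierarchical query in \eqnref{eq: router} has an empty ancestor list, so $\bm{q} = \RMLP{0}{\x}$ and $\wExp[0][0,n] = \func[softmax]{\langle \bm{k}_0^n, \bm{q}\rangle}$, with the same top-$f_0$ masking available as the gate types in \secref{sec: router}. I would then pick the key vectors $\bm{k}_0^n$ and the weights of $\RMLP{0}{\cdot}$ so that the \smore score vector equals $\paren{\ROUTEx[1],\ldots,\ROUTEx[s]}$: for the standard softmax-top-$k$ gates the match is by direct parameter copy, and for a generic $\ROUTE$ it follows from universal approximation of the query MLP.

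The only subtlety I anticipate is the router identification, since we cannot literally copy an arbitrary MoLRE gate and must appeal to expressivity of the key--query MLP. The weight side of the argument is a mechanical block-diagonal identification, and the vanishing of $\W[0]\cdot\x[0][n]$ under the $d_0\defeq 0$ convention is what makes the $L=1$ reduction clean.
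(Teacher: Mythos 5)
Your construction is exactly the paper's: identify $\DOWN[0][n]$ with MoLRE's down-projections, make $\UP[0][n]$ the block-selector with $\bm{I}_{r_0}$ in the $n$-th block, concatenate MoLRE's up-projections into $\bm{W}_\text{proj}$ so that $\bm{W}_\text{proj}\cdot\UP[0][n]=\bar{\bm{B}}^n$, and let the router reproduce the MoLRE gate. The proof is correct and matches the paper's argument, with your router discussion being slightly more explicit than the paper's one-line "let the router implement $\ROUTEx[n]$".
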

\begin{proposition}
\label{prop: eq momor}
{\smore} can express MoMOR, when setting $\sigma\paren{\cdot}$ as the identity mapping. 
\end{proposition}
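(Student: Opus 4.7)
The plan is to exhibit an explicit SMoRE configuration that reproduces MoMOR's output in \eqnref{eq: moe multi-res} exactly whenever $\sigma$ is the identity. I take SMoRE's $L$, $s_\ell$, $r_\ell$ equal to those of the given MoMOR instance and use the layered propagation as a \emph{block-structured scratchpad}: by \eqnref{eq: d final}, $\x[\ell+1][i]\in\R^{d_{\ell+1}}$ splits naturally into blocks indexed by pairs $(\ell',n)$ with $0\leq\ell'\leq\ell$ and $1\leq n\leq s_{\ell'}$, each of size $r_{\ell'}$. The design goal is for block $(\ell,n)$ of the final $\x[L]$ to hold exactly $\ROUTEx[n][\ell]\cdot\DOWN[\ell][n]\x$ (using MoMOR's down-projection), after which a suitable $\bm{W}_\text{proj}$ that reads off each block with the matching MoMOR up-projection recovers the sum in \eqnref{eq: moe multi-res}.

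Concretely, I would set SMoRE's $\DOWN[\ell][n]$ equal to MoMOR's down-projection; choose $\UP[\ell][n]\in\R^{d_{\ell+1}\times r_\ell}$ as the sparse one-hot selector that writes its $r_\ell$-dim input into block $(\ell,n)$ and zero elsewhere; take the inter-layer matrix $\W[\ell]\in\R^{d_{\ell+1}\times d_\ell}$ to be the canonical embedding of $\R^{d_\ell}$ into the first $d_\ell$ coordinates of $\R^{d_{\ell+1}}$ (so lower-layer blocks pass through unchanged while the freshly allocated layer-$\ell$ region stays at zero); use dense routing with $\N[\ell][i]=\set{1,\hdots,s_\ell}$ for every parent $i$; and assemble $\bm{W}_\text{proj}\in\R^{d\times d_L}$ by horizontally concatenating MoMOR's up-projections, one $d\times r_\ell$ slab per scratchpad block. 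With $\sigma$ the identity and gate scores $\wExp[\ell][i,n]$ chosen to depend only on $n$ (not on the parent index $i$), an easy induction shows that $\x[\ell+1][i]$ does not depend on $i$; denoting it $\x[\ell+1]$, \eqnref{eq: layer aggr l} decouples cleanly into (a) a write of $\wExp[\ell][i,n]\cdot\DOWN[\ell][n]\x$ into block $(\ell,n)$ via the $\UP[\ell][n]\DOWN[\ell][n]\x$ term, and (b) a uniform rescaling of all previously-written blocks by the factor $c_\ell\defeq\sum_{n}\wExp[\ell][i,n]$ coming from the $\W[\ell]\x[\ell]$ term accumulated over the $s_\ell$ activated children.

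Propagating all the way up to $\x[L]$, block $(\ell,n)$ therefore accrues the coefficient $\wExp[\ell][i,n]\cdot\prod_{\ell'>\ell}c_{\ell'}$. Setting $\wExp[\ell][i,n]\defeq \ROUTEx[n][\ell]\big/\prod_{\ell'>\ell}c_{\ell'}$ --- solvable by a downward recursion on $\ell$ starting from $c_{L-1}=\sum_{n}\ROUTEx[n][L-1]$ --- collapses every coefficient to exactly $\ROUTEx[n][\ell]$, yielding $\bm{W}_\text{proj}\x[L]=\sum_{\ell,n}\ROUTEx[n][\ell]\cdot\UP[\ell][n]\DOWN[\ell][n]\x$ and matching \eqnref{eq: moe multi-res}. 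The main subtlety I expect is the nonvanishing of the denominators $c_{\ell'}$ in this recursion; this is automatic whenever MoMOR's per-layer gate scores are not all identically zero, and is trivial when the MoMOR gates are a softmax, in which case $c_\ell=1$ for every $\ell$ and one simply takes $\wExp[\ell][i,n]=\ROUTEx[n][\ell]$. A degenerate all-zero MoMOR layer can be handled by zeroing the corresponding SMoRE gates and omitting that block of the scratchpad, so the reduction goes through in full generality.
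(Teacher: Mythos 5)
Your construction is essentially the paper's own proof: identical down-projections, one-hot block-selector up-projections, a pass-through $\W[\ell]$ that embeds the accumulated state into the enlarged space of \eqnref{eq: d final}, and a final $\bm{W}_\text{proj}$ concatenating MoMOR's up-projections, so that the collapsed single-layer form matches \eqnref{eq: moe multi-res}. The only difference is that you track the gate coefficients more explicitly (the per-layer rescaling $c_\ell$ and the downward recursion solving for $\wExp[\ell][i,n]$), whereas the paper absorbs this into aggregated path coefficients $\hat{\alpha}_\ell^i$ and simply posits a router realizing them; both are correct.
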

For any MoLRE (or MoMOR) model, we can find a corresponding {\smore} that generates identical output as MoLRE (or MoMOR) for any input $\x$.
Without $\sigma$, we can collapse a multi-layer {\smore} into a single layer equivalent, where the dimensionality set by \eqnref{eq: d final} ensures the same rank as MoMOR. 
Can {\smore} be theoretically better than MoLRE and MoMOR, if we go beyond the constraints of Propositions~\ref{prop: eq molre} and~\ref{prop: eq momor} by setting $L>1$ and $\sigma$ as non-linear mapping?
To answer it, we analyze an MoE model's expressive power by quantifying the \textbf{structural flexibility}. 

\paragraph{Structural flexibility.}\label{para:struct_flex}
Let $\Theta$ be the collection of all experts' parameters ($\UP[\ell][i]$, $\DOWN[\ell][i]$ and $\W[\ell]$ for $0\leq \ell\leq L-1$ and all $i$).
Given $\Theta$, when a token $\x$ comes, different routers may activate different residual experts, and thus generate different output embedding $\x_L$. 
Therefore, we define $\func[dist]{\x; \Theta}$ as the number of \emph{distinct} $\x_L$.
The larger $\func[dist]{\x; \Theta}$ can be, the more ``structurally flexible'' the model architecture is. 
Our focus here is on the multi-layer structure formed by the residual experts, rather than the router network (thus, we assume an ideal router for the following Theorems).

Next we prove {\smore}'s higher model capacity by quantifying structural flexibility. 
In the following, we treat $\wExp[\ell][n]$ as binary mask (1 for selected experts, and 0 otherwise) when generating $\x_L$. 

\begin{theorem}
\label{thm: momor flex}
The structural flexibility of MoMOR is upper-bounded by $\Gamma_\text{MoMOR}=\max_{\bm{x}, \Theta}\func[dist]{\x;\Theta} \leq \binom{s_{L-1}}{f_{L-1}}\cdot \prod_{\ell=0}^{L-2}\paren{\sum_{i=f_\ell}^{\min\set{F_\ell, s_\ell}}\binom{s_\ell}{i}}$. 
\end{theorem}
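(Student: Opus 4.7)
The plan is to upper-bound $\func[dist]{\x;\Theta}$ by counting admissible routing configurations, since each configuration determines a unique MoMOR output through \eqnref{eq: moe multi-res}. Treating each $\wExp[\ell][n]$ as a binary mask (per the assumption preceding the theorem), the output rewrites as $\x' = \sum_{\ell=0}^{L-1}\sum_{i\in S_\ell}\UP[\ell][i]\cdot\DOWN[\ell][i]\cdot\x$, where $S_\ell\subseteq\set{1,\ldots,s_\ell}$ collects the experts activated at layer $\ell+1$. Hence $\func[dist]{\x;\Theta}$ is bounded above by the number of admissible tuples $(S_0,\ldots,S_{L-1})$, since distinct outputs must come from distinct routing choices (though many choices may collapse to the same output).

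Next I would characterize the admissible sizes of each $S_\ell$ under MoMOR's routing budget, calibrated to match {\smore} for a fair comparison. At the topmost layer ($\ell=L-1$), the router picks exactly $f_{L-1}$ experts since there are no ancestors and thus no duplication, giving $\binom{s_{L-1}}{f_{L-1}}$ possibilities. At an intermediate layer ($0\leq \ell\leq L-2$), {\smore}'s tree produces $F_\ell$ (parent, child) pairs from $F_{\ell+1}$ parents, each picking $f_\ell$ distinct children from $\Res{\ell}$. Flattening this multiset into $S_\ell$, its cardinality $|S_\ell|$ ranges from $f_\ell$ (when all parents overlap on the same children) to $\min\set{F_\ell, s_\ell}$ (disjoint selections, capped by the pool size). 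Summing binomials over this range yields at most $\sum_{i=f_\ell}^{\min\set{F_\ell, s_\ell}}\binom{s_\ell}{i}$ admissible subsets at layer $\ell+1$.

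Finally, because MoMOR's layers enter \eqnref{eq: moe multi-res} additively and the routers at different $\ell$ act independently, multiplying the per-layer counts gives the claimed bound $\Gamma_\text{MoMOR}$. The main delicate point is the lower bound $|S_\ell|\geq f_\ell$ at intermediate layers, which relies on viewing MoMOR as the flat analogue of {\smore}'s tree so that the per-parent fanout transfers as a minimum distinct-expert count; the many-to-one reduction in the first step and the upper bound $\min\set{F_\ell, s_\ell}$ are routine. A secondary check is that independent per-layer counting does not overcount due to cross-layer coupling---this follows immediately from the additive (layer-decoupled) structure of \eqnref{eq: moe multi-res}.
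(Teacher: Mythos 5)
Your proposal is correct and follows essentially the same route as the paper's proof: treat the gate scores as binary masks, observe that the flat additive form of \eqnref{eq: moe multi-res} means the output depends only on the tuple of activated subsets $(S_0,\ldots,S_{L-1})$, bound $|S_\ell|$ between $f_\ell$ and $\min\set{F_\ell, s_\ell}$ using the tree-routing fanout, and multiply the per-layer counts, noting that distinct selections need not give distinct outputs (hence only an upper bound). The paper makes the same points, including the explicit remark that the count of selections over-approximates the count of outputs.
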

\begin{theorem}
\label{thm: smore flex}
Setting $\sigma\paren{\cdot}$ as an MLP,
there exists some $\Theta'$ such that the structural flexibility of {\smore} is: $\Gamma_\text{\smore}=\min_{\x}\func[dist]{\x;\Theta'}= \prod_{\ell=0}^{L-1}\binom{s_\ell}{f_\ell}^{F_{\ell+1}}$, where we define $F_L\defeq 1$. 
\end{theorem}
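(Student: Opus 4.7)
The plan is to prove the theorem in two phases: a counting argument that identifies $\prod_{\ell=0}^{L-1}\binom{s_\ell}{f_\ell}^{F_{\ell+1}}$ as the total number of distinct router-selected trees, followed by an explicit construction of $\Theta'$ (together with a choice of the MLP $\sigma$) that realizes this count as a lower bound on $\func[dist]{\x;\Theta'}$ for every $\x$ in some generic set. Since the upper bound holds automatically (each non-isomorphic tree is the only possible source of a distinct $\x_L$), the work lies in the lower bound, which effectively requires the layer update in \eqnref{eq: layer aggr l} to act as an \emph{injective} function on (tree) structures, in the spirit of the GIN / Weisfeiler--Lehman analysis.

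First I would count the trees. Using the hierarchical routing of \secref{sec: router}, at the top layer there are $\binom{s_{L-1}}{f_{L-1}}$ choices of the $f_{L-1}$ activated experts; and conditioned on any fixed set of ancestors, each of the $F_{\ell+1}$ depth-$(L-\ell-1)$ nodes independently picks $f_\ell$ children from $\Res{\ell}$, contributing a factor $\binom{s_\ell}{f_\ell}^{F_{\ell+1}}$. Multiplying over $\ell$ yields the claimed count. Note that because the same physical residual may be reused under different ancestors, we must track tree \emph{positions} rather than raw expert indices; this is exactly the bookkeeping footnoted after \eqnref{eq: layer aggr l}, and the updated aggregation rule (\eqnref{eq: updated aggr} in the appendix) keeps distinct tree copies distinguishable.

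Next I would construct $\Theta'$ layer by layer. Pick the $\DOWN[\ell][n], \UP[\ell][n]$ generically so that the vectors $\{\UP[\ell][n]\DOWN[\ell][n]\x\}_n$ are linearly independent for any fixed $\x$ outside a measure-zero set (a standard Zariski-density argument), and choose the projection $\W[\ell]$ to be rank-maximal as suggested by \eqnref{eq: d final}, which by construction guarantees that the concatenation $\UP[\ell][n]\DOWN[\ell][n]\x + \W[\ell]\x[\ell][n]$ encodes both the identity $n$ of the current child and the full embedding $\x[\ell][n]$ of the subtree below it without collision. Then I invoke the Deep Sets / GIN universality: with $\sigma$ an MLP of sufficient width/depth, the sum $\sum_{n\in\N[\ell][i]}\wExp[\ell][i,n]\sigma(\cdot)$ can realize an injective function on multisets of (child-id, child-subtree) pairs --- this is the step that propagates injectivity from layer $\ell$ to layer $\ell+1$. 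Induction on $\ell$ from the base case $\x[0][n]\in\R^0$ (where distinct leaf identities trivially give distinct embeddings via the generic $\UP[0][n]\DOWN[0][n]\x$) then yields distinct $\x[L]$ for all distinct activated trees, matching the counting upper bound.

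The main obstacle is the injectivity step: the layer aggregator must distinguish \emph{every} pair of distinct multisets of child descriptors over the infinite domain $\R^{d_\ell}$, which is not covered by the finite-domain version of GIN and requires the Deep Sets construction (or a polynomial-feature trick analogous to the power-sum argument in Xu~et~al.). Care is also needed to ensure the property holds uniformly over $\x$ in the ``min over $\x$'' sense: this reduces to showing that the bad set of $\x$ on which some two trees collide is a finite union of algebraic varieties of positive codimension, hence measure zero, so the minimum in the theorem statement is attained on the complement and equals the counting bound. Combining the counting upper bound with this existentially realized lower bound gives the equality $\Gamma_{\text{\smore}}=\prod_{\ell=0}^{L-1}\binom{s_\ell}{f_\ell}^{F_{\ell+1}}$.
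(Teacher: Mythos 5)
Your Stage-1 counting argument matches the paper's: both identify $\Gamma_\text{\smore}$ with the number of non-isomorphic depth-$L$ routing trees, with the same bookkeeping of tree positions versus expert indices, and both treat the upper bound as automatic from permutation invariance. Your Stage-2 construction, however, diverges from the paper's and contains two genuine gaps.

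First, the theorem asserts $\min_{\x}\func[dist]{\x;\Theta'}$ equals the counting bound, i.e.\ the construction must work for \emph{every} $\x$, not almost every $\x$. Your plan chooses $\DOWN[\ell][n],\UP[\ell][n]$ generically and then argues the set of bad $\x$ has measure zero; but the minimum is taken over all of $\R^d$ including that bad set, so an almost-everywhere guarantee does not yield the stated equality. Concretely, in your bias-free construction the input $\x=\bm{0}$ kills every term $\UP[\ell][n]\DOWN[\ell][n]\x$, the base-case embeddings all coincide, and $\func[dist]{\bm{0};\Theta'}$ collapses far below the bound. The paper sidesteps this entirely: it sets $\UP[\ell][n]=\DOWN[\ell][n]=\bm{0}$ and adds a per-expert bias $\bm{b}_\ell^n$ encoding the expert index, so the output depends only on the tree structure and the guarantee holds uniformly in $\x$ for free.

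Second, the injectivity step you correctly flag as ``the main obstacle'' is left unresolved. With input-dependent child descriptors ranging over $\R^{d_\ell}$, the union over all $\x$ of possible features is uncountable, and the sum-decomposition lemma you want (Lemma 5 of the GIN paper) is stated only for countable feature spaces; a single fixed MLP $\sigma$ that is multiset-injective over an uncountable domain is not available by that lemma, and Deep Sets universality does not supply injectivity in this setting. The paper's construction is designed precisely so that this issue never arises: because the $\UP,\DOWN$ matrices are zeroed, the feature fed to $\sigma$ at layer $\ell+1$ is the concatenation of a bias (finitely many values) with a child embedding that, by induction, takes at most $\Gamma_\text{\smore}^{\ell}$ distinct values; the feature space stays finite at every layer, and the countable-domain lemma applies directly. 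To repair your proof you would either need to adopt the paper's structure-only encoding, or prove a substantially stronger injectivity result over uncountable multiset domains and separately handle the degenerate inputs $\x$ — neither of which your proposal does.
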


\begin{wrapfigure}{r}{0.3\linewidth}
\vspace{-0.15in}
    \centering
    \includegraphics[width=1.0\linewidth]{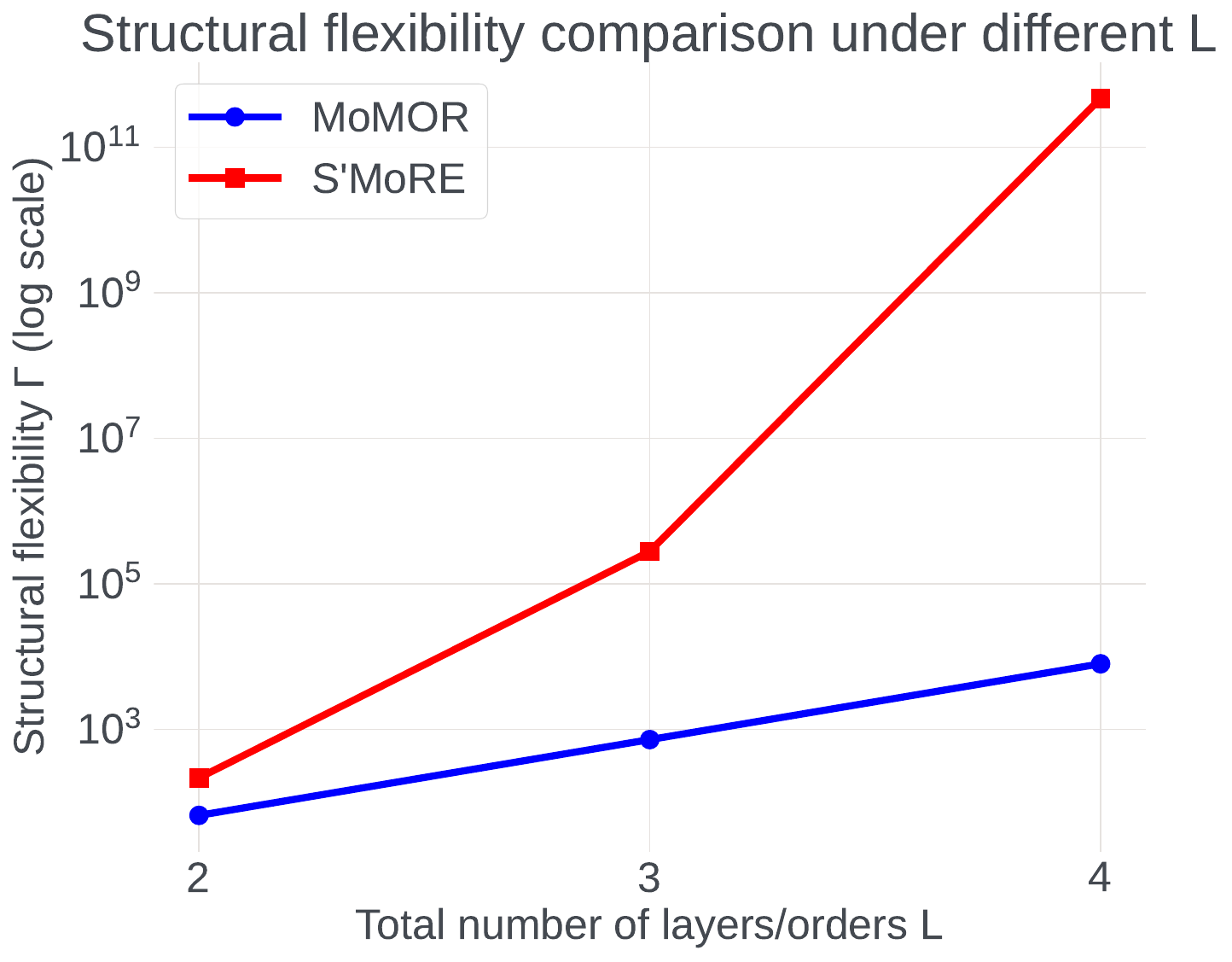}
    \caption{$\Gamma_\text{\smore}$ and $\Gamma_\text{MoMOR}$ w.r.t. $L$ (with $s_\ell=4$, $f_\ell=2$).}
    \label{fig: struct bound}
\vspace{-0.2in}
\end{wrapfigure}
Above, $\binom{s}{k}=\frac{s!}{k!\paren{s-k}!}$ is the binomial coefficient that quantifies the number of ways to choose $k$ out of $s$ items, ignoring order. 
$F_\ell$ is defined in \eqnref{eq: all fanout}.
When increasing the number of layers, $\Gamma_\text{\smore}$ exceeds the upper bound $\Gamma_\text{MoMOR}$ by orders of magnitude.
The reason is that for MoMOR, the $\binom{s_\ell}{i}$ terms are summed over $F_\ell$, while for {\smore}, $F_\ell$ becomes the \textbf{exponent} of $\binom{s_\ell}{f_\ell}$.
In \figref{fig: struct bound}, we calculate the theoretical $\Gamma_\text{MoMOR}$ and $\Gamma_\text{\smore}$ under depth $L$. 
Consistent with our experimental setup (\secref{sec: exp setup}), we set $s_\ell=4$ and $f_\ell=2$ for all $\ell$. 
Clearly, $\Gamma_\text{\smore}$ is substantially higher than $\Gamma_\text{MoMOR}$ even for shallow models ($L=2$), and $\Gamma_\text{\smore}$ grows exponentially faster than $\Gamma_\text{MoMOR}$ when increasing $L$. 

\begin{wrapfigure}{r}{0.5\textwidth}
\vspace{-0.3in}
\begin{center}
\begin{tikzpicture}[every node/.style={draw, circle, 
        minimum size=0.4cm,
        inner sep=0.2pt,
        font=\scriptsize,
    }, 
    level distance=1.cm, sibling distance=1.05cm
    ]
  \def\leafsep{0.55cm}
  \def\leafdrop{1.0cm}
  \def\treespacing{1.92cm}
  \def\labelsep{0.4cm}

  \node (A) {2}
    child {node[fill=myblue] (B1) {1,1}}
    child {node[fill=myorange] (B2) {1,2}};

  \node[left=\labelsep of A, font=\normalsize, anchor=east, draw=none] {(a)};

  \coordinate (midL) at ($(B1)!0.5!(B2)$);
  \coordinate (baseL) at ($(midL)+(0,-\leafdrop)$);

  \pgfmathsetmacro{\a}{-1.5*\leafsep}
  \pgfmathsetmacro{\b}{-0.5*\leafsep}
  \pgfmathsetmacro{\c}{0.5*\leafsep}
  \pgfmathsetmacro{\d}{1.5*\leafsep}

  \node[xshift=\a, yshift=0cm] at (baseL) (C1) {0,1};
  \node[xshift=\b, yshift=0cm] at (baseL) (C2) {0,2};
  \node[xshift=\c, yshift=0cm] at (baseL) (C3) {0,3};
  \node[xshift=\d, yshift=0cm] at (baseL) (C4) {0,4};

  \draw (B1) -- (C1);
  \draw (B1) -- (C3);
  \draw (B2) -- (C2);
  \draw (B2) -- (C4);

  \node[right=\treespacing of A] (A2) {2}
    child {node[fill=myblue] (B3) {1,1}}
    child {node[fill=myorange] (B4) {1,2}};

  \node[left=\labelsep of A2, font=\normalsize, anchor=east, draw=none] {(b)};

  \coordinate (midR) at ($(B3)!0.5!(B4)$);
  \coordinate (baseR) at ($(midR)+(0,-\leafdrop)$);

  \node[xshift=\a] at (baseR) (C5) {0,1};
  \node[xshift=\b] at (baseR) (C6) {0,2};
  \node[xshift=\c] at (baseR) (C7) {0,3};
  \node[xshift=\d] at (baseR) (C8) {0,4};

  \draw (B3) -- (C5);
  \draw (B3) -- (C6);
  \draw (B4) -- (C7);
  \draw (B4) -- (C8);

  \node[right=\treespacing of A2] (A3) {2}
    child {node[fill=myorange] (B5) {1,2}}
    child {node[fill=myblue] (B6) {1,1}};

  \node[left=\labelsep of A3, font=\normalsize, anchor=east, draw=none] {(c)};

  \coordinate (midR2) at ($(B5)!0.5!(B6)$);
  \coordinate (baseR2) at ($(midR2)+(0,-\leafdrop)$);

  \node[xshift=\a] at (baseR2) (C9) {0,1};
  \node[xshift=\b] at (baseR2) (C10) {0,2};
  \node[xshift=\c] at (baseR2) (C11) {0,3};
  \node[xshift=\d] at (baseR2) (C12) {0,4};

  \draw (B5) -- (C9);
  \draw (B5) -- (C10);
  \draw (B6) -- (C11);
  \draw (B6) -- (C12);

\end{tikzpicture}
\end{center}
\vspace{-0.1in}
\caption{Examples where the same set of activated experts interconnect differently. MoMOR always generates the same output for (a), (b) and (c), while {\smore} can distinguish all the three cases. A variant of {\smore} that performs activation $\sigma$ differently (\secref{sec: smore variant}) can differentiate (a) from (b) or (c), but cannot differentiate (b) from (c). Note that (b) and (c) differ by swapped ``1,1'' and ``1,2''. }
    \label{fig: struct flex example}
    \vspace{-0.15in}
\end{wrapfigure}

We explain the intuition of the proof, and defer the details to Appendix \ref{appendix: capacity proof}. 
\underline{\emph{First}}, $\Gamma_\text{\smore}$ quantifies the number of non-isomorphic depth-$L$ trees that can be formed by any router. 
Each node at tree-level $\ell$ (i.e., an expert in $\mathcal{R}_{L-\ell}$; the same expert may appear multiple times at tree-level $\ell$ under different ancestor paths) has $\omega=\binom{s_{L-\ell-1}}{f_{L-\ell-1}}$ ways of selecting its children set. All nodes at tree-level $\ell$ jointly contribute to a $\omega^{F_{L-\ell}}$ factor. 
\underline{\emph{Secondly}}, {\smore} can generate distinct outputs for all non-isomorphic sub-trees. 
We borrow conclusions from the Graph Neural Network literature. 
We view \eqnref{eq: layer aggr l} as defining a variant of Graph Isomorphism Network (GIN) \citep{gin}. 
{\smore}'s $L$-layer propagation simulates the $L$-iteration Weisfeiler-Lehman (WL) test \citep{wl}, where including non-linearly activated $\sigma$ is the key to ensure an injective ``color refinement'' process in WL. 
It then follows that the $L$ layer {\smore} can distinguish non-isomorphic trees of depth $L$. 
\underline{\emph{Third}}, without activation $\sigma$, {\smore} degrades to MoMOR, and is unable to distinguish many non-isomorphic depth-$L$ trees. 
\figref{fig: struct flex example} shows 3 examples with $L=2$. Node 2 is the final output node (tree root). 
When we activate the same set of experts (``0,1'', ``0,2'', ``0,3'', ``0,4'', ``1,1'', ``1,2'') but connect them differently (non-isomorphic), MoMOR always generates the same output while {\smore} can produce different ones. This shows $\Gamma_\text{MoMOR}<\Gamma_\text{\smore}$ and {\smore}'s higher expressivity.

\subsection{Model Variants}
\label{sec: smore variant}

\paragraph{How activation $\sigma$ affects structural learning. }

Theorem \ref{thm: smore flex} concretely shows the benefit of including activation $\sigma$ in \eqnref{eq: layer aggr l}. 
What if we tweak \eqnref{eq: layer aggr l} to let $\sigma$ operate on $\x[\ell][n]$ rather than $\UP[\ell][n]\cdot\DOWN[\ell][n]\cdot \x + \W[\ell]\cdot \x[\ell][n]$?

\begin{equation}\label{eq: layer aggr l v2}
    \x[\ell+1][i] = \sum_{n\in\N[\ell][i]}\wExp[\ell][i,n]\cdot\paren{\UP[\ell][n]\cdot\DOWN[\ell][n]\cdot \x + \W[\ell]\cdot \sigma\paren{\x[\ell][n]}}
\end{equation}

We can then decompose \eqnref{eq: layer aggr l v2} as 
$\mathcolorbox{mygray}{\textstyle}{\sum_{n\in\N[\ell][i]}\UP[\ell][n]\cdot\DOWN[\ell][n]\cdot \x} + \mathcolorbox{mygreen}{\textstyle}{\sum_{n\in\N[\ell][i]}\W[\ell]\cdot \sigma\paren{\x[\ell][n]}}$ (ignoring $\wExp[\ell][i,n]$ for simplicity), and use \figref{fig: struct flex example} as an example to understand its expressive power. 
Trees (a) and (b) have the same layer-2 experts, ``1,1'' and ``1,2'', making their \colorbox{mygray}{gray} terms equivalent. 
Yet, their different layer-1 children combinations (tree (a) has ``0,1'' + ``0,3'' and ``0,2'' + ``0,4'', while tree (b) has ``0,1'' + ``0,2'' and ``0,3'' + ``0,4'') make their \colorbox{mygreen}{green} terms different. 
This enables \eqnref{eq: layer aggr l v2} to differentiate (a) from (b). 
Following this reasoning, for (b) and (c), their gray and green terms are both equal. Thus, \eqnref{eq: layer aggr l v2} yields identical outputs for the two trees, even though they are non-isomorphic. 

\begin{corollary}\label{coro: child aggr}
    Let $\Gamma_\text{\smore*}^\ell$ be the structural flexibility of $\ell$-layer \smore variant under \eqnref{eq: layer aggr l v2}. It satisfies the following recursion: $\Gamma_\text{\smore*}^\ell = \binom{s_{\ell-1}}{f_{\ell-1}}\cdot \binom{\Gamma_\text{\smore*}^{\ell-1}+f_{\ell-1}-1}{f_{\ell-1}}$, where $\Gamma_\text{\smore*}^0\defeq 1$.
\end{corollary}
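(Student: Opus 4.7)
}
The plan is to prove the recursion by induction on $\ell$, exploiting the very decomposition already highlighted in the paper immediately preceding the corollary. Namely, write \eqnref{eq: layer aggr l v2} at the root of an $\ell$-layer variant as
\[
\x[\ell][\text{root}] \;=\; \underbrace{\sum_{n\in\N[\ell-1][\text{root}]}\UP[\ell-1][n]\cdot\DOWN[\ell-1][n]\cdot \x}_{\text{``gray''}} \;+\; \underbrace{\W[\ell-1]\cdot\sum_{n\in\N[\ell-1][\text{root}]}\sigma\paren{\x[\ell-1][n]}}_{\text{``green''}},
\]
ignoring the router weights $\wExp[\ell-1][\cdot,\cdot]$ as in \secref{sec: model capacity}. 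The key observation is that the gray term is a function only of the \emph{set} $S\subseteq[s_{\ell-1}]$ of $f_{\ell-1}$ selected children, while the green term is a function only of the \emph{multiset} of activated children sub-tree outputs $\set{\sigma(\x[\ell-1][n]) : n\in S}$, since summation is symmetric in its arguments.

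The base case $\Gamma_\text{\smore*}^0\defeq 1$ is the definition. For the inductive step, assume each child sub-tree of depth $\ell-1$ can realize at most $\Gamma_\text{\smore*}^{\ell-1}$ distinct outputs. First I would count how many distinct \emph{green} terms the root can produce once $S$ is fixed: since only the multiset of the $f_{\ell-1}$ activated children sub-tree outputs matters, and each child's output ranges over $\Gamma_\text{\smore*}^{\ell-1}$ possibilities, a standard stars-and-bars argument yields exactly $\binom{\Gamma_\text{\smore*}^{\ell-1}+f_{\ell-1}-1}{f_{\ell-1}}$ multisets. Next I would observe that the choice of $S$ ranges over $\binom{s_{\ell-1}}{f_{\ell-1}}$ possibilities. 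Multiplying gives the claimed recursion, provided that distinct $(S,\text{multiset})$ pairs give rise to distinct root outputs.

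The main obstacle is precisely this last injectivity claim, since it is what lifts an upper bound to an exact (achievable) count matching the ``$=$'' in the corollary. My plan here parallels the achievability argument in Theorem \ref{thm: smore flex}: exhibit a specific $\Theta'$ (with generic, linearly independent columns in the $\UP[\ell-1][n]\cdot\DOWN[\ell-1][n]$'s and an injective $\W[\ell-1]$ whose image intersects the column span of the gray contributions only at $0$) so that changing $S$ alters the gray component in a direction no green component can cancel, and, for fixed $S$, changing the multiset alters the green component in a direction no gray component can cancel. Inductively, the $\Gamma_\text{\smore*}^{\ell-1}$ distinct children outputs can be arranged to be in sufficiently general position (e.g., affinely independent after $\sigma$), so that distinct multisets of $f_{\ell-1}$ of them yield distinct sums; this is where the multi-layer perceptron role of $\sigma$ (as in Theorem \ref{thm: smore flex}) is used.

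Finally, I would verify the two illustrative cases from \figref{fig: struct flex example}: trees (a) and (b) share $S=\set{``1,1'',``1,2''}$ but have different multisets of children outputs (different partitions of the leaves), so they are counted separately; trees (b) and (c) share both $S$ \emph{and} the same multiset of children outputs (swapping labels ``1,1'' and ``1,2'' permutes a symmetric sum), so they collapse to the same output in the variant — exactly as the corollary predicts, contrasting with the full \smore of Theorem \ref{thm: smore flex}.
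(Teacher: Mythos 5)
Your proposal is correct and follows essentially the same route as the paper's proof: the same gray/green decomposition of the root update, the $\binom{s_{\ell-1}}{f_{\ell-1}}$ count over children sets, the stars-and-bars count $\binom{\Gamma_{\text{\smore*}}^{\ell-1}+f_{\ell-1}-1}{f_{\ell-1}}$ over multisets of sub-tree outputs, the GIN multiset lemma to make $\sum\sigma(\cdot)$ injective on multisets, and a block-structured $\W[\ell-1]$ so the two components cannot cancel each other. The only (minor) divergence is in how distinct children sets are forced to yield distinct gray sums: you posit linearly independent / generic directions for the $\UP[\ell-1][n]\cdot\DOWN[\ell-1][n]$ contributions, whereas the paper encodes expert indices as a superincreasing sequence ($2^{i}$) so that distinct subsets have distinct sums; both devices work, though yours should place the encoding in an $\x$-independent bias (as the paper does) rather than in $\UP[\ell-1][n]\cdot\DOWN[\ell-1][n]\cdot\x$, which degenerates when $\x=\bm{0}$.
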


It is easy to see that \smore under \eqnref{eq: layer aggr l} is more expressive than \smore* under \eqnref{eq: layer aggr l v2}. Further, both \smore variants are stronger than the baseline 1-layer MoEs. 
This is also illustrated by \figref{fig: struct flex example}. 

\paragraph{{\smore} with cross-layer parameter sharing. }
We introduce {\smore\textsuperscript{\#}}, another useful variant which lets the experts of different layers share the same parameters. i.e., $s\defeq s_\ell$ and $r\defeq r_\ell$ are the same for all layers $\ell$.
And
$\DOWN^i\defeq\DOWN[\ell][i]$ and $\UP^i\defeq\UP[\ell][i]$ for all $\ell$ and $1\leq i\leq s$. 
This means experts in different layers now operate in the same embedding subspace, and hence the intermediate hidden dimension $d\defeq d_\ell$ is the same for all $\ell$ -- we update \eqnref{eq: d final} as $d=s \cdot r$. 
The layers still propagate by \eqnref{eq: layer aggr l}. 

We summarize the properties of \smore\textsuperscript{\#}. 
Following similar derivation\footnote{In \smore\textsuperscript{\#}, the same expert may be activated in multiple layers. To avoid redundancy, we first collect the set of activated experts across all layers, and then compute $\UP^i\cdot\DOWN^i\cdot\bm{x}$ only once for each activated expert $i$. } in \secref{sec: complexity} (plugging in $d$ above), we conclude \smore\textsuperscript{\#} has comparable parameter \& computation efficiency as the vanilla LoRA. 
The structural flexibility below also has a similar form as Theorem \ref{thm: smore flex} -- \smore and \smore\textsuperscript{\#} exponentially boost structural flexibility of the 1-layer baselines, MoMOR and MoLRE, respectively. 

\begin{corollary}\label{coro: shared}
    The structural flexibility of \smore\textsuperscript{\#} equals $\prod_{\ell=0}^{L-1}\binom{s}{f}^{F_{\ell+1}}$ where $F_L\defeq 1$. 
\end{corollary}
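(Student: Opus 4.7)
The plan is to transport the proof of Theorem~\ref{thm: smore flex} into the shared-parameter regime of \smore\textsuperscript{\#}, showing that tying $\DOWN^i$ and $\UP^i$ across layers neither reduces the combinatorial count of non-isomorphic activation trees nor obstructs an injective layer-wise propagation.

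First, I would recount the router's reachable configurations. Routing in \smore\textsuperscript{\#} still proceeds hierarchically top-down: at each activated parent node, the router picks $f$ of the $s$ experts as children, and this choice is made independently at every one of the $F_{\ell+1}$ nodes sitting at tree-depth $L{-}\ell{-}1$. Because the shared pool imposes no global constraint across sibling groups or across layers (the same expert is allowed to reappear at arbitrarily many tree positions, as already noted in the footnote of \secref{sec: struct_mix}), the total number of router-realizable depth-$L$ trees equals $\prod_{\ell=0}^{L-1}\binom{s}{f}^{F_{\ell+1}}$. This upper-bounds $\func[dist]{\x;\Theta}$ for every $\x$ and every admissible $\Theta$.

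Next I would attain this bound by exhibiting a shared-parameter $\Theta'$ under which all non-isomorphic trees yield distinct $\x[L]$. As in Theorem~\ref{thm: smore flex}, \eqnref{eq: layer aggr l} is read as a GIN-style update and $L$ iterations simulate the Weisfeiler--Lehman color refinement on the activated tree. The per-layer matrices $\W[\ell]$ remain \emph{unshared} in \smore\textsuperscript{\#}, so different tree depths transform child embeddings by different linear maps; composed with a non-linear $\sigma\paren{\cdot}$ implemented by an MLP, the update can be made injective on the finite multiset of messages arising during propagation by choosing $\W[\ell]$ and $\sigma$ from a dense generic subset of parameter space. Injectivity at every layer then separates all non-isomorphic depth-$L$ trees, matching the upper bound.

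The main obstacle I anticipate is ruling out collisions caused by parameter tying. The concern is that a single expert $i$ reused at several tree depths contributes the same raw feature $\UP^i\cdot\DOWN^i\cdot\x$ on every reuse, which might make trees that differ only in \emph{where} the reuse occurs indistinguishable. What prevents this collapse is precisely that $\W[\ell]$ varies with $\ell$: repeated experts at different depths are multiplied by distinct $\W[\ell]$ before being aggregated upward, so they carry depth-tagged information to their parents. Since only finitely many distinct child-multisets can appear at each layer for bounded $s,f,L$, a generic choice of $\{\W[\ell]\}_{\ell=0}^{L-1}$ together with an injective-multiset-aggregating $\sigma$ (the standard \citet{gin} construction) enforces injectivity throughout, recovering the tight lower bound $\prod_{\ell=0}^{L-1}\binom{s}{f}^{F_{\ell+1}}$ and completing the corollary.
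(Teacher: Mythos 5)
Your overall two-stage strategy (count the non-isomorphic router-realizable trees, then exhibit parameters under which a GIN/WL-style propagation separates all of them) is exactly the paper's strategy, and your Stage-1 count $\prod_{\ell=0}^{L-1}\binom{s}{f}^{F_{\ell+1}}$ agrees with the paper's inductive count. The gap is in Stage 2. The corollary (inheriting the form of Theorem~\ref{thm: smore flex}) is a $\min_{\x}$ statement: the constructed $\Theta'$ must yield $\prod_{\ell}\binom{s}{f}^{F_{\ell+1}}$ distinct outputs for \emph{every} input $\x$. Your construction keys an expert's identity to its ``raw feature'' $\UP^n\cdot\DOWN^n\cdot\x$ and then appeals to genericity of $\{\W[\ell]\}_{\ell=0}^{L-1}$ together with an injective multiset aggregator $\sigma$. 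But at $\x=\bm{0}$ (and more generally on the common kernel of the $\DOWN^n$) every expert contributes the same raw feature $\bm{0}$, so no downstream choice of $\W[\ell]$ or $\sigma$ can recover which experts were activated; all trees of the same shape then collide and $\func[dist]{\bm{0};\Theta'}$ falls far below the claimed value. The paper avoids this by explicitly modifying \eqnref{eq: layer aggr l} to include a per-expert bias (see \eqnref{eq: updated aggr}), setting $\UP^n=\DOWN^n=\bm{0}$ and letting the bias alone encode the expert index --- an encoding independent of $\x$ which, since it depends only on $n\in\{1,\dots,s\}$, is automatically consistent with the cross-layer tying of \smore\textsuperscript{\#}. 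Some input-independent tag of this kind is necessary for the $\min_\x$ claim; genericity over $\W[\ell]$ and $\sigma$ alone cannot supply it.

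A second, smaller omission: the one modification the paper actually needs for \smore\textsuperscript{\#} is dimensional, not combinatorial. In Theorem~\ref{thm: smore flex} the widths grow ($d_{\ell+1}=d_\ell+s_\ell\cdot r_\ell$) and the construction stacks the child embedding beneath the expert tag via a tall projection $\W[\ell]$; in \smore\textsuperscript{\#} every layer has the same width $d=s\cdot r$, so the paper re-targets the injection of Lemma~\ref{lemma: gin} into a proper subspace $\R^{d'}$ with $d'<d$ and takes $\W[\ell]$ to be a projection whose first $d-d'$ rows vanish, leaving room for the tag. Your genericity route sidesteps explicit stacking and so does not strictly need this bookkeeping, but it deserves a remark that equal widths do not obstruct injectivity (the relevant multisets are finite). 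Finally, your worry about cross-layer weight tying is, in the paper's construction, moot: the tied matrices are zeroed out, and the distinguishing power lives entirely in the index-only biases and the recursion on child embeddings rather than in depth-varying $\W[\ell]$.
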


\paragraph{Alternative router design (bottom-up version). }
In addition to the top-down router in \secref{sec: router}, we can also perform bottom-up routing, making the routing and layer propagation flow along the same direction. 
The bottom-up router still aims at customizing different children experts for different parents. 
Yet, when routing bottom-up, the parent index is unknown when we select the children. 
So now the key vector $\bm{k}$ (see \eqnref{eq: router}) is not directly associated with any specific parent expert. It instead represents a node position in the routing tree.
See Appendix \ref{appendix: bottom-up router} for details and tradeoff discussion. 
\section{Experiments}
\label{sec: exp}
\begin{table*}[t]
    \centering
    \caption{Comparison under two base models \& three gate types. The hyperparameter search sets the same parameter budget for all models. 
    The ``Param.'' column denotes the trainable parameters (B) for the highest-accuracy model. 
    In the ``Method'' column, number in parentheses denote the number of experts / heads (``4-4'' denotes a 2-layer {\smore}, each with 4 experts). 
    Highest accuracy under the same gate is highlighted in \textbf{bold}, and highest accuracy across all gates is highlighted in {\color{red}\textbf{red}}. 
    }
    \label{tab:sota_combined}
    \setlength{\tabcolsep}{4pt}
    \renewcommand{\arraystretch}{1}
    \resizebox{.9\textwidth}{!}{%
    \begin{tabular}{cclcccccccccc||cc}
    \toprule
     & \multirow{2}{*}{Gate} & \multirow{2}{*}{Method} 
       & \multicolumn{2}{c}{ARC-c} 
       & \multicolumn{2}{c}{ARC-e} 
       & \multicolumn{2}{c}{CSQA} 
       & \multicolumn{2}{c}{OBQA} 
       & \multicolumn{2}{c||}{Winogrande} 
       & \multirow{2}{*}{\textbf{\shortstack{Avg\\Acc.}}} 
       & \multirow{2}{*}{\textbf{\shortstack{Avg\\Param.}}} \\
      & & 
      & Acc. & Param.
      & Acc. & Param.
      & Acc. & Param.
      & Acc. & Param.
      & Acc. & Param.
      &  &  \\
    \midrule\midrule

    & & Base & 32.54 & 0 & 66.31 & 0 & 23.67 & 0 & 43.80 & 0 & 50.75 & 0 & 43.41 & 0 \\
    \rowcolor{white}\cellcolor{white} & & LoRA              & 36.27 & 0.004 & 74.78 & 0.002 & 63.80 & 0.063 & 71.20 & 0.031 & 50.59 & 0.008 & 59.15 & 0.022 \\
    \midrule
    \rowcolor{hydralora}\cellcolor{white} &\cellcolor{white} & HydraLoRA (4) & 35.93 & 0.006 & 73.54 & 0.023 & 66.34 & 0.002 & 71.60 & 0.023 & 50.75 & 0.012 & \cellcolor{hydraloraavg}59.63 & \cellcolor{hydraloraavg}0.013 \\
    \rowcolor{hydralora}\cellcolor{white} &\cellcolor{white} & HydraLoRA (8) & 35.93 & 0.012 & 72.31 & 0.007 & 62.08 & 0.042 & 71.60 & 0.012 & 50.99 & 0.012 & \cellcolor{hydraloraavg}58.58 & \cellcolor{hydraloraavg}0.017 \\
    \rowcolor{mixlora}\cellcolor{white}  & \cellcolor{white}& MixLoRA (4)    & 39.66 & 0.021 & 72.84 & 0.134 & 65.44 & 0.134 & 70.40 & 0.134 & 51.30 & 0.007 & \cellcolor{mixloraavg}59.93 & \cellcolor{mixloraavg}0.086 \\
    \rowcolor{mixlora}\cellcolor{white}  &\cellcolor{white} & MixLoRA (8)    & 39.32 & 0.021 & 74.78 & 0.270 & 66.42 & 0.069 & 69.60 & 0.134 & 51.14 & 0.037 & \cellcolor{mixloraavg}60.25 & \cellcolor{mixloraavg}0.106 \\
    \rowcolor{smore}\cellcolor{white}  \cellcolor{white} & \cellcolor{white} & \smore\,(2-2)  & \textbf{40.00} & 0.017 & \color{red}\textbf{75.31} & 0.085 & 66.99 & 0.037 & 72.20 & 0.085 & \textbf{52.01} & 0.015 & \cellcolor{smoreavg}\textbf{61.30} & \cellcolor{smoreavg}0.048 \\
    \rowcolor{smore}\cellcolor{white} & \cellcolor{white} \multirow{-6}{*}{\rotatebox{90}{Dense}}&\smore\,(4-4) & 39.66 & 0.017 & 74.43 & 0.085 & \color{red}\textbf{67.32} & 0.045 & \color{red}\textbf{72.80} & 0.202 & \textbf{52.01} & 0.168 & \cellcolor{smoreavg}61.24 & \cellcolor{smoreavg}0.103 \\
    \cmidrule(l){2-15}
    \rowcolor{mixlora}\cellcolor{white}  &\cellcolor{white}  & MixLoRA (4)    & 39.32 & 0.037 & 71.96 & 0.069 & 64.70 & 0.134 & 70.00 & 0.134 & 51.46 & 0.069 & \cellcolor{mixloraavg}59.49 & \cellcolor{mixloraavg}0.089 \\
    \rowcolor{mixlora}\cellcolor{white} &\cellcolor{white}   & MixLoRA (8)    & 37.97 & 0.069 & 72.84 & 0.270 & 65.03 & 0.134 & 70.80 & 0.270 & 51.46 & 0.069 & \cellcolor{mixloraavg}59.62 & \cellcolor{mixloraavg}0.162 \\
    \rowcolor{smore}\cellcolor{white} &\cellcolor{white}     & \smore\,(2-2)  & \textbf{39.66} & 0.029 & 73.19 & 0.135 & 64.95 & 0.135 & 70.00 & 0.102 & 51.54 & 0.029 & \cellcolor{smoreavg}59.87 & \cellcolor{smoreavg}0.086 \\
    \rowcolor{smore}\cellcolor{white} &\cellcolor{white} \multirow{-4}{*}{\rotatebox{90}{\shortstack{Noisy\\top-$k$}}} & \smore\,(4-4) & \textbf{39.66} & 0.037 & \textbf{74.96} & 0.135 & \textbf{66.26} & 0.102 & \textbf{71.40} & 0.135 & \textbf{52.17} & 0.273 & \cellcolor{smoreavg}\textbf{60.89} & \cellcolor{smoreavg}0.136 \\
    \cmidrule(l){2-15}
    \rowcolor{mixlora}\cellcolor{white} &\cellcolor{white}   & MixLoRA (4)    & 38.98 & 0.021 & 73.37 & 0.134 & 66.42 & 0.069 & 72.00 & 0.134 & 51.22 & 0.009 & \cellcolor{mixloraavg}60.40 & \cellcolor{mixloraavg}0.073 \\
    \rowcolor{mixlora}\cellcolor{white} & \cellcolor{white}  & MixLoRA (8)    & 39.32 & 0.021 & 73.72 & 0.069 & 65.85 & 0.134 & 71.80 & 0.134 & 51.30 & 0.021 & \cellcolor{mixloraavg}60.40 & \cellcolor{mixloraavg}0.076 \\
    \rowcolor{smore}\cellcolor{white}  & \cellcolor{white}   & \smore\,(2-2)  & 39.66 & 0.029 & \textbf{74.78} & 0.135 & 66.75 & 0.069 & 71.40 & 0.102 & \color{red}\textbf{52.25} & 0.045 & \cellcolor{smoreavg}60.97 & \cellcolor{smoreavg}0.076 \\
    \rowcolor{smore}\cellcolor{white}\multirow{-14}{*}{\rotatebox{90}{LLaMA 3.2 1B}}&\cellcolor{white}\multirow{-4}{*}{\rotatebox{90}{\shortstack{Switch}}} & \smore\,(4-4) & \color{red}\textbf{40.34} & 0.021 & \textbf{74.78} & 0.168 & \textbf{67.16} & 0.202 & \textbf{72.40} & 0.085 & 52.09 & 0.021 & \cellcolor{smoreavg}\color{red}\textbf{61.35} & \cellcolor{smoreavg}0.099 \\

    \midrule
    \midrule

    & & Base & 80.34 & 0 & 89.77 & 0 & 70.35 & 0 & 73.80 & 0 & 59.91 & 0 & 74.83 & 0\\
    \rowcolor{white}\cellcolor{white} & & LoRA              & 81.69 & 0.028 & 91.36 & 0.028 & 81.00 & 0.028 & 87.00 & 0.028 & 81.77 & 0.028 & 84.56 & 0.028 \\
    \midrule
    \rowcolor{hydralora}\cellcolor{white} &\cellcolor{white} & HydraLoRA (4) & \color{red}\textbf{83.39} & 0.013 & 91.53 & 0.160 & 81.82 & 0.013 & 88.20 & 0.082 & 83.82 & 0.160 & \cellcolor{hydraloraavg}85.75 & \cellcolor{hydraloraavg}0.086 \\
    \rowcolor{hydralora}\cellcolor{white} &\cellcolor{white} & HydraLoRA (8) & 81.69 & 0.079 & 91.53 & 0.015 & 81.49 & 0.024 & 86.60 & 0.015 & 84.14 & 0.297 & \cellcolor{hydraloraavg}85.09 & \cellcolor{hydraloraavg}0.086 \\
    \rowcolor{mixlora}\cellcolor{white} &\cellcolor{white}  & MixLoRA (4)    & 81.69 & 0.026 & \textbf{92.24} & 0.247 & 81.24 & 0.033 & 89.40 & 0.478 & 84.06 & 0.247 & \cellcolor{mixloraavg}85.73 & \cellcolor{mixloraavg}0.206 \\
    \rowcolor{mixlora}\cellcolor{white} &\cellcolor{white}  & MixLoRA (8)    & 82.37 & 0.132 & 91.71 & 0.247 & 81.00 & 0.033 & 88.60 & 0.075 & 85.40 & 0.478 & \cellcolor{mixloraavg}85.82 & \cellcolor{mixloraavg}0.193 \\
    \rowcolor{smore}\cellcolor{white}  &\cellcolor{white}   & \smore\,(2-2)  & 82.37 & 0.090 & \textbf{92.24} & 0.190 & \textbf{81.90} & 0.037 & 89.40 & 0.054 & \color{red}\textbf{88.24} & 0.480 & \cellcolor{smoreavg}\color{red}\textbf{86.83} & \cellcolor{smoreavg}0.170 \\
    \rowcolor{smore}\cellcolor{white}&\cellcolor{white}\multirow{-6}{*}{\rotatebox{90}{Dense}} & \smore\,(4-4) & 82.71 & 0.190 & 91.89 & 0.247 & \textbf{81.90} & 0.033 & \color{red}\textbf{90.00} & 0.076 & 85.48 & 0.247 & \cellcolor{smoreavg}86.40 & \cellcolor{smoreavg}0.157 \\
    \cmidrule(l){2-15}
    \rowcolor{mixlora}\cellcolor{white} & \cellcolor{white} & MixLoRA (4)    & 82.37 & 0.075 & 91.53 & 0.247 & 80.75 & 0.075 & 87.80 & 0.075 & 82.00 & 0.478 & \cellcolor{mixloraavg}84.89 & \cellcolor{mixloraavg}0.190 \\
    \rowcolor{mixlora}\cellcolor{white} &\cellcolor{white}  & MixLoRA (8)    & \color{red}\textbf{83.39} & 0.950 & 91.53 & 0.247 & 80.67 & 0.075 & 88.40 & 0.247 & 83.19 & 0.478 & \cellcolor{mixloraavg}85.44 & \cellcolor{mixloraavg}0.399 \\
    \rowcolor{smore}\cellcolor{white} &\cellcolor{white}    & \smore\,(2-2)  & 82.37 & 0.305 & 91.36 & 0.090 & 81.82 & 0.104 & 88.20 & 0.047 & 83.27 & 0.190 & \cellcolor{smoreavg}85.40 & \cellcolor{smoreavg}0.147 \\
    \rowcolor{smore}\cellcolor{white}&\cellcolor{white}\multirow{-4}{*}{\rotatebox{90}{\shortstack{Noisy\\top-$k$}}} & \smore\,(4-4) & 82.37 & 0.104 & \textbf{91.71} & 0.305 & \textbf{82.06} & 0.047 & \color{red}\textbf{90.00} & 0.480 & \textbf{85.48} & 0.714 & \cellcolor{smoreavg}\textbf{86.32} & \cellcolor{smoreavg}0.330 \\
    \cmidrule(l){2-15}
    \rowcolor{mixlora}\cellcolor{white} &\cellcolor{white}  & MixLoRA (4)    & 82.37 & 0.132 & \color{red}\textbf{92.95} & 0.478 & 81.08 & 0.047 & 88.80 & 0.478 & 84.53 & 0.247 & \cellcolor{mixloraavg}85.95 & \cellcolor{mixloraavg}0.276 \\
    \rowcolor{mixlora}\cellcolor{white} &\cellcolor{white}  & MixLoRA (8)    & 82.03 & 0.033 & 91.71 & 0.132 & 81.24 & 0.047 & 88.60 & 0.247 & 85.95 & 0.950 & \cellcolor{mixloraavg}85.91 & \cellcolor{mixloraavg}0.282 \\
    \rowcolor{smore}\cellcolor{white}  &\cellcolor{white}   & \smore\,(2-2)  & 83.05 & 0.133 & 92.24 & 0.061 & 81.82 & 0.029 & \textbf{89.80} & 0.076 & \textbf{86.42} & 0.247 & \cellcolor{smoreavg}86.67 & \cellcolor{smoreavg}0.109 \\
    \rowcolor{smore}\cellcolor{white}\multirow{-14}{*}{\rotatebox{90}{LLaMA 3 8B}}&\cellcolor{white}\multirow{-4}{*}{\rotatebox{90}{\shortstack{Switch}}} & \smore\,(4-4) & \color{red}\textbf{83.39} & 0.076 & 92.42 & 0.305 & \color{red}\textbf{82.15} & 0.047 & \textbf{89.80} & 0.305 & 85.87 & 0.305 & \cellcolor{smoreavg}\textbf{86.73} & \cellcolor{smoreavg}0.208 \\
    \bottomrule
    \end{tabular}
    }
\end{table*}

\subsection{Experimental Setup}
\label{sec: exp setup}
\paragraph{Datasets.}
We fine-tune on a diverse set of benchmarks, including ARC-c/e~\citep{clark2018think}, Commonsense QA (CSQA)~\citep{talmor2018commonsenseqa}, OpenBook QA (OBQA)~\citep{OpenBookQA2018}, Winogrande~\citep{sakaguchi2021winogrande}, GSM8K~\citep{gsm8k}, 
and HumanEval~\citep{humaneval}. 
For HumanEval, we follow ~\cite{hydralora} to train the base LLM on CodeAlpaca~\citep{codealpaca}, and evaluate ``Pass@1'' on HumanEval. 
For all other datasets, we fine-tune on the training split and evaluate ``Accuracy'' on the test split. 
See Appendix \secref{appendix: dataset} for more details. 

\paragraph{Base models \& baselines.}
We use LLaMA 3.2-1B, LLaMA 3-8B~\citep{dubey2024llama} and Gemma 2-9B \citep{gemma2} as the base models. 
We insert adapters of different kinds: 
\begin{enumerate*}
\item LoRA~\citep{hu2021lora};
\item mixture of LoRA experts (MixLoRA~\citep{mixlora}): the state-of-the art parameter efficient MoE adapter, which is essentially the single-layer version of {\smore}; 
\item HydraLoRA~\citep{hydralora}: another state-of-the-art PEFT adapter implementing a MoE variant of LoRA by splitting LoRA's up-projection $\bm{B}$ into multiple heads, and combining the multi-head outputs via scores from a dense gate; and 
\item {\smore}: the multi-layer extension of the above. 
\end{enumerate*}
To further evaluate the generalizability, we implement 3 variants of MixLoRA and {\smore} using different gates (see \secref{sec: router} and Appendix \ref{appendix: gates}): 2 sparse gates (noisy top-$k$~\citep{sparse_moe} and switch-transformer~\citep{fedus2022switch} gates), and 1 dense gate (same as HydraLoRA~\citep{hydralora}). 
See Appendix \ref{appendix: exp setup}. 

\paragraph{Training \& evaluation methodology.}
For hyperparameter tuning, we train all models using the same number of epochs, learning rate schedule, gradient accumulation steps and machine type. 
All models are trained under the LLaMA-Factory~\citep{llamafactory} framework and evaluated with OpenCompass~\citep{opencompass}. 
For hyperparameter search, we set an equal budget of trainable parameters, and vary the expert rank, the number of experts, the number of activated experts, etc. 
See Appendix \ref{appendix: exp setup} for details of the hyperparameter range, and the hardware / software configuration.

\subsection{Main Results}

\tabref{tab:sota_combined} presents the comprehensive comparison on accuracy and parameter efficiency. 
For all the base model and the gate type, we consistently observe that {\smore} achieves \textbf{significant accuracy improvement without sacrificing parameter efficiency}. 
Specifically,
\begin{enumerate*}
    \item Among all the methods, while LoRA's parameter counts are low, its average accuracy is also the lowest. This implies the necessity of more advanced PEFT adapters of higher model capacity. 
    \item For models using dense gates, HydraLoRA achieves the lowest parameter count. However, its average accuracy is notably lower than both the 1-layer MoE model MixLoRA and the 2-layer {\smore}. Since for all models, we set the same parameter budget for hyperparameter tuning, this means that HydraLoRA cannot effectively utilize more parameters to boost its accuracy (see also \figref{fig: math code scaling}). 
    \item On all gate types, {\smore} achieves significantly higher average accuracy than all baselines. In particular, MixLoRA belongs to the MoLRE family (\secref{sec: smore baseline}) whose layer operation can be categorized by \eqnref{eq: moe basic}. Thus, it can be seen as a single-layer {\smore}. 
    Clearly, building a two-layer structure (``2-2'' or ``4-4'') from a flat layer of experts (``4'' or ``8'') boosts the accuracy without requiring additional trainable parameters. 
    \item Finally, the comparable parameter counts of MixLoRA and {\smore} implies that our multi-layer design introduces low parameter overhead, which is consistent with our analysis in \secref{sec: complexity}. 
\end{enumerate*}

    \begin{minipage}{0.47\linewidth}
        \begin{center}
        \captionof{table}{LLaMA 3-8B: model Accuracy / Pass@1, and the best-performing models' trainable parameters (B).}
        \label{tab: math code}
        \resizebox{\textwidth}{!}{%
            \begin{tabular}{crcccc}
    \toprule
     \multirow{2}{*}{Gate} & \multirow{2}{*}{Method} &  \multicolumn{2}{c}{GSM8K} & \multicolumn{2}{c}{HumanEval} \\
     & & Accuracy & Param. (B) & Pass@1 & Param. (B) \\
    \midrule
    \midrule
    & Base model & 55.95 & 0 & 26.22 & 0\\
    \rowcolor{white}\cellcolor{white} & LoRA & 59.97 & 0.014 & 43.29 & 0.014\\
    \midrule
\rowcolor{hydralora}\cellcolor{white} & HydraLoRA (4) & 62.47 & 0.317 & 40.85 & 0.082 \\
\rowcolor{hydralora}\cellcolor{white} & HydraLoRA (8) & 62.24 & 0.297 & \textbf{44.51} & 0.079\\
\rowcolor{mixlora}\cellcolor{white} & MixLoRA (4) & 61.11 & 0.132 & 39.02 & 0.026\\
\rowcolor{mixlora}\cellcolor{white} & MixLoRA (8) & 59.36 & 0.132 & 40.85 & 0.033\\
\rowcolor{smore}\cellcolor{white} & {\smore} (2-2) & 62.40 & 0.104 & 42.07 & 0.090\\
\rowcolor{smore}\cellcolor{white}\multirow{-6}{*}{\rotatebox{90}{Dense}} & {\smore} (4-4) & \color{red}\textbf{65.20} & 0.957 & 43.90 & 0.104\\
\midrule
\rowcolor{mixlora}\cellcolor{white} & MixLoRA (4) & 59.67 & 0.047 & 42.68 & 0.075\\
\rowcolor{mixlora}\cellcolor{white} & MixLoRA (8) & 61.56 & 0.247 & 39.63 & 0.247\\
\rowcolor{smore}\cellcolor{white} & {\smore} (2-2) & 62.47 & 0.133 & \color{red}\textbf{45.73} & 0.190\\
\rowcolor{smore}\cellcolor{white}\multirow{-4}{*}{\rotatebox{90}{Switch}} & {\smore} (4-4) & \textbf{63.91} & 0.957 & 42.07 & 0.090\\
    \bottomrule
    \end{tabular}

        }
        \end{center}
    \end{minipage}
    \hfill
    \begin{minipage}{0.48\linewidth}
        \includegraphics[width=\linewidth]{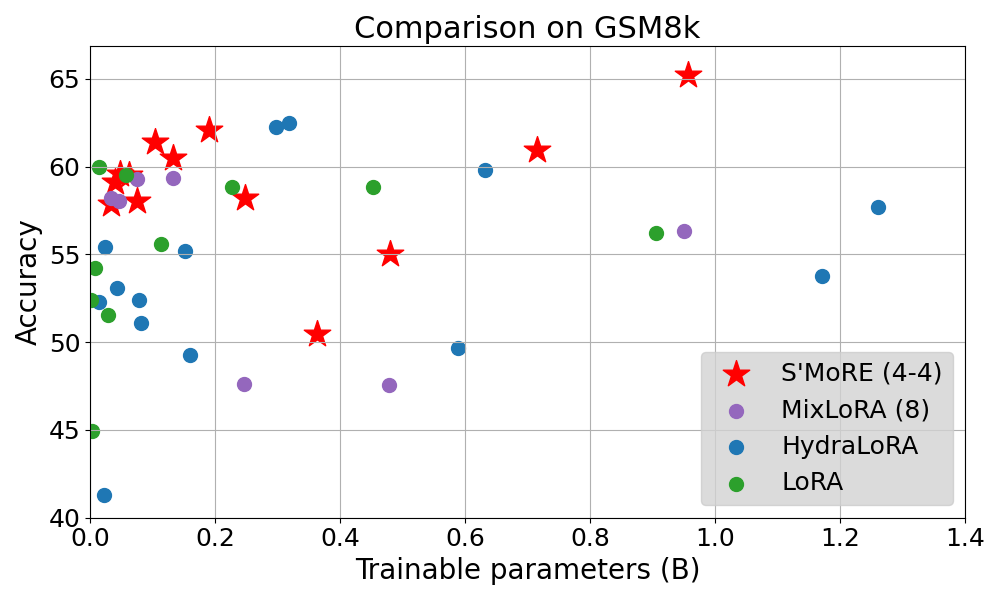}
        \captionof{figure}{Change of accuracy w.r.t.  trainable parameters, corresponding to models in \tabref{tab: math code}.}
        \label{fig: math code scaling}
    \end{minipage}

\subsection{Results on GSM8K \& HumanEval}

We evaluate on GSM8K and HumanEval using LLaMA 3-8B. 
The observations on accuracy / Pass@1 and parameter efficiency from \tabref{tab: math code} is consistent with those from \tabref{tab:sota_combined}:
{\smore} achieves significant accuracy improvement while maintaining parameter efficiency. 
\figref{fig: math code scaling} helps us better understand how the model accuracy scales with the amount of trainable parameters. 
    1) For {\smore}, the accuracy consistently increases with parameters in the low-parameter region (less then 0.2B). Then the accuracy drops when we keep increasing the parameters. Interestingly, in the region from 0.4B to 1B, we see an almost linear increase of accuracy w.r.t. parameters -- the accuracy eventually surpasses that of all other models with a large margin at around 1B. 
    2) For HydraLoRA, its accuracy peaks at around 0.3B. Unlike {\smore}, keeping increasing the parameters does not help with HydraLoRA's accuracy improvement. This observation is consistent with Table \ref{tab:sota_combined}. 
    3) Similar to HydraLoRA, the 1-layer MixLoRA does not show good scaling of accuracy w.r.t. parameters. 
{\smore} may discover good structures among experts, which in turn helps experts better utilize their parameters. 

\subsection{Evaluation on Gemma}

\begin{table*}[t]
    \centering
    \caption{Results on Gemma 2-9B. We evaluate on representative benchmarks due to limited resources. 
    }
    \vspace{-0.05in}
    \label{tab: gemma}
    \setlength{\tabcolsep}{2pt}
    \renewcommand{\arraystretch}{1}
    \resizebox{0.9\textwidth}{!}{%
\begin{tabular}{rcccccccc||cc}
    \toprule
     \multirow{2}{*}{Method} &  \multicolumn{2}{c}{ARC-e} & \multicolumn{2}{c}{CSQA} & \multicolumn{2}{c}{Winogrande} & \multicolumn{2}{c||}{HumanEval} & \multirow{2}{*}{\textbf{\shortstack{Avg\\Acc. / Pass@1}}} & \multirow{2}{*}{\textbf{\shortstack{Avg\\Param. (B)}}} \\
     & Accuracy & Param. (B) & Accuracy & Param. (B) & Accuracy & Param. (B) & Pass@1 & Param. (B) \\
    \midrule
    \midrule
    LoRA & 79.72 & 0.289 &	85.91 & 0.145 &	87.06 & 0.145 &	43.29 & 0.072 & 74.00 & 0.163\\
    \rowcolor{mixlora}MixLoRA (4) & 85.54 & 0.059 &	85.83 & 0.096 & 88.79 & 0.169 &	43.29 & 0.096 & \cellcolor{mixloraavg}75.86 & \cellcolor{mixloraavg}0.105\\
    \rowcolor{mixlora}MixLoRA (8) & 83.07 & 0.168 & 85.83 & 0.096 & 89.19 & 0.315 & 44.51 & 0.168 & \cellcolor{mixloraavg}75.65 & \cellcolor{mixloraavg}0.187 \\
    \rowcolor{smore}\smore (2-2) & 86.24 & 0.042 & \textbf{86.40} & 0.169 & \textbf{90.13} & 0.169 & 44.51 & 0.096 & \cellcolor{smoreavg}76.82 & \cellcolor{smoreavg}0.119\\
    \rowcolor{smore}\smore (4-4) & \textbf{86.60} & 0.169 & 86.32 & 0.060 & \textbf{90.13} & 0.315 &	\textbf{46.34} &0.060 & \cellcolor{smoreavg}\textbf{77.35} & \cellcolor{smoreavg}0.151
\\
    \bottomrule
    \end{tabular}
}
\end{table*}

We extend our evaluation to the Gemma model family. 
Table \ref{tab: gemma} shows the comparison with representative baselines. 
Consistent with the observations on the LLaMA family, \smore achieves significant boost in accuracy / Pass@1 with comparable or fewer parameters (see ``MixLoRA (4) \emph{vs.} \smore (2-2)'' and ``MixLoRA (8) \emph{vs.} \smore (4-4)''). 
The performance gains across multiple model scales (1B, 7B, 9B) and model families (LLaMA, Gemma) reaffirm the benefits from structural mixture. 

\subsection{Scaling up with Layers}

\begin{table*}[t]
    \centering
    \caption{\smore on LLaMA 3.2-1B with more layers. We follow a simple hyperparameter tuning strategy, ensuring the same design space sizes and parameter budgets for the 2- and 3-layer variants. 
    }
    \vspace{-0.05in}
    \label{tab: smore 3l}
    \setlength{\tabcolsep}{2pt}
    \renewcommand{\arraystretch}{1}
    \resizebox{0.9\textwidth}{!}{%
    \begin{tabular}{rcccccccccc}
    \toprule
     \multirow{2}{*}{Layer sizes} &  \multicolumn{2}{c}{ARC-c} & \multicolumn{2}{c}{ARC-e} & \multicolumn{2}{c}{Commonsense QA} & \multicolumn{2}{c}{OpenBook QA} & \multicolumn{2}{c}{Winogrande}\\
      & Accuracy & Param. (B) & Accuracy & Param. (B) & Accuracy & Param. (B) & Accuracy & Param. (B) & Accuracy & Param. (B)\\
    \midrule
    \midrule
2-2 & \textbf{40.00} & 0.017 & \textbf{75.31} & 0.085 & 66.99 & 0.037 & 72.20 & 0.085 & 52.01 & 0.011 \\
2-2-2 & 39.32 & 0.017 & 74.25 & 0.102 & \textbf{67.40} & 0.053 & \textbf{72.60} & 0.205 & \textbf{52.88} & 0.011 \\
\midrule
4-4 & 39.66 & 0.017 & \textbf{74.43} & 0.085 & \textbf{67.32} & 0.045 & 72.80 & 0.202 & 52.01 & 0.168 \\
4-4-4 & \textbf{40.34} & 0.029 & 73.90 & 0.205 & \textbf{67.32} & 0.053 & \textbf{73.60} & 0.202 & \textbf{52.09} & 0.013 \\
    \bottomrule
    \end{tabular}
}
\end{table*}

We evaluate if increasing the number of {\smore} layers can further improve accuracy. 
We follow a simple hyperparameter tuning strategy: for all the 2-layer {\smore} under consideration, we add a 3\textsuperscript{rd} layer with identical configuration (w.r.t. number of experts $s$, fanout $f$, expert dimension $r$, etc.) as the 2\textsuperscript{nd} layer. 
Thus, the sizes of the design spaces for the 3-layer and 2-layer {\smore} are equal. We also enforce the same parameter budget for the 2- and 3-layer models. 
\tabref{tab: smore 3l} summarizes the comparison. 
Adding one more layer improves accuracy significantly in many cases. 
The accuracy improvements do not necessarily come at the cost of more parameters. 
For example, for Winogrande, ``2-2-2'' structure improves the accuracy of ``2-2'' by 0.87 with the same parameter count. 

\section{Conclusion}
We introduced \smore, a novel Structural Mixture of Residual Experts framework that jointly achieves the efficiency of low-rank adaptation (LoRA) with the flexibility of Mixture-of-Experts (MoE), and further boosts MoE's model capacity by exploiting experts' inherent structure.
By applying hierarchical residual decomposition and tree-based routing, \smore effectively emulates exponentially more experts without instantiating additional expert instances, and achieves similar computation and parameter efficiency as the vanilla LoRA. 
We further propose a structural flexibility metric to quantify the model capacity, and theoretically show that \smore's unique model architecture design is the key to boost structural flexibility compared with various LoRA-MoE hybrids. 
On extensive experiments, we confirm {\smore}'s state-of-the-art fine-tuning performance.

\newpage
\bibliography{references}
\bibliographystyle{plainnat}

\newpage
\section*{NeurIPS Paper Checklist}

\begin{enumerate}

\item {\bf Claims}
    \item[] Question: Do the main claims made in the abstract and introduction accurately reflect the paper's contributions and scope?
    \item[] Answer: \answerYes{} 
    \item[] Justification: The abstract and introduction clearly state the claims made.
    \item[] Guidelines:
    \begin{itemize}
        \item The answer NA means that the abstract and introduction do not include the claims made in the paper.
        \item The abstract and/or introduction should clearly state the claims made, including the contributions made in the paper and important assumptions and limitations. A No or NA answer to this question will not be perceived well by the reviewers. 
        \item The claims made should match theoretical and experimental results, and reflect how much the results can be expected to generalize to other settings. 
        \item It is fine to include aspirational goals as motivation as long as it is clear that these goals are not attained by the paper. 
    \end{itemize}

\item {\bf Limitations}
    \item[] Question: Does the paper discuss the limitations of the work performed by the authors?
    \item[] Answer: \answerYes{} 
    \item[] Justification: See Appendix \ref{appendix: limitation}. 
    \item[] Guidelines:
    \begin{itemize}
        \item The answer NA means that the paper has no limitation while the answer No means that the paper has limitations, but those are not discussed in the paper. 
        \item The authors are encouraged to create a separate "Limitations" section in their paper.
        \item The paper should point out any strong assumptions and how robust the results are to violations of these assumptions (e.g., independence assumptions, noiseless settings, model well-specification, asymptotic approximations only holding locally). The authors should reflect on how these assumptions might be violated in practice and what the implications would be.
        \item The authors should reflect on the scope of the claims made, e.g., if the approach was only tested on a few datasets or with a few runs. In general, empirical results often depend on implicit assumptions, which should be articulated.
        \item The authors should reflect on the factors that influence the performance of the approach. For example, a facial recognition algorithm may perform poorly when image resolution is low or images are taken in low lighting. Or a speech-to-text system might not be used reliably to provide closed captions for online lectures because it fails to handle technical jargon.
        \item The authors should discuss the computational efficiency of the proposed algorithms and how they scale with dataset size.
        \item If applicable, the authors should discuss possible limitations of their approach to address problems of privacy and fairness.
        \item While the authors might fear that complete honesty about limitations might be used by reviewers as grounds for rejection, a worse outcome might be that reviewers discover limitations that aren't acknowledged in the paper. The authors should use their best judgment and recognize that individual actions in favor of transparency play an important role in developing norms that preserve the integrity of the community. Reviewers will be specifically instructed to not penalize honesty concerning limitations.
    \end{itemize}

\item {\bf Theory assumptions and proofs}
    \item[] Question: For each theoretical result, does the paper provide the full set of assumptions and a complete (and correct) proof?
    \item[] Answer: \answerYes{} 
    \item[] Justification: The full set of assumptions and a complete proof are provided in either the main paper or its appendix.
    \item[] Guidelines:
    \begin{itemize}
        \item The answer NA means that the paper does not include theoretical results. 
        \item All the theorems, formulas, and proofs in the paper should be numbered and cross-referenced.
        \item All assumptions should be clearly stated or referenced in the statement of any theorems.
        \item The proofs can either appear in the main paper or the supplemental material, but if they appear in the supplemental material, the authors are encouraged to provide a short proof sketch to provide intuition. 
        \item Inversely, any informal proof provided in the core of the paper should be complemented by formal proofs provided in appendix or supplemental material.
        \item Theorems and Lemmas that the proof relies upon should be properly referenced. 
    \end{itemize}

    \item {\bf Experimental result reproducibility}
    \item[] Question: Does the paper fully disclose all the information needed to reproduce the main experimental results of the paper to the extent that it affects the main claims and/or conclusions of the paper (regardless of whether the code and data are provided or not)?
    \item[] Answer: \answerYes{} 
    \item[] Justification: The paper fully discloses all the information needed to reproduce the main experimental results of the paper.
    \item[] Guidelines:
    \begin{itemize}
        \item The answer NA means that the paper does not include experiments.
        \item If the paper includes experiments, a No answer to this question will not be perceived well by the reviewers: Making the paper reproducible is important, regardless of whether the code and data are provided or not.
        \item If the contribution is a dataset and/or model, the authors should describe the steps taken to make their results reproducible or verifiable. 
        \item Depending on the contribution, reproducibility can be accomplished in various ways. For example, if the contribution is a novel architecture, describing the architecture fully might suffice, or if the contribution is a specific model and empirical evaluation, it may be necessary to either make it possible for others to replicate the model with the same dataset, or provide access to the model. In general. releasing code and data is often one good way to accomplish this, but reproducibility can also be provided via detailed instructions for how to replicate the results, access to a hosted model (e.g., in the case of a large language model), releasing of a model checkpoint, or other means that are appropriate to the research performed.
        \item While NeurIPS does not require releasing code, the conference does require all submissions to provide some reasonable avenue for reproducibility, which may depend on the nature of the contribution. For example
        \begin{enumerate}
            \item If the contribution is primarily a new algorithm, the paper should make it clear how to reproduce that algorithm.
            \item If the contribution is primarily a new model architecture, the paper should describe the architecture clearly and fully.
            \item If the contribution is a new model (e.g., a large language model), then there should either be a way to access this model for reproducing the results or a way to reproduce the model (e.g., with an open-source dataset or instructions for how to construct the dataset).
            \item We recognize that reproducibility may be tricky in some cases, in which case authors are welcome to describe the particular way they provide for reproducibility. In the case of closed-source models, it may be that access to the model is limited in some way (e.g., to registered users), but it should be possible for other researchers to have some path to reproducing or verifying the results.
        \end{enumerate}
    \end{itemize}

\item {\bf Open access to data and code}
    \item[] Question: Does the paper provide open access to the data and code, with sufficient instructions to faithfully reproduce the main experimental results, as described in supplemental material?
    \item[] Answer: \answerYes{} 
    \item[] Justification: The code is released at: \url{https://github.com/ZimpleX/SMoRE-LLM}.
    \item[] Guidelines:
    \begin{itemize}
        \item The answer NA means that paper does not include experiments requiring code.
        \item Please see the NeurIPS code and data submission guidelines (\url{https://nips.cc/public/guides/CodeSubmissionPolicy}) for more details.
        \item While we encourage the release of code and data, we understand that this might not be possible, so “No” is an acceptable answer. Papers cannot be rejected simply for not including code, unless this is central to the contribution (e.g., for a new open-source benchmark).
        \item The instructions should contain the exact command and environment needed to run to reproduce the results. See the NeurIPS code and data submission guidelines (\url{https://nips.cc/public/guides/CodeSubmissionPolicy}) for more details.
        \item The authors should provide instructions on data access and preparation, including how to access the raw data, preprocessed data, intermediate data, and generated data, etc.
        \item The authors should provide scripts to reproduce all experimental results for the new proposed method and baselines. If only a subset of experiments are reproducible, they should state which ones are omitted from the script and why.
        \item At submission time, to preserve anonymity, the authors should release anonymized versions (if applicable).
        \item Providing as much information as possible in supplemental material (appended to the paper) is recommended, but including URLs to data and code is permitted.
    \end{itemize}

\item {\bf Experimental setting/details}
    \item[] Question: Does the paper specify all the training and test details (e.g., data splits, hyperparameters, how they were chosen, type of optimizer, etc.) necessary to understand the results?
    \item[] Answer: \answerYes{} 
    \item[] Justification: We release all the training and test details in our experiment section and appendix.
    \item[] Guidelines:
    \begin{itemize}
        \item The answer NA means that the paper does not include experiments.
        \item The experimental setting should be presented in the core of the paper to a level of detail that is necessary to appreciate the results and make sense of them.
        \item The full details can be provided either with the code, in appendix, or as supplemental material.
    \end{itemize}

\item {\bf Experiment statistical significance}
    \item[] Question: Does the paper report error bars suitably and correctly defined or other appropriate information about the statistical significance of the experiments?
    \item[] Answer: \answerYes{} 
    \item[] Justification: Error bars and other statistical significance tests are conducted.
    \item[] Guidelines:
    \begin{itemize}
        \item The answer NA means that the paper does not include experiments.
        \item The authors should answer "Yes" if the results are accompanied by error bars, confidence intervals, or statistical significance tests, at least for the experiments that support the main claims of the paper.
        \item The factors of variability that the error bars are capturing should be clearly stated (for example, train/test split, initialization, random drawing of some parameter, or overall run with given experimental conditions).
        \item The method for calculating the error bars should be explained (closed form formula, call to a library function, bootstrap, etc.)
        \item The assumptions made should be given (e.g., Normally distributed errors).
        \item It should be clear whether the error bar is the standard deviation or the standard error of the mean.
        \item It is OK to report 1-sigma error bars, but one should state it. The authors should preferably report a 2-sigma error bar than state that they have a 96\% CI, if the hypothesis of Normality of errors is not verified.
        \item For asymmetric distributions, the authors should be careful not to show in tables or figures symmetric error bars that would yield results that are out of range (e.g. negative error rates).
        \item If error bars are reported in tables or plots, The authors should explain in the text how they were calculated and reference the corresponding figures or tables in the text.
    \end{itemize}

\item {\bf Experiments compute resources}
    \item[] Question: For each experiment, does the paper provide sufficient information on the computer resources (type of compute workers, memory, time of execution) needed to reproduce the experiments?
    \item[] Answer: \answerYes{} 
    \item[] Justification: We provide sufficient information on the computational resources we used.
    \item[] Guidelines:
    \begin{itemize}
        \item The answer NA means that the paper does not include experiments.
        \item The paper should indicate the type of compute workers CPU or GPU, internal cluster, or cloud provider, including relevant memory and storage.
        \item The paper should provide the amount of compute required for each of the individual experimental runs as well as estimate the total compute. 
        \item The paper should disclose whether the full research project required more compute than the experiments reported in the paper (e.g., preliminary or failed experiments that didn't make it into the paper). 
    \end{itemize}
    
\item {\bf Code of ethics}
    \item[] Question: Does the research conducted in the paper conform, in every respect, with the NeurIPS Code of Ethics \url{https://neurips.cc/public/EthicsGuidelines}?
    \item[] Answer: \answerYes{} 
    \item[] Justification: The research conducted in the paper conform, in every respect, with the NeurIPS Code of Ethics.
    \item[] Guidelines:
    \begin{itemize}
        \item The answer NA means that the authors have not reviewed the NeurIPS Code of Ethics.
        \item If the authors answer No, they should explain the special circumstances that require a deviation from the Code of Ethics.
        \item The authors should make sure to preserve anonymity (e.g., if there is a special consideration due to laws or regulations in their jurisdiction).
    \end{itemize}

\item {\bf Broader impacts}
    \item[] Question: Does the paper discuss both potential positive societal impacts and negative societal impacts of the work performed?
    \item[] Answer: \answerNA{} 
    \item[] Justification: There is no societal impact of the work performed, since it focuses on foundational research in machine learning.
    \item[] Guidelines:
    \begin{itemize}
        \item The answer NA means that there is no societal impact of the work performed.
        \item If the authors answer NA or No, they should explain why their work has no societal impact or why the paper does not address societal impact.
        \item Examples of negative societal impacts include potential malicious or unintended uses (e.g., disinformation, generating fake profiles, surveillance), fairness considerations (e.g., deployment of technologies that could make decisions that unfairly impact specific groups), privacy considerations, and security considerations.
        \item The conference expects that many papers will be foundational research and not tied to particular applications, let alone deployments. However, if there is a direct path to any negative applications, the authors should point it out. For example, it is legitimate to point out that an improvement in the quality of generative models could be used to generate deepfakes for disinformation. On the other hand, it is not needed to point out that a generic algorithm for optimizing neural networks could enable people to train models that generate Deepfakes faster.
        \item The authors should consider possible harms that could arise when the technology is being used as intended and functioning correctly, harms that could arise when the technology is being used as intended but gives incorrect results, and harms following from (intentional or unintentional) misuse of the technology.
        \item If there are negative societal impacts, the authors could also discuss possible mitigation strategies (e.g., gated release of models, providing defenses in addition to attacks, mechanisms for monitoring misuse, mechanisms to monitor how a system learns from feedback over time, improving the efficiency and accessibility of ML).
    \end{itemize}
    
\item {\bf Safeguards}
    \item[] Question: Does the paper describe safeguards that have been put in place for responsible release of data or models that have a high risk for misuse (e.g., pretrained language models, image generators, or scraped datasets)?
    \item[] Answer: \answerNA{} 
    \item[] Justification: The paper poses no such risks.
    \item[] Guidelines:
    \begin{itemize}
        \item The answer NA means that the paper poses no such risks.
        \item Released models that have a high risk for misuse or dual-use should be released with necessary safeguards to allow for controlled use of the model, for example by requiring that users adhere to usage guidelines or restrictions to access the model or implementing safety filters. 
        \item Datasets that have been scraped from the Internet could pose safety risks. The authors should describe how they avoided releasing unsafe images.
        \item We recognize that providing effective safeguards is challenging, and many papers do not require this, but we encourage authors to take this into account and make a best faith effort.
    \end{itemize}

\item {\bf Licenses for existing assets}
    \item[] Question: Are the creators or original owners of assets (e.g., code, data, models), used in the paper, properly credited and are the license and terms of use explicitly mentioned and properly respected?
    \item[] Answer: \answerYes{} 
    \item[] Justification: The creators or original owners of assets (e.g., code, data, models), used in the paper, are properly credited and the license and terms of use are explicitly mentioned and properly respected.
    \item[] Guidelines:
    \begin{itemize}
        \item The answer NA means that the paper does not use existing assets.
        \item The authors should cite the original paper that produced the code package or dataset.
        \item The authors should state which version of the asset is used and, if possible, include a URL.
        \item The name of the license (e.g., CC-BY 4.0) should be included for each asset.
        \item For scraped data from a particular source (e.g., website), the copyright and terms of service of that source should be provided.
        \item If assets are released, the license, copyright information, and terms of use in the package should be provided. For popular datasets, \url{paperswithcode.com/datasets} has curated licenses for some datasets. Their licensing guide can help determine the license of a dataset.
        \item For existing datasets that are re-packaged, both the original license and the license of the derived asset (if it has changed) should be provided.
        \item If this information is not available online, the authors are encouraged to reach out to the asset's creators.
    \end{itemize}

\item {\bf New assets}
    \item[] Question: Are new assets introduced in the paper well documented and is the documentation provided alongside the assets?
    \item[] Answer: \answerNA{} 
    \item[] Justification: The paper does not release new assets.
    \item[] Guidelines:
    \begin{itemize}
        \item The answer NA means that the paper does not release new assets.
        \item Researchers should communicate the details of the dataset/code/model as part of their submissions via structured templates. This includes details about training, license, limitations, etc. 
        \item The paper should discuss whether and how consent was obtained from people whose asset is used.
        \item At submission time, remember to anonymize your assets (if applicable). You can either create an anonymized URL or include an anonymized zip file.
    \end{itemize}

\item {\bf Crowdsourcing and research with human subjects}
    \item[] Question: For crowdsourcing experiments and research with human subjects, does the paper include the full text of instructions given to participants and screenshots, if applicable, as well as details about compensation (if any)? 
    \item[] Answer: \answerNA{} 
    \item[] Justification: The paper does not involve crowdsourcing nor research with human subjects.
    \item[] Guidelines:
    \begin{itemize}
        \item The answer NA means that the paper does not involve crowdsourcing nor research with human subjects.
        \item Including this information in the supplemental material is fine, but if the main contribution of the paper involves human subjects, then as much detail as possible should be included in the main paper. 
        \item According to the NeurIPS Code of Ethics, workers involved in data collection, curation, or other labor should be paid at least the minimum wage in the country of the data collector. 
    \end{itemize}

\item {\bf Institutional review board (IRB) approvals or equivalent for research with human subjects}
    \item[] Question: Does the paper describe potential risks incurred by study participants, whether such risks were disclosed to the subjects, and whether Institutional Review Board (IRB) approvals (or an equivalent approval/review based on the requirements of your country or institution) were obtained?
    \item[] Answer: \answerNA{} 
    \item[] Justification: The paper does not involve crowdsourcing nor research with human subjects.
    \item[] Guidelines:
    \begin{itemize}
        \item The answer NA means that the paper does not involve crowdsourcing nor research with human subjects.
        \item Depending on the country in which research is conducted, IRB approval (or equivalent) may be required for any human subjects research. If you obtained IRB approval, you should clearly state this in the paper. 
        \item We recognize that the procedures for this may vary significantly between institutions and locations, and we expect authors to adhere to the NeurIPS Code of Ethics and the guidelines for their institution. 
        \item For initial submissions, do not include any information that would break anonymity (if applicable), such as the institution conducting the review.
    \end{itemize}

\item {\bf Declaration of LLM usage}
    \item[] Question: Does the paper describe the usage of LLMs if it is an important, original, or non-standard component of the core methods in this research? Note that if the LLM is used only for writing, editing, or formatting purposes and does not impact the core methodology, scientific rigorousness, or originality of the research, declaration is not required.
    \item[] Answer: \answerNA{} 
    \item[] Justification: The core method development in this research does not involve LLMs as any important, original, or non-standard components.
    \item[] Guidelines:
    \begin{itemize}
        \item The answer NA means that the core method development in this research does not involve LLMs as any important, original, or non-standard components.
        \item Please refer to our LLM policy (\url{https://neurips.cc/Conferences/2025/LLM}) for what should or should not be described.
    \end{itemize}

\end{enumerate}


\appendix
\clearpage
\appendix
\DoToC

\section{More Related Works}
\label{appendix: related work}

\paragraph{More related works on MoE. }
In addition to the works mentioned in \secref{sec: related work}, we list a few additional works that explore MoE designs in LLM. 
LLaMA-MoE~\citep{llama_moe} applies Mixture-of-Experts under the Continual Pre-training (CPT) setting. It breaks down LLaMA's pre-trained weight matrices into sub-matrices and use them to initialize the experts' parameters. 
MC-SMoE~\citep{mc_smoe} discovers that there exists high redundancy among experts, and correspondingly proposes algorithms to cluster, merge and then compress a model with many experts. 
MoA~\citep{moa} explores Mixture-of-LoRAs for the multi-task tuning scenarios, where it trains a LoRA for each task separately, and then assembles the multiple LoRAs into an MoE where a learnable router selects the most suitable expert based on the task category. 

\paragraph{Scaling behavior. } There is an emerging trend to research the unique scaling behaviors of MoE systems compared with dense models. 
\cite{moe-scaling-law} summarizes a scaling law indicating that finer expert granularity may improve model capacity, and \cite{mome} provides a practical implementation for very large number of experts. 
\cite{smora} applies such fine-grained experts in the fine-tuning tasks. 
Through analysis and experiments, we hypothesize that ``structural flexibility'' may be a neglected yet critical factor impacting MoE scaling. 
{\smore} has the potential to scale better than regular MoEs, for fine-tuning tasks and beyond. 
We leave the application of {\smore} on other types of tasks as future work.

\section{Details of Model Design}

\subsection{Three types of gates implemented in practice}
\label{appendix: gates}

In our experiments, we implement {\smore} and the baselines with various types of dense and sparse gates, which we describe in more details here:
\begin{itemize}[leftmargin=0.2in]
    \item \textit{Dense} gate: Here we activate all children experts, meaning that $f_\ell=s_\ell$. For each children, the expert score $\alpha$ is still generated by the gating neural net.
    So the dense gate can be understood as a soft version of the sparse gates below.
    Now since all experts are activated, we do not include additional auxiliary loss for load-balance of all the experts. 
    The same dense gate design is also used by HydraLoRA~\citep{hydralora} which we include as one of the experimental baselines (see \secref{sec: exp}). 
    \item \textit{Sparse noisy top-$k$} gate~\citep{sparse_moe}: For each layer $\ell$, the router selects the top $k=f_\ell$ children residuals with the highest gating scores (generated by \eqnref{eq: router}). 
    However, a common issue with sparse expert selection is expert under-utilization, where certain experts are overused while others remain idle, resulting in inefficient training dynamics.
    To address this, following~\citep{sparse_moe}, we first add a learnable noise term on top of gating score $\alpha$, to encourage exploration of different expert choices. 
    Then we implement a layer-wise importance loss plus balance loss, computed separately for each set of experts $\Res{\ell}$ based on gating score distribution and activation frequency.
    We sum up the two auxiliary losses of each {\smore} layer and each transformer layer, and add it to the model prediction cross-entropy (CE) loss, i.e.,
    \begin{equation}\label{eq: balance loss}
        \mathcal{L}_\text{total} = \mathcal{L}_{\text{CE}} + \gamma\cdot\sum_{\ell_\text{trans}=1}^{L_\text{trans}}\sum_{\ell=1}^{L} \paren{\mathcal{L}^{\text{importance}}_{\ell_\text{trans},\ell} + \mathcal{L}^{\text{balance}}_{\ell_\text{trans},\ell}},
    \end{equation}
    where $L_\text{trans}$ denotes the total number of transformer layers and $\gamma$ is a coefficient ($\ll 1$) controlling the strength of the load-balance constraint. 
    The importance and balance loss computation exactly follows the original paper (see Section 4 and Appendix A of \cite{sparse_moe}). 
    \item \textit{Sparse switch-transformer} gate~\citep{fedus2022switch}: This is another popular sparse gate design. 
    There are two main differences from the noisy top-$k$ gate above.
    First, the switch gate implements an optional jitter noise that is applied to the gate's input embedding rather than the final gating score. 
    In addition, the switch gate integrates a different way to compute the load-balance auxiliary loss. 
    The auxiliary losses of all transformer and {\smore} layers are still added back to the CE loss of the main task, just like \eqnref{eq: balance loss}. 
    To ensure fair comparison in experiments, we exactly follow the balance loss implementation in the MixLoRA codebase\footnote{\url{https://github.com/TUDB-Labs/MixLoRA}}. 
\end{itemize}

\subsection{Alternative design: bottom-up routing}
\label{appendix: bottom-up router}

We propose the following bottom-up router fully compatible with \smore's structural mixing design.

In bottom-up routing, our goal remains the same as the top-down router of \secref{sec: router}: to customize different children experts for different parents. Yet, under bottom-up routing, the parent index is unknown when we decide the children. So in our design, we will still include a key vector $\bm{k}$ (see \eqnref{eq: router}). Yet this $\bm{k}$ is not directly associated with any specific parent expert. It represents a node position in the routing tree.

To avoid the discussion being overwhelmed by notations, we use the following example to illustrate the core idea. Consider a 3-layer \smore, where each layer has a fanout of $f=2$, and has $s=6$  experts. Like our original design, the router still has the following:

\begin{itemize}
    \item a down-projection matrix that projects the original token $\x$ to $d_\text{down}$-dimensional $\x_\text{down}$ (e.g., $d_\text{down} = 16$);
    \item a small MLP in each layer (except layer 1). 
\end{itemize}

In addition, we instantiate a learnable ``key'' tensor $\bm{K}$ in each layer. $\bm{K}$'s last dimension is $d_\text{down}=16$, and $\bm{K}$'s leading dimensions depend on the fanout in parent layers (except that for layer 1, $\bm{K}$'s dimension additionally depends on the number of experts). 
For example, in layer 1, $\bm{K}$ has shape $(2, 2, 6, 16)$. In layer 2 and 3, $\bm{K}$'s shapes are $(2, 2, 16)$ and $(2, 16)$.

The routing proceeds as follows:

\begin{enumerate}
    \item At layer 1, we compute dot product between $\bm{K}$ and $\x_\text{down}$ along the last dimension, generating a score tensor of shape $(2, 2, 6)$. Taking the top-2 along the last dimension, we determine the $2 \times 2 \times 2$ children experts for all the $2 \times 2$ parents.
    \item Then we follow \eqnref{eq: layer aggr l} to aggregate the children experts' outputs, generating $2 \times 2$ different output embeddings.
    \item In layer 2, we concatenate the $2 \times 2$ layer-1 output embeddings and the $(2, 2, 16)$-shaped $\bm{K}$, along the last dimension. Then we feed the concatenated tensor into layer 2's router MLP to generate the score distribution over all 6 candidate experts. Taking the top-2 along the last dimension of the MLP output, we now select the $2 \times 2$ layer-2 experts, conditioned on the layer-1 children.
    \item Layer 3 operates similarly.
\end{enumerate}

\paragraph{Comparison. }
 From the above, it is clear that the bottom-up router is still computationally efficient (similar to the original top-down router). Both the bottom-up and top-down designs can model the interaction between parent and children experts. The main difference is that the bottom-up router makes routing decisions based on the children’s aggregated embedding, while the top-down router directly consumes the token embedding. So in the bottom-up design, the gradient can flow back to the lower-layer experts from the router. This may lead to interesting training behaviors that differ from the top-down case.

 Due to limited GPUs resources, we are unable to run ablation with the bottom-up router. We leave such evaluation as an important future work.

\section{Details of Theoretical Analysis}

\subsection{Derivation of parameter \& computation costs}
\label{appendix: comp cost}

Here we provide additional algorithmic details for the parameter and computation efficiency calculation in \secref{sec: complexity}. 

\paragraph{Parameter efficiency. }
From \eqnref{eq: param layer l}, we have 
\begin{align}
    P_{\ell+1} = s_\ell\cdot d\cdot r_\ell + d_{\ell+1}^2
\end{align}

Then \eqnref{eq: total param} is derived as 

\begin{align}
    P_\text{proj} + \sum_{\ell=1}^L P_\ell &= d\cdot d_L + \sum_{\ell=0}^{L-1} P_{\ell+1}\nonumber\\
    &= d\cdot d_L + \sum_{\ell=0}^{L-1}s_\ell\cdot d\cdot r_\ell + \sum_{\ell=0}^{L-1}d_{\ell+1}^2\nonumber\\
    &= d\cdot d_L + d\cdot \paren{\sum_{\ell=0}^{L-1}s_\ell\cdot r_\ell} + \sum_{\ell=0}^{L-1}d_{\ell+1}^2\nonumber\\
    &= d\cdot d_L + d\cdot d_L + \sum_{\ell=0}^{L-1}d_{\ell+1}^2\nonumber\\
    &= 2\cdot d\cdot d_L + \Delta\\
    \text{where }\qquad \Delta &= \sum_{\ell=0}^{L-1}d_{\ell+1}^2 \ll 2\cdot d\cdot d_L\nonumber
\end{align}

\paragraph{Computation cost. }
In \eqnref{eq: layer aggr l}, each $\UP[\ell][n]\cdot \DOWN[\ell][n]\cdot \x$ requires $C' = d\cdot r_\ell + r_\ell\cdot d_{\ell+1}$ operations.
Each $\W[\ell]\cdot \x[\ell][n]$ requires $C''=d_{\ell}\cdot d_{\ell+1}$.
Consider all activated experts $i$ in layer $\ell+1$, there can be at most $N'=\min\set{s_\ell, F_\ell}$ distinct $\UP[\ell][n]\cdot \DOWN[\ell][n]$ terms, incurring $C'$ cost $N'$ times.
There are $F_\ell$ different $\x[\ell][n]$ inputs, each incurring $C''$ cost.
Ignoring the element-wise addition ``$\sum_{n\in\N[\ell][i]}$'' and multiplication $\wExp[\ell][i,n]$, the total cost of layer $\ell+1$ equals (where $d_\ell$ and $F_\ell$ follow \eqnref{eq: d final} and \eqnref{eq: all fanout}):
\begin{align}
    C_{\ell+1} &\leq \min\set{s_\ell, F_\ell}\cdot r_\ell\cdot \paren{d + d_{\ell+1}} + F_\ell\cdot d_\ell \cdot d_{\ell+1}\nonumber\\
    &= \min\set{s_\ell, F_\ell}\cdot r_\ell\cdot d + \min\set{s_\ell, F_\ell}\cdot r_\ell\cdot d_{\ell+1} + F_\ell\cdot d_\ell \cdot d_{\ell+1}\nonumber\\
    &\leq  s_\ell\cdot r_\ell\cdot d + F_\ell\cdot r_\ell\cdot d_{\ell+1} + F_\ell\cdot d_\ell \cdot d_{\ell+1}\nonumber\\
    &= s_\ell\cdot r_\ell\cdot d + F_\ell\cdot d_{\ell+1}\cdot \paren{d_\ell+r_\ell}
\end{align}
The cost of the final projection equals $C_\text{proj} = d\cdot d_L$. So the overall computation cost is:
\begin{equation}
    C_\text{proj} + \sum_{\ell=1}^L C_\ell = d\cdot d_L + d\cdot \paren{\sum_{\ell=0}^{L-1}s_\ell\cdot r_\ell} + \Delta'
    \overset{\text{(b)}}{=} 2\cdot d\cdot d_L + \Delta'
    \overset{\text{(c)}}{\approx} 2\cdot d\cdot d_L
\end{equation}
where $\Delta'\leq \sum_{\ell=0}^{L-1} F_\ell\cdot d_{\ell+1}\cdot \paren{d_{\ell} +r_\ell}$.
Steps ``(b)'' and ``(c)'' follow similar reasoning to \eqnref{eq: total param}. 

Under practical values of $d_{\ell}+r_\ell\leq d_{\ell+1}\ll d$, the overhead term $\Delta'$ is small or negligible compared to the main cost $2\cdot d\cdot d_L$. 
In Table \ref{tab: comp cost}, we empirically calculate the value of $2\cdot d\cdot d_L$, the overhead $\Delta'$, and their ratio. 
We take a representative configuration with $f_\ell=2$, $s_\ell=4$, and $r_\ell=8$ or $16$ for all layers $\ell$ (consistent with the experiments in \secref{sec: exp}). 
\begin{wraptable}{r}{0.38\textwidth}
\vspace{-0.15in}
\caption{Overhead $\Delta'$ compared with the main computation cost $2\cdot d\cdot d_L$}
\vspace{-0.05in}
\resizebox{0.38\textwidth}{!}{
\centering
\begin{tabular}{cccccc}
    \toprule
    $r_\ell$ & $L$ & $d_L$ & $2\cdot d\cdot d_L$ & $\Delta'$ & Overhead ratio\\
    \hline
    \hline
    \multirow{3}{*}{8} & 2 & 64 & 0.5M & 0.006M & 1.2\%\\
    & 3 & 96 & 0.8M & 0.026M & 3.3\% \\
    & 4 & 128 & 1.0M & 0.079M & 7.5\% \\
    \hline
    \multirow{3}{*}{16} & 2 & 128 & 1.0M & 0.025M & 2.3\%\\
    & 3 & 192 & 1.6M & 0.104M & 6.6\% \\
    & 4 & 256 & 2.1M & 0.315M & 15.0\% \\
    \bottomrule
  \end{tabular}
}
\label{tab: comp cost}
\vspace{-.8cm}
\end{wraptable}
For 2 layers, the overhead $\Delta'$ is just \textbf{1.2\%} (or \textbf{2.3\%}) of the cost of vanilla LoRA with rank $d_L=64$ (or $d_L=128$). 

Similar to the analysis in the ``Parameter efficiency'' paragraph, 
the gating MLPs are lightweight compared to the main cost $2\cdot d\cdot d_L$, due to the small dimensionalities. 

Thus, the total computation cost of {\smore} is approximately $2\cdot d\cdot d_L$, which is \emph{the same as the cost of a vanilla LoRA} with rank $d_L$. 
This proves the both parameter and the computation efficiency of {\smore}.

\subsection{Proof of model capacity}
\label{appendix: capacity proof}

\subsubsection{Proof for two special {\smore} configurations}
\begin{proposition}(Proposition \ref{prop: eq molre})
{\smore} can express MoLRE, when $L=1$ and $\sigma\paren{\cdot}$ is the identity mapping.
\end{proposition}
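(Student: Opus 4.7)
The plan is to specialize \smore under the two restrictions and then exhibit parameters that reproduce an arbitrary MoLRE instance. With $L=1$ the only live layer has $\ell=0$, and the conventions $\x[0][n]\in\R^0$ and $\W[0]\in\R^{d_1\times 0}$ (stated right after \eqnref{eq: layer aggr l}) make $\W[0]\cdot \x[0][n]$ the zero vector; taking $\sigma$ to be the identity and using the single-root convention $s_L=1$ further collapses \eqnref{eq: layer aggr l} composed with the final projection to
\begin{equation*}
    \x' \;=\; \bm{W}_\text{proj}\cdot \sum_{n=1}^{s_0} \wExp[0][0,n]\cdot \UP[0][n]\cdot \DOWN[0][n]\cdot \x .
\end{equation*}

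Given any MoLRE with $s$ rank-$r$ experts $\UP^n,\DOWN^n$ and router outputs $\ROUTEx[n]$, I would instantiate \smore with $s_0=s$ and $r_0=r$, which by \eqnref{eq: d final} forces $d_1=sr$, and view $\R^{d_1}$ as a direct sum of $s$ orthogonal blocks of dimension $r$. Letting $\bm{E}_n\in\R^{d_1\times r}$ denote the injection into the $n$-th block (an $r\times r$ identity occupying rows $(n-1)r+1,\ldots,nr$ and zeros elsewhere), I set $\DOWN[0][n]=\DOWN^n$, $\UP[0][n]=\bm{E}_n$, and $\bm{W}_\text{proj}=[\UP^1\mid\UP^2\mid\cdots\mid\UP^s]$ as the horizontal block concatenation; I choose $\N[0][0]=\set{1,\ldots,s}$ so that the root activates every residual, and configure the router's MLP and key vectors so that $\wExp[0][0,n]=\ROUTEx[n]$ for every $n$.

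Plugging into the collapsed forward pass, $\bm{E}_n\cdot\DOWN^n\cdot\x$ deposits $\DOWN^n\x$ into the $n$-th block of $\R^{d_1}$, the scalar $\ROUTEx[n]$ rescales that block, and the block-row $\bm{W}_\text{proj}$ reads each block through the matching $\UP^n$, giving $\x'=\sum_n \ROUTEx[n]\UP^n\DOWN^n\x$, which is exactly MoLRE. The only conceptual step worth flagging, and the place a reader might pause, is recognizing that the mandated intermediate dimension $d_1=sr$ is precisely what is needed to hold each expert's low-rank image in a private subspace before the final projection linearly recombines them; once that block construction is in place everything else is a short algebraic check. I do not foresee any real obstacle beyond a minor consistency point that the router's softmax-style gate must be instantiated to reproduce whatever gate the target MoLRE uses, which is benign since both sides employ learnable normalized scores.
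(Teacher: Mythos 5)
Your proposal is correct and follows essentially the same route as the paper's own proof: collapse the single layer (with the empty $\W[0]\cdot\x[0][n]$ term vanishing and $\sigma$ the identity), set $\DOWN[0][n]$ to the MoLRE down-projections, take $\UP[0][n]$ to be the block-injection matrix with an $r_0\times r_0$ identity in the $n$-th block, and let $\bm{W}_\text{proj}$ be the horizontal concatenation of the MoLRE up-projections so that $\bm{W}_\text{proj}\cdot\UP[0][n]$ recovers each $\UP^n$. The block-subspace observation you flag about $d_1 = s_0\cdot r_0$ is exactly the same mechanism the paper relies on.
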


\begin{proof}
When $L=1$, there is only a single layer propagation. When we set $\sigma$ as the identity mapping, \eqnref{eq: layer aggr l} becomes

\begin{align}
    \x[1] = \sum_{n\in\N[0]}\wExp[\ell][n]\cdot{\UP[0][n]\cdot\DOWN[0][n]\cdot \x}
\end{align}
where we omit the superscript $i$ since there is just one parent node (the root of all all experts in a flat layer). 

Combined with the final projection (see end of \secref{sec: struct_mix}), the final output is computed by

\begin{align}\label{eq: smore molre}
    \x' &= \bm{W}_\text{proj}\cdot \sum_{n\in\N[0]}\wExp[\ell][n]\cdot{\UP[0][n]\cdot\DOWN[0][n]\cdot \x}\nonumber\\
    &= \sum_{n\in\N[0]}\wExp[\ell][n]\cdot{\paren{\bm{W}_\text{proj}\cdot\UP[0][n]}\cdot\DOWN[0][n]\cdot \x}
\end{align}
where $\DOWN[0][n]\in\R^{r_0\times d}$, $\UP[0][n]\in\R^{\paren{s_0\cdot r_0}\times r_0}$ and $\bm{W}_\text{proj}\in\R^{d\times \paren{s_0\cdot r_0}}$. 

For MoLRE with $s_0$ rank-$r_0$ experts, according to the definition in \secref{sec: smore baseline}, we can express its layer operation as

\begin{align}\label{eq: molre prop}
    \bar{\x}' = \sum_{n\in\bar{\N}}\ROUTEx[n]\cdot \bar{\bm{B}}^n\cdot \bar{\bm{A}}^n\cdot \x
\end{align}
where we use overhead ``bar'' to distinguish variables of MoLRE from those of 1-layer {\smore}. 
Here $\bar{\bm{A}}^n\in\R^{r_0\times d}$ and $\bar{\bm{B}}^n\in\R^{d\times r_0}$. 

To make \eqnref{eq: smore molre} and \eqnref{eq: molre prop} equivalent, we can have
\begin{itemize}
    \item {\smore}'s router implementing as $\ROUTEx[n]$
    \item $\DOWN[0][n] = \bar{\bm{A}}^n$ (by definition, both matrices have the same shape)
    \item $\UP[0][n]=\begin{bmatrix}\bm{0}_{r_0}\\\vdots\\ \bm{0}_{r_0}\\\bm{I}_{r_0}\\\bm{0}_{r_0}\\\vdots\\\bm{0}_{r_0}\end{bmatrix}$, which is a binary matrix by vertically stacking $s_0$ square blocks of $r_0\times r_0$ sub-matrices. The $n$-th block is a $r_0\times r_0$ identity matrix, $\bm{I}_{r_0}$, while all the other blocks are 0 (denoted as $\bm{0}_{r_0}$). 
    \item $\bm{W}_\text{proj}=\left[\bar{\bm{B}}^1,\hdots,\bar{\bm{B}}^{s_0}\right]$.
\end{itemize}

Then $\bm{W}_\text{proj}\cdot\UP[0][n]=\bar{\bm{B}}_0^n$. And \eqnref{eq: smore molre} becomes identical to \eqnref{eq: molre prop}, completing the proof.

\end{proof}

\begin{proposition}(Proposition \ref{prop: eq momor})
{\smore} can express MoMOR, when setting $\sigma\paren{\cdot}$ as the identity mapping. 
\end{proposition}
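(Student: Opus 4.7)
The plan is to construct, for any given MoMOR instance, an explicit \smore configuration with $\sigma$ set to the identity that produces the same output $\x'$ on every input $\x$. Linearity of $\sigma$ makes the full \smore mapping $\x\mapsto\bm{W}_\text{proj}\x[L]$ an input-dependent linear map (the input-dependence entering only through the router), so the task reduces to matching this map to MoMOR's $\x\mapsto\sum_{\ell=0}^{L-1}\sum_{n=1}^{s_\ell}\ROUTEx[n][\ell]\,\bar{\bm{B}}^n_\ell\bar{\bm{A}}^n_\ell\,\x$, where bars flag MoMOR parameters.

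Concretely, I would first align \smore's hyperparameters ($L$, $s_\ell$, $r_\ell$) with MoMOR's, so that \eqnref{eq: d final} gives $d_{\ell+1}=d_\ell+s_\ell r_\ell$. Set $\DOWN[\ell][n]=\bar{\bm{A}}^n_\ell$ (shapes agree) and exploit the block decomposition of $d_{\ell+1}$ by choosing
\begin{equation*}
    \W[\ell]=\begin{bmatrix}\bm{I}_{d_\ell}\\ \bm{0}\end{bmatrix}\in\R^{d_{\ell+1}\times d_\ell},
\end{equation*}
so the previous-layer output passes through unchanged in the first $d_\ell$ coordinates, and setting $\UP[\ell][n]\in\R^{d_{\ell+1}\times r_\ell}$ as a ``slot-placer'' that deposits its input into the $n$-th $r_\ell$-wide sub-block of the freshly-added $s_\ell r_\ell$ coordinates (zeros elsewhere). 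Let $\bm{W}_\text{proj}\in\R^{d\times d_L}$ be the fixed block matrix whose $(\ell,n)$-th column sub-block equals $\bar{\bm{B}}^n_\ell$. Finally, pick a routing tree so that each MoMOR expert $(\ell,n)$ sits on a \emph{unique} root-to-leaf path, and have the router set one scalar on that path equal to $\ROUTEx[n][\ell]$ with unit gating scores elsewhere.

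To verify, I would unroll the linear recursion \eqnref{eq: layer aggr l} into a sum over tree-paths from the root. Collecting contributions by the leaf expert $(\ell,n)$ supplying the direct term $\UP[\ell][n]\DOWN[\ell][n]\x$ yields $\x[L]=\sum_{\ell,n}\Gamma^n_\ell(\x)\,(\W[L-1]\cdots\W[\ell+1])\,\UP[\ell][n]\DOWN[\ell][n]\x$, where $\Gamma^n_\ell(\x)=\sum_{\text{paths to }(\ell,n)}\prod_k\wExp[k]$. The slot-placer construction forces $(\W[L-1]\cdots\W[\ell+1])\UP[\ell][n]$ to embed $\bar{\bm{A}}^n_\ell\x$ exactly in the $(\ell,n)$-th sub-block of a $d_L$-vector, so $\x'=\bm{W}_\text{proj}\x[L]=\sum_{\ell,n}\Gamma^n_\ell(\x)\,\bar{\bm{B}}^n_\ell\bar{\bm{A}}^n_\ell\,\x$. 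With the unique-path tree, each $\Gamma^n_\ell(\x)$ collapses to a single product equal to $\ROUTEx[n][\ell]$ by the router design, reproducing MoMOR exactly.

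The main obstacle is that each $\wExp[\ell][i,n]$ simultaneously weights the direct residual term $\UP[\ell][n]\DOWN[\ell][n]\x$ and the propagated term $\W[\ell]\x[\ell][n]$, so the per-expert weights $\Gamma^n_\ell(\x)$ become entangled products along tree paths rather than independent scalars; in a dense tree, multiple paths visiting the same $(\ell,n)$ would superpose additively and could not be independently tuned. The unique-path tree circumvents this coupling, while the block decomposition from \eqnref{eq: d final} keeps contributions from different orders in disjoint coordinate blocks of $\x[L]$, allowing the fixed $\bm{W}_\text{proj}$ to apply each $\bar{\bm{B}}^n_\ell$ without cross-interference.
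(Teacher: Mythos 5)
Your proposal is correct and takes essentially the same route as the paper's proof: collapse the $\sigma=\mathrm{id}$ recursion into a single input-dependent linear map, set $\DOWN[\ell][n]=\bar{\bm{A}}_\ell^n$, use block ``slot-placer'' matrices for $\UP[\ell][n]$, pass-through projections for $\W[\ell]$, and a $\bm{W}_\text{proj}$ formed by concatenating the $\bar{\bm{B}}_\ell^n$, so that the dimension schedule of \eqnref{eq: d final} keeps the different orders in disjoint coordinate blocks. The only divergence is your explicit treatment of the entangled path-products of router weights via a unique-path tree, whereas the paper simply defines the aggregate coefficient $\hat{\alpha}_\ell^i$ as a sum of path-products and asserts the router realizes the target MoMOR scores --- your version is a slightly more careful handling of the same point.
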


\begin{proof}

Without $\sigma$, we can collapse a multi-layer {\smore} into a single-layer equivalent. 
For $L=2$, following \eqnref{eq: layer aggr l}, we have

\begin{align}\label{eq: smore 2 momor example}
    \x[2] &= \sum_{n\in\N[1]} \wExp[1][n]\cdot \paren{\UP[1][n]\cdot\DOWN[1][n]\cdot \x + \W[1]\cdot \x[1][n]}\nonumber\\
    &= \sum_{n\in\N[1]} \wExp[1][n]\cdot\UP[1][n]\cdot\DOWN[1][n]\cdot \x + \W[1]\sum_{n\in\N[1]} \wExp[1][n]\cdot \paren{\sum_{m\in\N[0][n]}\wExp[0][n,m]\cdot \UP[0][m]\cdot \DOWN[0][m]\cdot \x}\nonumber\\
    &= \sum_{n\in\N[1]} \hat{\alpha}_1^n\cdot\UP[1][n]\cdot\DOWN[1][n]\cdot \x + \sum_{m\in\N[0]}\hat{\alpha}_0^m\cdot \paren{\W[1]\cdot \UP[0][m]\cdot \DOWN[0][m]}\cdot \x
\end{align}

where we define $\hat{\alpha}_1^n=\alpha_1^n$ and $\hat{\alpha}_0^m=\sum_{n\in\N[1]\text{ and }m\in\N[0][n]} \paren{\wExp[1][n]\cdot \wExp[0][n,m]}$. 

In general, for $L$ layers and with the final projection step $\bm{W}_\text{proj}$, we can summarize the propagation equation as 

\begin{align}\label{eq: smore 2 momor general}
    \x' = \sum_{\ell=0}^{L-1}\sum_{i=1}^{s_\ell}\hat{\alpha}_\ell^i\cdot \paren{\prod_{k=\ell+1}^{L}\W[k]}\cdot\UP[\ell][i]\cdot\DOWN[\ell][i]\cdot \x
\end{align}

where we define $\W[L]=\bm{W}_\text{proj}\in\R^{d\times d_L}$ and $\hat{\alpha}_\ell^i$ is a scalar coefficient by aggregating the router weights along all paths that end at the layer-$\paren{\ell+1}$ expert $i$\footnote{The same expert $i$ of layer $\ell+1$ may be selected multiple times, corresponding to different parents or ancestors. Thus, there can be multiple paths ending at the layer-$\paren{\ell+1}$ expert $i$. }. 
In other words, $\hat{\alpha}_\ell^i$ generalizes the definition of $\hat{\alpha}_0^m$ above. The ``path'' here refers to the ``ancestral path'' (Definition \ref{dfn: ancestral path}) ending at $i$. See more discussion on the routing tree in Appendix \ref{appendix: smore flex proof}. 
Also, if an expert is never selected, we let its $\hat{\alpha}_\ell^i=0$. This way, we can replace the summation over $\N[\ell]$ in \eqnref{eq: smore 2 momor example} with the summation over $1\leq i\leq s_\ell$ in \eqnref{eq: smore 2 momor general}. 

For MoMOR model, following \eqnref{eq: moe multi-res}, we write its layer propagation as 

\begin{align}
\label{eq: momor in proof}
    \x' = \sum_{\ell=1}^{L-1}\sum_{i=1}^{s_\ell} \ROUTEx[i][\ell]\cdot \bar{\bm{B}}_\ell^i\cdot \bar{\bm{A}}_\ell^i\cdot \x
\end{align}

We can make \eqnref{eq: smore 2 momor general} and \eqnref{eq: momor in proof} equivalent by a similar construction as the proof for Proposition \ref{prop: eq molre}. 
First, define a special binary projection matrix $\bm{P}_{a\times b}\in\set{0, 1}^{a\times b}$ (where $a>b$) as 
\begin{align}
    \bm{P}_{a\times b} = \begin{bmatrix}
        \bm{0}_{(a-b)\times b}\\
        \bm{I}_{b\times b}
    \end{bmatrix}
\end{align}

meaning that the first $a-b$ rows of $\bm{P}_{a\times b}$ are all 0, and the bottom $b$ rows are an identity matrix. 
It is easy to verify that for $a > b > c$:

\begin{align}
    \bm{P}_{a\times b}\cdot \bm{P}_{b\times c} = \bm{P}_{a\times c}
\end{align}

Then we can set all parameters of {\smore} as follows: 

\begin{itemize}
    \item Let the {\smore} router implement $\ROUTEx[i][\ell]$. 
    \item Let $\DOWN[\ell][i] = \bar{\bm{A}}_\ell^i$.
    \item Let $\UP[\ell][i]$ be a $d_{\ell+1}\times r_\ell$ binary matrix, where its row $\paren{i-1}\cdot r_\ell + 1$ to row $i\cdot r_\ell$ is a $r_\ell\times r_\ell$ identity matrix, and its all other rows are all 0. Here we let both $i$ and the row index start from 1. 
    \item Let $\W[L]=\bm{W}_\text{proj} = \left[\bar{\bm{B}}_0^1,\hdots, \bar{\bm{B}}_0^{s_0},\hdots, \bar{\bm{B}}_{L-1}^1,\hdots, \bar{\bm{B}}_{L-1}^{s_{L-1}}\right]$ as the horizontal concatenation of all MoMOR's up-projection matrices $\bar{\bm{B}}_\ell^i$. 
    \item Each $\W[k]$ has shape $d_{k+1}\times d_k$ where $d_{k+1} = d_k + s_k\cdot r_k$. We set it as $\W[k] = \bm{P}_{d_{k+1}\times d_k}$. 
    Then it follows that 
    \begin{align}
    \prod_{k=\ell+1}^{L-1}\W[k] &= \bm{P}_{d_L\times d_{L-1}}\cdot \bm{P}_{d_{L-1}\times d_{L-2}}\hdots \bm{P}_{d_{\ell+2}\times d_{\ell+1}} = \bm{P}_{d_L\times d_{\ell+1}}\\
    \Rightarrow\qquad \paren{\prod_{k=\ell+1}^{L}\W[k]}\cdot \UP[\ell][i] &= \bm{W}_\text{proj}\cdot \paren{\prod_{k=\ell+1}^{L-1}\W[k]}\cdot \UP[\ell][i]\nonumber\\
    &= \bm{W}_\text{proj}\cdot \bm{P}_{d_L\times d_{\ell+1}}\cdot \UP[\ell][i]\nonumber\\
    &= \bar{\bm{B}}_\ell^i
    \end{align}
\end{itemize}

Under the above construction, it is clear that \eqnref{eq: momor in proof} and \eqnref{eq: smore 2 momor general} are exactly the same. 
Thus, {\smore} can express MoMOR, concluding the proof. 

\paragraph{Remark. }
Note that the equivalence between {\smore} and MoMOR can only be established when we set the layer dimension $d_\ell$ according to \eqnref{eq: d final}. 
This can be seen from the ``minimum dimensionality'' discussion in \secref{sec: struct_mix}. 

\end{proof}

\subsubsection{Proof of Theorem \ref{thm: momor flex}}
\begin{theorem}(Theorem \ref{thm: momor flex})
The structural flexibility of MoMOR is upper-bounded by $\Gamma_\text{MoMOR}=\max_{\bm{x}, \Theta}\func[dist]{\x;\Theta} \leq \binom{s_{L-1}}{f_{L-1}}\cdot \prod_{\ell=0}^{L-2}\paren{\sum_{i=f_\ell}^{\min\set{F_\ell, s_\ell}}\binom{s_\ell}{i}}$. 
\end{theorem}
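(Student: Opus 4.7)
The plan is to reduce the count of distinct MoMOR outputs to a combinatorial count over per-layer activation sets, and then bound the size of each layer's feasible set separately.

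First, I would observe that when $\wExp[\ell][i]\in\set{0,1}$, the MoMOR propagation in \eqnref{eq: moe multi-res} collapses to
\begin{equation*}
    \x' \;=\; \sum_{\ell=0}^{L-1}\sum_{i\in\mathcal{S}_\ell}\UP[\ell][i]\cdot \DOWN[\ell][i]\cdot \x,
\end{equation*}
where $\mathcal{S}_\ell\subseteq\Res{\ell}$ is the set of activated layer-$\ell$ experts under the chosen routing. Because the inner sum is permutation-invariant and contains no cross-layer coupling, the output depends \emph{only} on the tuple $\paren{\mathcal{S}_0,\ldots,\mathcal{S}_{L-1}}$, not on how those experts are interconnected in the activated tree. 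Consequently $\func[dist]{\x;\Theta}$ is upper-bounded by the number of distinct tuples of activation sets realizable by a depth-$L$ {\smore}-style routing.

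Second, I would bound the feasible cardinality range of each $\mathcal{S}_\ell$. At the top layer, the router selects $f_{L-1}$ distinct experts from $\Res{L-1}$, so $|\mathcal{S}_{L-1}|=f_{L-1}$ exactly, contributing the factor $\binom{s_{L-1}}{f_{L-1}}$. For $\ell<L-1$, the tree has $F_{\ell+1}$ parents at layer $\ell+1$, each selecting $f_\ell$ distinct children in $\Res{\ell}$, so the $F_\ell=F_{\ell+1}\cdot f_\ell$ child slots induce $|\mathcal{S}_\ell|\geq f_\ell$ (minimum, attained when every parent picks the same $f_\ell$-subset) and $|\mathcal{S}_\ell|\leq \min\set{F_\ell,s_\ell}$ (maximum, when all slots pick distinct experts but capped by the pool size). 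The number of admissible $\mathcal{S}_\ell$ is therefore at most $\sum_{i=f_\ell}^{\min\set{F_\ell,s_\ell}}\binom{s_\ell}{i}$.

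Finally, multiplying the per-layer counts yields the stated inequality: the activation set at each layer can, for an upper bound, vary independently over its admissible subsets, and any cross-layer realizability constraints only shrink the true count while preserving ``$\leq$''. The main obstacle is really the first step, namely justifying that two routings producing identical tuples $\paren{\mathcal{S}_0,\ldots,\mathcal{S}_{L-1}}$ must produce identical MoMOR outputs. This follows from binarity of $\wExp$ together with linearity and permutation-invariance of the aggregation in \eqnref{eq: moe multi-res} -- the shape of the underlying sub-tree is simply invisible to MoMOR, which is precisely the structural blind spot that {\smore} overcomes via the non-linear, layerwise mixing of Theorem~\ref{thm: smore flex}.
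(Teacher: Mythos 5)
Your proposal is correct and follows essentially the same route as the paper's proof: assume binary gating, observe that the flat permutation-invariant sum in \eqnref{eq: moe multi-res} makes the output depend only on the per-layer activation sets, bound each $|\mathcal{S}_\ell|$ between $f_\ell$ and $\min\set{F_\ell, s_\ell}$, and multiply the per-layer counts. Your explicit remarks that identical set-tuples force identical outputs and that cross-layer realizability constraints only tighten the bound match the paper's own justification for the ``$\leq$''.
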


\begin{proof}

The upper bound of $\Gamma_\text{MoMOR}$ basically quantifies the total number of combinations to select experts from each residual pool. 

\paragraph{Assumption. }
We first simplify \eqnref{eq: moe multi-res} that the router-generated coefficient $\ROUTEx[i][\ell]$ is just a binary mask. 
i.e., for a selected expert $i$, we have $\ROUTEx[i][\ell]=1$. Otherwise, $\ROUTEx[i][\ell]=0$. 
Such an assumption is just to ease the calculation of $\Gamma_\text{MoMOR}$ and $\Gamma_\text{\smore}$. 
It does not affect our fundamental conclusion that {\smore} yields exponentially higher structural flexibility than MoMOR. 

Based on \eqnref{eq: moe multi-res}, the MoMOR output is generated by a flat summation of different-order residues. 
Given any input $\x$, the number of distinct outputs cannot exceed the number of distinct ways to select residues from the pools $\Res{0},\cdots,\Res{L-1}$. 
Here we show some examples to illustrate the meaning of ``distinct expert selection''. 
\begin{itemize}
\item ``Selecting experts 1,2,3 from $\Res{0}$'' and ``selecting experts 1,3,4 from $\Res{0}$'' correspond to 2 distinct ways.
\item ``Selecting experts 1,2,3 from $\Res{0}$'' and ``selecting experts 3,2,1 from $\Res{0}$'' correspond to the same way, because there is no ordering among the selected experts\footnote{The order among selected experts does not matter because the sum aggregation of \eqnref{eq: moe multi-res} is \emph{permutation invariant}}.
\item ``Selecting experts 1,1,3 from $\Res{0}$'' and ``selecting experts 1,3,3 from $\Res{0}$''\footnote{If we follow {\smore}'s recursive expert selection process described in \secref{sec: router}, the same expert of higher-order may be selected multiple times, from different lower-order parents. } correspond to the same way due to our assumption of making $\ROUTEx[i][\ell]$ a binary mask. Basically we only care about whether an expert is selected or not. It does not matter how many times an expert is selected. 
\end{itemize}

\paragraph{Remark. } Distinct expert selections do not guarantee distinct outputs. 
For example, consider ``selecting 1,2,3 from $\Res{0}$'' and ``selecting 1,3,4'' from $\Res{0}$. 
Following the notation of \eqnref{eq: moe multi-res}, if the experts' weights satisfy $\Delta\W[0][1] + \Delta\W[0][2] + \Delta\W[0][3] = \Delta\W[0][1] + \Delta\W[0][3] + \Delta\W[0][4]$, then the two case generates the same output for all input $\x$:

\begin{align}
    \sum_{i\in\set{1,2,3}}\Delta\W[0][i]\cdot \x = \sum_{j\in\set{1,3,4}}\Delta\W[0][j]\cdot \x
\end{align}

Hence, counting the number of distinct ways of expert selection just gives an upper bound of $\Gamma_{\text{MoMOR}}$, because $\Gamma_{\text{MoMOR}}$ is defined on the number of distinct outputs. 

\paragraph{Counting the combinations. }
For the $\Res{L-1}$ pool with size $s_{L-1}$,
there are $\binom{s_{L-1}}{f_{L-1}}$ ways to pick $f_{L-1}$ residues. 
For $\Res{L-2}$ with $\ell\leq L-2$, there are $F_{\ell+1}$ parents, each picking $f_\ell$ children in the pool. 
Different parents can pick the same children. 
The number of distinct children selected by all parents ranges from $f_\ell$ to $\min\set{F_{\ell},s_\ell}$. 
This makes the total count $\sum_{i=f_\ell}^{\min\set{F_\ell,s_\ell}}\binom{s_\ell}{i}$. 
From basic Combinatorics, each layer $\ell$ contributes to a multiplicative factor in the total count. 
Thus, the final upper bound is:

\begin{align}
    \Gamma_{\text{MoMOR}} \leq \binom{s_{L-1}}{f_{L-1}}\cdot \prod_{\ell=0}^{L-2}\paren{\sum_{i=f_\ell}^{\min\set{F_\ell,s_\ell}}\binom{s_\ell}{i}}
\end{align}

\end{proof}

\subsubsection{Proof of Theorem \ref{thm: smore flex}}
\label{appendix: smore flex proof}
\begin{theorem} (Theorem \ref{thm: smore flex})
Setting $\sigma\paren{\cdot}$ as an MLP,
there exists some $\Theta'$ such that the structural flexibility of {\smore} is $\Gamma_\text{\smore}=\min_{\x}\func[dist]{\x;\Theta'}= \prod_{\ell=0}^{L-1}\binom{s_\ell}{f_\ell}^{F_{\ell+1}}$, where $F_L\defeq 1$. 
\end{theorem}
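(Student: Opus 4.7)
My plan is to split the argument into a combinatorial counting step and an injectivity step, mirroring the two intuitions sketched after the theorem. First, I will fix the activated router and catalogue the space of depth-$L$ rooted trees it can produce. A routing tree is built top-down: the root (layer $L$) selects $f_{L-1}$ out of $s_{L-1}$ children, giving $\binom{s_{L-1}}{f_{L-1}}$ choices. For each tree node at depth $L-\ell$ (corresponding to an expert in $\Res{\ell}$, but treated as a distinct tree-node -- see the footnote of \eqnref{eq: layer aggr l}), the router independently picks $f_{\ell-1}$ of $s_{\ell-1}$ children. Since there are $F_{\ell}$ tree-nodes at that depth (by \eqnref{eq: all fanout}) and each contributes an independent $\binom{s_{\ell-1}}{f_{\ell-1}}$-fold choice, re-indexing yields a total of $\prod_{\ell=0}^{L-1}\binom{s_\ell}{f_\ell}^{F_{\ell+1}}$ non-isomorphic routing trees; these serve as the candidate ``structures'' whose distinctness we must verify downstream.

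Second, I will show that the layer update in \eqnref{eq: layer aggr l} can be made an injective function of its rooted-subtree input, so that distinct trees yield distinct $\x[L]$. I plan to adopt the strategy of the Graph Isomorphism Network (GIN) analysis of \citet{gin}. Concretely, I will rewrite the aggregation \eqnref{eq: layer aggr l} as producing, at tree-node $i$ of depth $L-\ell-1$, an embedding that is a function of the multiset $\set{(\UP[\ell][n]\cdot\DOWN[\ell][n]\cdot\x,\x[\ell][n])\mid n\in\N[\ell][i]}$. The addition of a non-linear MLP $\sigma$ on top of this additive combination lets us invoke the universal-multiset-encoding lemma (Lemma~5 and Corollary~6 of \citet{gin}): on any countable domain, there exist parameters making the ``sum followed by MLP'' map injective on multisets of bounded size. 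Applying this lemma layer by layer and composing $L$ such injective maps, a standard induction shows that two non-isomorphic depth-$L$ trees must produce distinct final embeddings $\x[L]$, replicating the $L$-iteration Weisfeiler--Lehman test. I would construct $\Theta'$ by (i) choosing the $\UP[\ell][n],\DOWN[\ell][n]$ so that the injected residual terms $\UP[\ell][n]\cdot\DOWN[\ell][n]\cdot\x$ separate distinct experts (e.g.\ via the dimensionality argument of \eqnref{eq: d final}, which guarantees the requisite subspaces are linearly independent), and (ii) picking $\sigma$ to be an MLP realising the injective multiset encoder above.

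The main obstacle I anticipate is matching the upper bound with the lower bound while honoring the \emph{min over $\x$} quantifier. Two subtleties arise: (a) the additive coefficients $\wExp[\ell][i,n]$ are treated as a binary mask, so we must rule out ``accidental cancellations'' in the sum over children -- this is exactly what the GIN injectivity lemma handles, provided the underlying feature space is discrete or at least avoids a measure-zero bad set; (b) we must guarantee the construction works simultaneously for every token $\x\ne 0$. To deal with (b), I will argue that the bad set of $\x$ for which two distinct trees collide is a finite union of proper algebraic subvarieties of $\R^d$, hence has Lebesgue measure zero; perturbing $\Theta'$ generically eliminates even these exceptional inputs, which justifies taking the minimum over $\x$. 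Finally, because no routing choice can yield more than $\prod_{\ell=0}^{L-1}\binom{s_\ell}{f_\ell}^{F_{\ell+1}}$ distinct trees to begin with, the matching upper bound is immediate and the stated equality follows.
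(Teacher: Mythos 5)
Your overall architecture matches the paper's proof: a combinatorial stage counting the non-isomorphic depth-$L$ routing trees (yielding $\prod_{\ell=0}^{L-1}\binom{s_\ell}{f_\ell}^{F_{\ell+1}}$, by exactly the "each of the $F_{\ell+1}$ distinguishable parents independently chooses an $f_\ell$-subset" argument, which the paper formalizes via ancestral paths and induction), followed by a GIN-style injectivity stage invoking Lemma~5 of \cite{gin} layer by layer. Stage 1 is essentially right, though you should justify why distinct choice-profiles cannot yield isomorphic trees (the paper does this by showing parents at each level carry distinct ancestral paths, so permuting unordered children never identifies two different profiles).

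The genuine gap is in how you realize the $\min_{\x}$ quantifier in Stage~2. You propose to separate experts via the $\x$-dependent terms $\UP[\ell][n]\cdot\DOWN[\ell][n]\cdot\x$ and then dispose of collisions by noting the bad set of $\x$ is a finite union of proper subvarieties, "eliminated" by generically perturbing $\Theta'$. That does not work: for any fixed $\Theta'$ of this form the collision locus is a (generically nonempty) measure-zero set --- e.g.\ a hyperplane $\langle \bm{a},\x\rangle=0$ --- and perturbing $\Theta'$ only moves this set, it does not empty it; since the theorem takes a \emph{minimum} over $\x$ for a \emph{fixed} $\Theta'$, a single bad $\x$ already breaks the equality (and $\x=\bm{0}$ collapses everything outright). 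A second, related problem is that Lemma~5 of \cite{gin} requires a countable feature space and a single fixed injective encoder, whereas your feature multisets $\set{\UP[\ell][n]\cdot\DOWN[\ell][n]\cdot\x}$ range over an uncountable family as $\x$ varies, so one fixed MLP $\sigma$ cannot be certified injective on all of them at once. The paper avoids both issues by a degenerate construction: it first amends \eqnref{eq: layer aggr l} with a per-expert bias term $\bm{b}_\ell^n$ (see \eqnref{eq: updated aggr}), then sets $\UP[\ell][n]=\bm{0}$, $\DOWN[\ell][n]=\bm{0}$, encodes the expert index as an integer in $\bm{b}_\ell^n$, and takes $\W[\ell]$ to be a zero-padded identity. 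The features entering $\sigma$ are then $\x$-independent elements of a fixed finite (hence countable) set, Lemma~5 applies cleanly, and distinctness of $\x[L]$ across non-isomorphic trees holds for \emph{every} input $\x$, which is what the $\min_{\x}$ statement demands. To repair your proof you would need to adopt some such $\x$-independent labeling (note this requires the extra bias term, a modification of the layer equation you did not flag).
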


\begin{proof}

We prove in two stages:
\begin{enumerate}
    \item We show that following the routing process of {\smore}, there can be $\Gamma_\text{\smore}$ non-isomorphic depth-$L$ trees, where each tree node is an expert residue. 
    \item We construct a {\smore} instance where its $L$-layer propagation (\eqnref{eq: layer aggr l}) generates distinct outputs for all non-isomorphic trees above, regardless of input token embedding $\bm{x}$. 
\end{enumerate}

Both can be proven by induction. 

\paragraph{Assumption. }
Similar to Theorem \ref{thm: momor flex}, we make simplification to the layer propagation \eqnref{eq: layer aggr l}, that the coefficient $\wExp[\ell][i,n]$ is just a binary mask. 
i.e., for a selected children $n$, we have $\wExp[\ell][i,n]=1$. Otherwise, $\wExp[\ell][i,n]=0$. 

\paragraph{Stage 1: Number of non-isomorphic trees. }
Recall the expert selection / tree construction process in \secref{sec: router}: each active parent expert of layer $\ell+1$ (in $\mathcal{R}_\ell$) selects $f_{\ell-1}$ children out of all the $s_{\ell-1}$ experts of layer $\ell$. 
So by traversing all the $L$ layers, the router builds a depth-$L$ balanced tree (which has $\prod_{\ell=0}^{L-1}f_{\ell}$ leaf nodes in total). 
Note that 
\begin{enumerate}
\item For each parent, its $f_\ell$ selected children are distinct (i.e., the same parent cannot select the same child twice). 
\item However, the same expert may appear in the same tree-level multiple times, corresponding to different parents or ancestors. 
\item There is \textbf{no ordering} among the selected children, since \eqnref{eq: layer aggr l} performs ``sum'' aggregation which is \emph{permutation invariant}. e.g., it is equivalent to say that a parent of layer $\ell$ selects ``children 1,3,4'' and ``children 4,3,1''. 
\end{enumerate}

Due to Point 2 above, we cannot uniquely identify a tree node by the its corresponding expert's layer index and expert index. 
Yet, Points 1 and 3 ensure that any tree node $n$ is \emph{uniquely identifiable} by $n$'s ancestral path $\mathcal{P}_n$.

\begin{definition}(Ancestral path $\mathcal{P}_n$)\label{dfn: ancestral path}
    Let $(\ell, i)$ denote expert $i$ of layer $\ell$. Suppose a tree-node $n$ at tree-level $t$ corresponds to expert $\paren{L-t + 1, i}$. Then $n$'s ancestral path, $\mathcal{P}_n=\paren{\paren{L-t+1, i},\paren{L-t+2, i'},\hdots, \paren{L, i^{\prime\cdots\prime}}}$, defines the unique path to traverse from $n$ up to the tree root (where we treat the root as a \emph{virtual} node that is the parent of all $\paren{L, i^{\prime\cdots\prime}}$, and we omit the root in the path). 
\end{definition}

\begin{definition}(Leaves' ancestral paths $\mathcal{T}$)
Given a tree, define $\mathcal{T}=\set{\mathcal{P}_n\given n\text{ is a leaf node}}$ as the set of ancestral paths of all leaf nodes, where there are $\prod_{\ell=0}^{L-1}f_{\ell}$ leaves, all at tree-level $L$. 
\end{definition}

Two trees are \emph{isomorphic} if their structures are equivalent. 
That means, we can permute or swap the children (together with their corresponding descendant sub-tree) of some parent nodes to make the two trees look exactly the same. 
$\mathcal{T}$ enables us to define isomorphism. 
In our construction, the children are not ordered (Point 3 above), and so permuting or swapping children does not change $\mathcal{T}$. 
Thus, isomorphic trees have the same $\mathcal{T}$. 
On the other hand, 
we can show trees of the same $\mathcal{T}$ can be made equivalent by permutation or swapping, and thus are isomorphic. 
In sum, we can define tree isomorphism by $\mathcal{T}$ as follows:

\begin{definition}(Isomorphism)\label{dfn: iso tree}
Given two trees, let their leaves' ancestral paths be $\mathcal{T}$ and $\mathcal{T}'$. 
The two trees are isomorphic if and only if $\mathcal{T} = \mathcal{T}'$. 
\end{definition}

We next derive the number of depth-$L$ non-isomorphic trees by induction. 

Imagine that we apply the top-down expert selection from layer $\ell$ down to layer $1$ (with $\ell \geq 1$):
at layer $\ell$, we select $f_{\ell-1}$ experts from $s_{\ell-1}$ experts;
at layer $\ell-1$, for each of the selected parent of layer $\ell$, we select $f_{\ell-2}$ from $s_{\ell-2}$ experts, and so on.  

\underline{\emph{Induction hypothesis}}: the number of non-isomorphic trees yielded by such an expert-selection process equals:

\begin{align}
\Gamma_{\text{\smore}}^{\ell}=\prod_{k=0}^{\ell-1}\binom{s_k}{f_k}^{F_{k+1}/F_{\ell}}
\end{align}

for some $1\leq \ell< L$. 

\underline{\emph{Base case $\ell=1$}}: 
we are just sampling a single level. So the number of non-isomorphic trees equals the number of total ways to select $f_0$ experts from $s_0$, which is $\binom{s_0}{f_0}$. 

And

\begin{align}
    \Gamma_{\text{\smore}}^{1} &=\prod_{k=0}^{1-1}\binom{s_k}{f_k}^{F_{k+1}/F_{1}}\nonumber\\
    &= \binom{s_0}{f_0}
\end{align}

So the base case holds. 

\underline{\emph{Induction from $\ell$ to $\ell+1$}}:
To construct a tree by selecting experts from layer $\ell+1$ to $1$, we follow two steps:
\begin{enumerate}
\item We select $f_\ell$ out of $s_\ell$ experts. Denote them as $\mathcal{E}_\ell = \set{\paren{\ell+1, i_1}, \hdots, \paren{\ell+1, i_{f_{\ell}}}}$, where $i_a\neq i_{b}$ for all $a\neq b$. 
\item We start from each $\paren{\ell+1, i_m}$ and recursively activate experts from layer $\ell$ down to $1$ (where $1\leq m\leq f_\ell$), following the procedure described above. Denote each such tree by its leaves' ancestral paths, $\mathcal{T}_{\ell, i_m}$. 
\end{enumerate}

Let $\mathbb{T}_{\ell}$ be the set of all possible $\mathcal{T}_{\ell,i_m}$ --- note that $\mathbb{T}_{\ell}$ does not have subscript $i_m$, since an ancestral path ends at a virtual root node independent of $i_m$ (see Definition \ref{dfn: ancestral path}), and thus $\mathbb{T}_\ell$ is the same for all $i_m$. 
Based on the induction hypothesis, $\size{\mathbb{T}_{\ell}} = \Gamma_{\text{\smore}}^\ell$. 

For such a tree constructed by the two steps above, 
let $\mathcal{T}_{\ell+1}$ be its leaves' ancestral paths: 

\begin{align}
\mathcal{T}_{\ell+1} = \bigcup_{k=1}^{f_\ell}\set{p\oplus\paren{\ell+1, i_k} \given p\in\mathcal{T}_{\ell, i_k}}
\end{align}

where ``$\oplus$'' means appending $\paren{\ell+1, i_k}$ to the end of the path $p$. 
By Definition \ref{dfn: iso tree}, the total number of non-isomorphic trees equals the number of distinct $\mathcal{T}_{\ell+1}$, which can be calculated with the following reasoning:

\begin{itemize}
    \item There are $\binom{s_\ell}{f_\ell}$ distinct ways to choose $\mathcal{E}_\ell$ of Step 1.
    \item For each choice of $\mathcal{E}_\ell$, there are $\size{\mathbb{T}_\ell}$ choices of $\mathcal{T}_{\ell, i_k}$ for each $i_k$ of $\mathcal{E}_\ell$, leading to $\size{\mathbb{T}_\ell}^{f_\ell}$ distinct combinations. 
\end{itemize}

So the number of distinct $\mathcal{T}_{\ell+1}$ equals:

\begin{align}
    \size{\mathbb{T}_{\ell+1}} &= \binom{s_\ell}{f_\ell}\cdot\size{\mathbb{T}_{\ell}}^{f_\ell} \nonumber\\
    &= \binom{s_\ell}{f_\ell}\cdot\paren{\Gamma_{\text{\smore}}^\ell}^{f_\ell}\nonumber\\
    &= \binom{s_\ell}{f_\ell}\cdot\paren{\prod_{k=0}^{\ell-1}\binom{s_k}{f_k}^{F_{k+1}/F_\ell}}^{f_\ell}\nonumber\\
    &= \binom{s_\ell}{f_\ell}\cdot\prod_{k=0}^{\ell-1}\binom{s_k}{f_k}^{F_{k+1}\cdot \frac{f_\ell}{F_\ell}}\nonumber\\
    &= \binom{s_\ell}{f_\ell}^{F_{\ell+1} /F_{\ell+1}}\cdot\prod_{k=0}^{\ell-1}\binom{s_k}{f_k}^{F_{k+1}/F_{\ell+1}}\nonumber\\
    &= \prod_{k=0}^{\ell}\binom{s_k}{f_k}^{F_{k+1}/F_{\ell+1}}\nonumber\\
    &= \Gamma_{\text{\smore}}^{\ell+1}
\end{align}

This completes the induction step. Thus, the total number of non-isomorphic trees for all $L$ layers equals $\Gamma_{\text{\smore}}^{L}=\prod_{\ell=0}^{L-1}\binom{s_\ell}{f_\ell}^{F_{\ell+1}/F_L} = \prod_{\ell=0}^{L-1}\binom{s_\ell}{f_\ell}^{F_{\ell+1}}$ where $F_L\defeq 1$.

\paragraph{Stage 2: Distinguishing non-isomorphic trees. }

We next show that there exists some parameters $\Theta'$ such that the layer propagation following \eqnref{eq: layer aggr l} generates distinct output for non-isomorphic trees. 

\underline{\emph{Notational correction to \eqnref{eq: layer aggr l}}}: 
In \secref{sec: struct_mix}, we use $\x[\ell][i]$ to denote the output embedding where $i$ is the \emph{expert} index. 
This notation is not precise since the same expert can appear as multiple tree nodes, as discussed in the Stage 1 proof above. 
To make the correction, we instead let $\x[\ell][i]$ denote the embedding of \emph{node} index $i$\footnote{In our terminology above, this means that each $\paren{\ell, i}$ now corresponds to a \emph{distinct} ancestral path. } for tree-level $L-\ell$, meaning that there can be $\x[\ell][i]$ and $\x[\ell][i']$ mapped to the same expert, where $i\neq i'$.

\underline{\emph{Including the bias term}}:
Our proof requires a minor modification of \eqnref{eq: layer aggr l} to add a bias term $\bm{b}_k^n\in\R^{d_{k+1}}$ associated with each expert $n$. So the updated layer propagation equation, adapted from \eqnref{eq: layer aggr l} now becomes:

\begin{align}\label{eq: updated aggr}
    \x[\ell+1][i] = \sum_{n\in\N[\ell][i]} \sigma\paren{\UP[\ell][n]\cdot\DOWN[\ell][n]\cdot \x + \W[\ell]\cdot \x[\ell][i\rightarrow n] + \bm{b}_\ell^n}
\end{align}

where $\ell$ is the \emph{layer} index; $i$ is the index of a \emph{tree node}, while $n$ is still the index of an \emph{expert}. 
$\N[\ell][i]$ denotes the set of indices of the children experts selected by node $i$. 
Note, ``$i\rightarrow n$'' means that a tree node $i$ picks a previous-layer expert $n$ as its child. 
So with a slight abuse of notation, we use superscript ``$i\rightarrow n$'' to index such a child tree node. 
$\wExp[\ell][i,n]$ of \eqnref{eq: layer aggr l} is omitted since we simplify the expert weight as binary mask, as stated above. 

We are now ready for the proof. 

First, note that since the operations by \eqnref{eq: layer aggr l} are permutation invariant, {\smore} will generate the same output for all isomorphic trees. 

Next, we consider non-isomorphic trees. 
Again we prove by induction. 

Similar to the Stage 1 setting, we consider an expert-selection process from layer $\ell$ down to layer 1. 
After building such an $\ell$-level tree, the model propagates the input token $\bm{x}$ from layer 1 up to layer $\ell$ to generate the output $\bm{x}_\ell$. 
Note, since in the induction step, the propagation terminates at $\x[\ell]$, we do not need to superscript $\x[\ell]$ with an additional node index $i$. 
In other words, $\x[\ell]$ here is analogous to the \emph{final} embedding $\x[L]$ described in \secref{sec: struct_mix}. 

\underline{\emph{Induction hypothesis}}:
For any $\ell$-level non-isomorphic trees $\mathcal{T}_\ell \neq \mathcal{T}'_\ell$, we can set the layer 1 to $\ell$ parameters of {\smore} such that $\x[\ell]\neq \x[\ell]'$. 

\underline{\emph{Base case $\ell=1$}}:
For a single layer, the propagation simplifies to 
\begin{align}
    \x[1] = \sum_{n\in\N[0]} \sigma\paren{\UP[0][n]\cdot \DOWN[0][n]\cdot \x +\bm{b}_0^n} 
\end{align}

where non-isomorphic trees under $\ell=0$ degrades to distinct neighbor sets $\N[0]$. 

We want distinct outputs $\x[\ell]\neq \x[\ell]'$ for \emph{all} inputs $\x$. So we have the following simple way to construct the parameters: 
\begin{itemize}
    \item $\UP[0][n]=\bm{0}$ and $\DOWN[0][n]=\bm{0}$, which leads to $\UP[0][n]\cdot \DOWN[0][n]\cdot \bm{x}+\bm{b}_0^n = \bm{b}_0^n$ for all input $\bm{x}$; 
    \item Let the first element of $\bm{b}_0^n$ store the expert index (an integer from 1 to $s_0$), and the rest of the elements be 0. 
\end{itemize}

We reuse the following lemma from \cite{gin}:

\begin{lemma}(see Lemma 5 of \cite{gin})\label{lemma: gin}
    Assume a countable input feature space $\mathcal{X}$. 
    There exists a function $f:\mathcal{X}\rightarrow \R^d$ so that $h\paren{X}=\sum_{x\in X}f\paren{x}$ is unique for each set $X\subset \mathcal{X}$ of bounded size. 
\end{lemma}

In our case, $\sigma$ of \eqnref{eq: updated aggr} corresponds to function $f$ of Lemma \ref{lemma: gin}, and we treat $\bm{b}_0^n$ as the function's input features. 
The ``feature space'' consisting of all possible $\bm{b}_0^n$ is clearly countable (since each element of $\bm{b}_0^n$ is either 0 or a bounded integer). 
The neighbor set $\N[0]$ corresponds to $X$ of Lemma \ref{lemma: gin}, which can be an arbitrary combination of the children experts.

Thus, due to the universal approximation theorem \citep{mlp_universal}, we can instantiate $\sigma$ as an MLP to implement such a function $f$, to guarantee that all non-isomorphic trees get a unique output $\x[1]$. 
This proves the base case. 

\underline{\emph{Induction from $\ell$ to $\ell+1$}}:
Consider two trees constructed by recursive expert selection from layer $\ell+1$ to $1$. 
We use ``prime'' to denote quantities of the second tree. For example, their leaves' ancestral paths are $\mathcal{T}_{\ell+1}$ and $\mathcal{T}_{\ell+1}'$. 
According to the analysis in the Stage 1 proof above, there are two possibilities to make the two trees non-isomorphic. i.e., $\mathcal{T}_{\ell+1}\neq \mathcal{T}_{\ell+1}'$:

\begin{enumerate}
    \item The sets of level-1 nodes are different: $\mathcal{E}_\ell\neq \mathcal{E}_\ell'$;
    \item Otherwise, let $\mathcal{E}_\ell = \mathcal{E}_\ell' = \set{\paren{\ell+1, i_1}, \hdots, \paren{\ell+1, i_{f_\ell}}}$. There exists $i_m$ such that $\mathcal{T}_{\ell, i_m}\neq \mathcal{T}_{\ell, i_m}'$ for some $1\leq m \leq f_\ell$. 
\end{enumerate}

Our goal is to show that for each of the above cases, \eqnref{eq: updated aggr} can generate distinct outputs for $\mathcal{T}_{\ell+1}$ and $\mathcal{T}_{\ell+1}'$.

Similar to the construction in the $\ell=1$ case, we set $\UP[\ell][n]=\bm{0}$ and $\DOWN[\ell][n]=\bm{0}$. And $\bm{b}_\ell^n$ is a one-hot vector with the first element being the expert index (ranging from 1 to $s_{\ell}$). 
Recall that $\W[\ell]\in\R^{d_{\ell+1}\times d_\ell}$ where $d_{\ell+1} = s_\ell\cdot r_\ell+ d_\ell$ (see \eqnref{eq: d final}). 
We set 
\begin{align}\label{eq: set w proof}
\W[\ell] = \begin{bmatrix}
    \bm{0}_{\paren{s_\ell\cdot r_\ell}\times d_\ell}\\
    \bm{I}_{d_\ell\times d_\ell}
\end{bmatrix}
\end{align}

where $\bm{0}_{\paren{s_\ell\cdot r_\ell}\times d_\ell}$ is a $\paren{s_\ell\cdot r_\ell}\times d_\ell$ all-0 matrix and $\bm{I}_{d_\ell\times d_\ell}$ is a $d_\ell\times d_\ell$ identity matrix. 

So \eqnref{eq: updated aggr} now becomes

\begin{align}
    \x[\ell+1][i] = \sum_{n\in\N[\ell][i]}\sigma\paren{\begin{bmatrix}
    \hat{\bm{b}}_{\ell}^n\\
    \x[\ell][i\rightarrow n]
    \end{bmatrix}}
\end{align}

where $\hat{\bm{b}}_\ell^n$ is a length-$\paren{s_\ell\cdot r_\ell}$ vector by discarding the trailing 0s of $\bm{b}_\ell^n$. 

Since the layer-$\paren{\ell+1}$ output corresponds to the tree root, we can ignore the index $i$. Also note that $i\rightarrow n$ is essentially $i_m$ of $\mathcal{E}_\ell$ above. 

So we have 

\begin{align}
    \x[\ell+1] = \sum_{n\in\N[\ell]}\sigma\paren{\begin{bmatrix}
    \hat{\bm{b}}_{\ell}^n\\
    \x[\ell][i_m]
    \end{bmatrix}}
\end{align}

Finally, we go back to the two cases above that makes two trees non-isomorphic. 
Clearly, for either case, the two non-isomorphic trees will have different sets of $\begin{bmatrix}\hat{\bm{b}}_{\ell}^n\\
    \x[\ell][i_m]\end{bmatrix}$. 
This allows us to apply Lemma \ref{lemma: gin}, and conclude that the outputs $\x[\ell+1]$ will also be different for the two non-isomorphic trees. 

Note that
\begin{enumerate*}
    \item we are still dealing with a countable feature space, since there are finite number (i.e., $\Gamma_{\text{\smore}}^\ell$) of distinct $\x[\ell][i_m]$;
    \item Different sets of $\begin{bmatrix}\hat{\bm{b}}_{\ell}^n\\ \x[\ell][i_m]\end{bmatrix}$ means different input ``$X$'' to function $f$ in Lemma \ref{lemma: gin}. 
\end{enumerate*}

This completes the induction step from $\ell$ to $\ell+1$. 

In sum, our layer propagation function in \eqnref{eq: updated aggr} ensures that we can find some {\smore} parameters $\Theta'$ such that all depth-$L$ non-isomorphic trees will lead to distinct outputs $\x[L]$. 

Combining the proof for the two stages, we have shown that the ``structural flexibility'' of {\smore} equals

\begin{align}
    \Gamma_{\text{\smore}} = \prod_{\ell=0}^{L-1}\binom{s_\ell}{f_\ell}^{F_{\ell+1}}. 
\end{align}

\paragraph{Final remark. }
In the proof, we require $\sigma$ to be an MLP. 
In practice, we can implement $\sigma$ simply as non-linear activation (e.g., ReLU). 
It is easy to see that setting $\sigma$ as ``an MLP with a \emph{single} hidden layer of dimension $d_{\ell+1}$'' is equivalent to setting $\sigma$ simply as an activation function --- 
For the single-layer MLP, the transformation matrix before the activation can be merged with $\UP[\ell][n]\cdot \DOWN[\ell][n]$ and $\W[\ell]$ of the {\smore} layer. The transformation matrix after the activation can be merged with the next layer $\W[\ell+1]$. 

Even if we implement $\sigma$ as an MLP of at least 2 layers, it is still computation and parameter efficient. The input dimension to the MLP is $d_\ell$, which is small (compared with the dimension of the token embeddings). Thus, it is reasonable to set the hidden dimension of the MLP layers also small. This makes the overall MLP very compact. 
We can follow similar reasoning as \secref{sec: complexity}. 

\end{proof}

\subsubsection{Proof of Corollary \ref{coro: child aggr}}

\begin{corollary}(Corollary \ref{coro: child aggr})
    Let $\Gamma_\text{\smore*}^\ell$ be the structural flexibility of $\ell$-layer \smore variant under \eqnref{eq: layer aggr l v2}. It satisfies the following recursion: $\Gamma_\text{\smore*}^\ell = \binom{s_{\ell-1}}{f_{\ell-1}}\cdot \binom{\Gamma_\text{\smore*}^{\ell-1}+f_{\ell-1}-1}{f_{\ell-1}}$, where $\Gamma_\text{\smore*}^0\defeq 1$.
\end{corollary}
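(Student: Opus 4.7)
The plan is to prove Corollary~\ref{coro: child aggr} by induction on $\ell$, mirroring the two-stage strategy of Theorem~\ref{thm: smore flex} (first count the distinguishable configurations, then exhibit a $\Theta'$ that realises them). The base case $\Gamma_\text{\smore*}^0\defeq 1$ is by convention --- a $0$-layer structure simply returns the input embedding, a single value.

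For the inductive step, I would exploit the decomposition already pointed out in \secref{sec: smore variant}. Treating $\wExp[\ell-1][i,n]$ as a binary mask and dropping the unique top-level parent index, \eqnref{eq: layer aggr l v2} at the root of an $\ell$-layer \smore* reads
\begin{equation*}
\x[\ell] \;=\; \underbrace{\sum_{n\in\N[\ell-1]}\UP[\ell-1][n]\cdot\DOWN[\ell-1][n]\cdot\x}_{\text{gray term}} \;+\; \underbrace{\W[\ell-1]\cdot\sum_{n\in\N[\ell-1]}\sigma\paren{\x[\ell-1][n]}}_{\text{green term}}.
\end{equation*}
The gray term depends only on the \emph{set} $\N[\ell-1]$ of $f_{\ell-1}$ distinct children drawn from $s_{\ell-1}$ candidates, giving $\binom{s_{\ell-1}}{f_{\ell-1}}$ options. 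The green term depends only on the \emph{multiset} $\set{\x[\ell-1][n] : n\in\N[\ell-1]}$ because the outer sum is permutation invariant. By the induction hypothesis a suitable $\Theta'$ makes each sub-tree realise one of $\Gamma_\text{\smore*}^{\ell-1}$ distinct values; crucially this value is determined by the sub-tree structure strictly below $n$ and not by the identity of $n$ itself (the recursion for $\x[\ell-1][n]$ only touches $\UP[\ell-2][m],\DOWN[\ell-2][m],\W[\ell-2]$ for $m\in\N[\ell-2][n]$), so each of the $f_{\ell-1}$ children can independently attain any of the $\Gamma_\text{\smore*}^{\ell-1}$ values. A standard stars-and-bars count then gives $\binom{\Gamma_\text{\smore*}^{\ell-1}+f_{\ell-1}-1}{f_{\ell-1}}$ distinct multisets, and multiplying the two independent factors reproduces the stated recursion as an upper bound.

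To realise this count as an achievable lower bound, I would construct $\Theta'$ following Appendix~\ref{appendix: smore flex proof}: split the $d_\ell$ output coordinates into a ``gray block'' and a ``green block'' by choosing $\W[\ell-1]$ in the stacked form analogous to \eqnref{eq: set w proof}; on the gray block, set $\UP[\ell-1][n]\cdot\DOWN[\ell-1][n]$ (for a suitable distinguishing $\x$) to encode the expert identity $n$ so that distinct children sets yield distinct gray sums via Lemma~\ref{lemma: gin}; and on the green block, instantiate the MLP $\sigma$ so that $\W[\ell-1]\circ\sigma$ implements the injective multiset-sum function from Lemma~\ref{lemma: gin} on the finite feature space of $\Gamma_\text{\smore*}^{\ell-1}$ sub-tree outputs supplied by induction. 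The main obstacle is checking that the two top-layer injectivities do not conflict, either with each other or with the inductively-fixed deeper parameters; this is resolved by the observation that $\W[\ell-1],\UP[\ell-1][n],\DOWN[\ell-1][n]$ never appear in the recursive computation of any $\x[\ell'][\cdot]$ with $\ell' < \ell$, so the inductively-fixed parameters can be reused unchanged, and the block decomposition of $\W[\ell-1]$ decouples the gray and green injectivity constructions into independent coordinate subspaces.
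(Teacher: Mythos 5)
Your proposal follows essentially the same route as the paper's proof: the same gray/green decomposition of \eqnref{eq: layer aggr l v2}, the same $\binom{s_{\ell-1}}{f_{\ell-1}}$ count over children sets, the same stars-and-bars count $\binom{\Gamma_\text{\smore*}^{\ell-1}+f_{\ell-1}-1}{f_{\ell-1}}$ over multisets of sub-tree outputs with Lemma~\ref{lemma: gin} supplying injectivity of the $\sum\sigma(\cdot)$ aggregation, and the same block-structured $\W[\ell-1]$ to decouple the two contributions. The one step you should make explicit is the gray-term injectivity: because that sum has no $\sigma$ inside it, Lemma~\ref{lemma: gin} does not give you an MLP to realize the encoding there, so you must directly choose the per-expert vectors $\UP[\ell-1][n]\cdot\DOWN[\ell-1][n]\cdot\x$ to form a code whose plain sums are injective on sets (the paper uses a superincreasing sequence with expert $n$ encoded as $2^n$; a one-hot code also works), since ``encoding the expert identity'' as the integer $n$ would collide, e.g.\ $\{1,4\}$ versus $\{2,3\}$.
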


\begin{proof}
    This proof utilizes the construction in proving Theorem \ref{thm: smore flex}. 

    First, we decompose \eqnref{eq: layer aggr l v2} as (like before, we ignore router weight $\wExp$ for brevity):

    \begin{align}\label{eq: layer aggr l v2}
    \x[\ell+1][i] &= \sum_{n\in\N[\ell][i]}\paren{\UP[\ell][n]\cdot\DOWN[\ell][n]\cdot \x + \W[\ell]\cdot \sigma\paren{\x[\ell][n]}}\\
    &=\underbrace{\paren{\sum_{n\in\N[\ell][i]} \UP[\ell][n]\cdot\DOWN[\ell][n]\cdot \x}}_{(a)} + \underbrace{\W[\ell]\cdot\paren{\sum_{n\in\N[\ell][i]} \sigma\paren{\x[\ell][n]}}}_{(b)}
\end{align}

We consider how many distinct values (a) and (b) can take. 

\paragraph{Term (b). }
Suppose an $\ell$-layer \smore* can generate $\Gamma_\text{\smore*}^\ell$ distinct outputs, meaning that $\x[\ell][n]$ can take $\Gamma_\text{\smore*}^\ell$ different values -- This is as if we have a pool of $\Gamma_\text{\smore*}^\ell$ distinct elements.

The \underline{first question} is, if we take $f_\ell = \size{\N[\ell][i]}$ elements from this pool (where the same element can be taken multiple time, since different children $n$ can have the same descendant sub-tree), how many unique multisets\footnote{A multiset is a set where an element can appear multiple times. } can we obtain. 
This is a classic ``combination with replacement'' problem, and the solution is $\binom{\Gamma_\text{\smore*}^\ell+f_\ell-1}{f_\ell}$.

The \underline{second question} is, can we encode each distinct multiset into distinct outputs via the form of $\sum\sigma(\cdot)$. 
Reusing Lemma \ref{lemma: gin}\footnote{The original Lemma in \cite{gin} is indeed derived on multisets. }, the answer is affirmative. 

So term (b) can take $\Gamma_\text{\smore*}^\ell$ distinct values. 

\paragraph{Term (a).}
Since the router takes top-$f_\ell$ experts, there are in total $\binom{s_\ell}{f_\ell}$ distinct $\N[\ell][i]$. 
The key problem is if we perform the simple summation $\sum_{n\in\N[\ell][i]}$ without the mapping $\sigma$, can we ensure distinct output for each distinct $\N[\ell][i]$ (we cannot apply Lemma \ref{lemma: gin} without $\sigma$)? 
i.e., for any $\N[\ell][i] \neq {\N[\ell][i]}'$, how can we ensure $\sum_{n\in\N[\ell][i]}\UP[\ell][n]\cdot\DOWN[\ell][n]\cdot \x \neq \sum_{n'\in{\N[\ell][i]}'}\UP[\ell][n']\cdot\DOWN[\ell][n']\cdot \x$. 
Setting a bias term encoding the expert index $i$, following Appendix \ref{appendix: smore flex proof}, does not work. A failure case is that $\N[\ell][i]$ contains experts 1, 4 and ${\N[\ell][i]}'$ contains experts 2, 3: $1 + 4 = 2 + 3$ even through $\set{1, 4}\neq \set{2, 3}$. 
Fortunately, there are existing encoding schemes that satisfies our requirement. 
For example, we can encode the $s_\ell$ experts into a ``superincreasing sequence'' where expert $i$ is encoded into $2^i$. 
In this case, it is guaranteed that $\sum_{n\in\N[\ell][i]}2^n\neq \sum_{n'\in{\N[\ell][i]}'}2^{n'}$ for any $\N[\ell][i]\neq{\N[\ell][i]}'$. 

\paragraph{Combining (a) and (b). }
Finally, when we set $\W[\ell]$ according to \eqnref{eq: set w proof}, we are guaranteed that any two different pairs of (a) and (b) will have different values of ``(a) + (b)''. 
This means the total number of distinct $\x[\ell+1][i]$ we can obtain from \eqnref{eq: layer aggr l v2} equals:

\begin{align}\label{eq: flex star proof}
    \Gamma_\text{\smore*}^{\ell+1} = \binom{s_\ell}{f_\ell}\cdot \binom{\Gamma_\text{\smore*}^\ell+f_\ell-1}{f_\ell}
\end{align}

Lastly, when $\ell=1$, it is obvious that $\Gamma_\text{\smore*}^{1}$ should be $\binom{s_0}{f_0}$. 
If we define $\Gamma_\text{\smore*}^{0}\defeq 1$ and let $\ell=0$, \eqnref{eq: flex star proof} becomes $\Gamma_\text{\smore*}^{1} = \binom{s_0}{f_0}\cdot \binom{1+f_0-1}{f_0} = \binom{s_0}{f_0}$, which satisfies the initial condition. 

This completes the proof. 

\end{proof}

\subsubsection{Proof of Corollary \ref{coro: shared}}

\begin{corollary}(Corollary \ref{coro: shared})
    The structural flexibility of \smore\textsuperscript{\#} equals $\prod_{\ell=0}^{L-1}\binom{s}{f}^{F_{\ell+1}}$ where $F_L\defeq 1$. 
\end{corollary}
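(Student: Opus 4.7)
The plan is to adapt the two-stage proof of Theorem~\ref{thm: smore flex} to the parameter-shared setting, specializing $s_\ell \equiv s$, $f_\ell \equiv f$ and then verifying that the sharing constraint on $\UP^i$ and $\DOWN^i$ does not break the distinguishing construction.

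For Stage~1 (counting non-isomorphic routing trees), I would rerun the induction from the proof of Theorem~\ref{thm: smore flex} after substituting $s_\ell=s$ and $f_\ell=f$. The recursion $\size{\mathbb{T}_{\ell+1}} = \binom{s}{f}\cdot \size{\mathbb{T}_{\ell}}^{f}$ telescopes to $\prod_{k=0}^{\ell}\binom{s}{f}^{F_{k+1}/F_{\ell+1}}$, which at $\ell+1=L$ (using $F_L\defeq 1$) yields $\prod_{k=0}^{L-1}\binom{s}{f}^{F_{k+1}}$. Tree isomorphism (Definition~\ref{dfn: iso tree}) is characterized purely by the set of leaves' ancestral paths, which only records expert \emph{indices} at each tree position, so parameter sharing across layers has no bearing on which trees count as distinct.

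For Stage~2 I need to exhibit $\Theta'$ obeying $\UP^i\defeq\UP[\ell][i]$ and $\DOWN^i\defeq\DOWN[\ell][i]$ such that non-isomorphic depth-$L$ trees yield distinct $\x[L]$ for every input $\x$. The key observation is that the construction used in the proof of Theorem~\ref{thm: smore flex} is already sharing-compatible: it sets $\UP[\ell][n]=\bm{0}$ and $\DOWN[\ell][n]=\bm{0}$ and delegates identity encoding to the per-expert bias $\bm{b}_{\ell}^n$ together with the projection $\W[\ell]$ chosen as in \eqnref{eq: set w proof}. Since the zero choice is invariant across $\ell$, the sharing constraint is automatically satisfied. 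The recursion then collapses to
\[
\x[\ell+1][i] = \sum_{n\in\N[\ell][i]} \sigma\!\paren{\begin{bmatrix}\hat{\bm{b}}_{\ell}^n \\ \x[\ell][i\to n]\end{bmatrix}},
\]
which is exactly the form already analyzed in Appendix~\ref{appendix: smore flex proof}. The induction there transfers verbatim: by Lemma~\ref{lemma: gin}, $\sigma$ can be realized as an MLP whose induced multiset sum is injective on the countable feature space $\set{\paren{\hat{\bm{b}}_{\ell}^n,\x[\ell][i\to n]}}$, so iterating from $\ell=1$ up to $\ell=L$ yields distinct $\x[L]$ for every pair of non-isomorphic depth-$L$ trees.

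I expect the only (mild) obstacle to be checking that the sharing constraint is consistent with this distinguishing construction; the resolution is simply that zero matrices are trivially shared, and that the per-layer degrees of freedom $\paren{\bm{b}_{\ell}^n, \W[\ell]}$ — which the definition of \smore\textsuperscript{\#} does not tie across layers — suffice to carry all structural information. Combining Stages~1 and 2 then yields $\Gamma_{\text{\smore\textsuperscript{\#}}}=\prod_{\ell=0}^{L-1}\binom{s}{f}^{F_{\ell+1}}$.
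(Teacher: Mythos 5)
Your overall strategy coincides with the paper's: both reduce the corollary to the two-stage argument of Theorem~\ref{thm: smore flex}, and your Stage~1 specialization $s_\ell\equiv s$, $f_\ell\equiv f$ is exactly right (isomorphism depends only on leaves' ancestral paths, so sharing is irrelevant to the count). You also correctly observe that the sharing constraint on $\UP^i$ and $\DOWN^i$ is vacuous for the distinguishing construction, since those matrices are set to zero in every layer.

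However, you identified the wrong obstacle and missed the real one, so your claim that the Stage~2 induction ``transfers verbatim'' has a gap. In \smore\textsuperscript{\#} every layer has the \emph{same} dimension $d=s\cdot r$, whereas the construction in Appendix~\ref{appendix: smore flex proof} relies essentially on the dimension growing via $d_{\ell+1}=d_\ell+s_\ell r_\ell$: the matrix $\W[\ell]$ of \eqnref{eq: set w proof} has shape $\paren{s_\ell r_\ell + d_\ell}\times d_\ell$, and the intermediate vector $\begin{bmatrix}\hat{\bm{b}}_{\ell}^n\\ \x[\ell][i\rightarrow n]\end{bmatrix}$ lives in $\R^{s_\ell r_\ell + d_\ell}$. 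With a fixed dimension $d$ this concatenation does not type-check (it would require $2d$ coordinates inside a $d$-dimensional layer), so the recursion you wrote down cannot be realized as stated. The fix is small but necessary, and it is precisely what the paper supplies: invoke Lemma~\ref{lemma: gin} with a target dimension $d'<d$ (the lemma places no constraint on the output dimension), so that the multiset sum $\sum\sigma\paren{\cdot}$ occupies only a $d'$-dimensional subspace, and redefine $\W[\ell]$ as a projection whose first $d-d'$ rows are zero and whose remaining $d'$ rows map onto that subspace, leaving the top $d-d'$ coordinates free for the bias. With that modification your induction goes through and the corollary follows.
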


\begin{proof}
    This proof follows almost exactly as the the proof of Theorem \ref{thm: smore flex} in Appendix \ref{appendix: smore flex proof}. 
    The only difference is that now every layer has the same dimension $d$, rather than $d$ being increased with larger layer index $\ell$. 

    This just requires the following minor modification to the proof in Appendix \ref{appendix: smore flex proof}:
    \begin{itemize}
        \item When applying Lemma \ref{lemma: gin}, instead of constructing the mapping $\mathcal{X}\rightarrow \R^d$, we instead do the mapping $\mathcal{X}\rightarrow\R^{d'}$, with any $d' < d$ (Note that the Lemma does not have constraint on the output dimension $d$). So the output of $\sum\sigma\paren{\cdot}$ is in a $d'$-dimensional subspace of $\R^d$. 
        \item Updating \eqnref{eq: set w proof}, we set $\W[\ell]$ to be a projection matrix with the first $d-d'$ rows being 0, and the rest $d'$ rows being a projection from $\R^d$ to the $\R^{d'}$ that $\sum\sigma\paren{\cdot}$ spans. 
    \end{itemize}
\end{proof}

\section{Additional Experimental Results}

\subsection{Dataset details}
\label{appendix: dataset}

We evaluate on a diverse set of benchmark consisting of 7 popular fine-tuning datasets. 
Specifically, ARC-c and ARC-e~\citep{clark2018think} evaluate logical reasoning and world knowledge through challenging multiple-choice questions.
Commonsense QA~\citep{talmor2018commonsenseqa} assesses a model's grasp of everyday knowledge and implicit relationships.
OpenBook QA~\citep{OpenBookQA2018} requires multi-step reasoning over scientific facts, while Winogrande~\citep{sakaguchi2021winogrande} measures commonsense pronoun resolution.
Accuracy is used as the evaluation metric for all above datasets.
In addition, we evaluate the models on 2 more challenging datasets. 
GSM8K~\citep{gsm8k} contains 8.5k high-quality linguistically diverse grade school math word problems. 
Deriving the correct solution requires multi-step reasoning (2 to 8 steps) by the LLM model. 
CodeAlpaca~\citep{codealpaca} contains 20k instruction-following data for fine-tuning LLM's code generation capability. 
HumanEval~\citep{humaneval} consists of 164 hand-written programming problems, to access the LLM's capabilities in language comprehension, reasoning, algorithms, and simple mathematics. 
We train the LLM on CodeAlpaca and then evaluate the checkpoint on HumanEval. 
We measure the ``Pass@1'' metric, where we let the fine-tuned model to generate $k=1$ solution for each problem, and evaluate whether it can pass the unit tests. 

\subsection{More details on experimental setup}
\label{appendix: exp setup}

For all models, we insert the adapters to the feed forward networks (FFN) of all transformer layers of the base models.
Specifically, each FFN consists of an ``up-projection'' matrix, a ``gate-projection'' matrix and a ``down-projection'' matrix.
We insert the adapter to each of the three matrices.

To ensure a fair comparison, we set an equal budget for trainable adapter parameters and compare different model architecture within this constraint.
For LoRA~\citep{hu2021lora}, we vary the rank $r$ in $\set{2^k\given 0\leq k\leq 10}$, and set the \texttt{lora\_alpha} parameter as $2\cdot r$ following standard practice.
For MixLoRA~\citep{mixlora}, we adjust the number of experts within $\set{4, 8}$, keep the number of active experts within $\set{1, 2, 4}$\footnote{``Number of active experts'' is only set for the sparse gates (``noisy top-$k$'' and ``switch''). For dense gates, the number of active experts equals total number of experts. } (while ensuring that it does not exceed half of the total experts), and the expert dimension within $\set{2^k \given 0\leq k\leq 6}$.
For HydraLoRA~\citep{hydralora}, we vary the number of heads in $\set{4, 8}$, and the rank $r$ in $\set{2^k\given 0\leq k \leq 8}$. 
For {\smore}, in most experiments (except the ``scaling-up'' study in \tabref{tab: smore 3l}), we limit {\smore} to two layers due to resource constraints. 
We vary the number of experts $\paren{s_0, s_1}$ within $\set{\paren{2, 2}, \paren{4, 4}}$: the fanout $\paren{f_0, f_1}$ is $\paren{1, 1}$ when $\paren{s_0, s_1}=\paren{2,2}$ and is $\paren{2, 2}$ when $\paren{s_0, s_1}=\paren{4,4}$\footnote{Same as above, the fanouts are only set for sparse gates. For dense gates, the fanout of layer $\ell$ equals the total number of experts in layer $\ell$}. We vary the expert dimension $\paren{r_0, r_1}$ within $\set{\paren{2^k,2^k}\given 0\leq k\leq 6}\cup\set{\paren{2^k, 2^{k+1}}\given 0\leq k\leq 5}\cup\set{\paren{2^k, 2^{k+2}}\given 0\leq k\leq 4}$.
All baselines and {\smore} are trained with 2 epochs, with learning rate $1e-4$. The learning rate follows a cosine schedule.

\paragraph{Software \& hardware.}
We implement {\smore} by adding a customized adapter to the Hugging Face PEFT library~\citep{peft}. 
All models are trained via the LLaMA-Factory~\citep{llamafactory} SFT pipeline, ensuring a consistent execution environment. 
Similarly, all the evaluations are conducted through OpenCompass~\citep{opencompass}, which is a unified evaluation framework providing a standard API for all considered benchmarks. 
For the computation hardware, all experiments are run on a single node with 4 NVIDIA A100 80GB GPUs. 

\subsection{Wall-clock time \& potential system optimizations}

\begin{table*}[t]
    \centering
    \caption{Wall-clock time (second) comparison
    }
    \label{tab: wallclock}
    \setlength{\tabcolsep}{2pt}
    \renewcommand{\arraystretch}{1}
    \resizebox{0.8\textwidth}{!}{%
    \begin{tabular}{l|lllll|l}
    \toprule
        Method & ARC-c & ARC-e & CSQA & OBQA & Winogrande & Average\\
        \midrule
        \midrule
        MixLoRA & 426 & 794 & 3343 & 3539 & 3007 & 2222\\
        \smore & 489 (1.15$\times$) & 957 (1.21$\times$) & 4289 (1.28$\times$) & 4406 (1.24$\times$) & 4014 (1.33$\times$) & 2831 (1.24$\times$)\\
    \bottomrule
    \end{tabular}
}
\end{table*}

While \secref{sec: complexity} ensures that \smore theoretically incurs negligible computation overhead, it is true that without system-level optimization, the multi-layer structure may increase the wall-clock time. Yet, such overhead is small.

\paragraph{Measurement.} \tabref{tab: wallclock} shows the wall-clock time to finish training of MixLoRA and \smore, measured on the same machine (with 4 NVIDIA A100 GPUs) and same software environment (based on LLaMA-Factory). The backbone model is LLaMA 3-8B. Trainable parameters of MixLoRA (8 rank-64 experts) and \smore (2 layers, each with 4 rank-64 experts) are comparable.

On average, \smore incurs 24\% wall-clock time overhead, which is relatively small. The above measurement is based on \smore under native \texttt{PyTorch} implementation, without any system optimization. It is reasonable to expect that the wall-clock time overhead can be further reduced by applying standard techniques, such as

\begin{itemize}
    \item \underline{CUDA kernel fusion}, which combines the operation of multiple \smore layers into a single CUDA kernel. This can effectively reduce the ``kernel launch'' overhead associated with deeper \smore (in native \texttt{PyTorch}, each layer may require its own ``kernel launch'').
    \item \underline{Token-level parallelism}, which interleaves the processing of different layers across different tokens. This is achievable by custom Triton kernels or \texttt{torch.compile(..)} optimization. Such parallelism addresses the load-balance between the router and expert layers (since the router is more lightweight than the expert propagation), which improves GPU utilization. Such parallelism can also break the dependency between the top-down routing and bottom-up propagation, as these two stages can be interleaved across tokens.
\end{itemize}

\subsection{Routing cost}

\begin{figure}[htbp]
    \centering
    \includegraphics[width=0.6\textwidth]{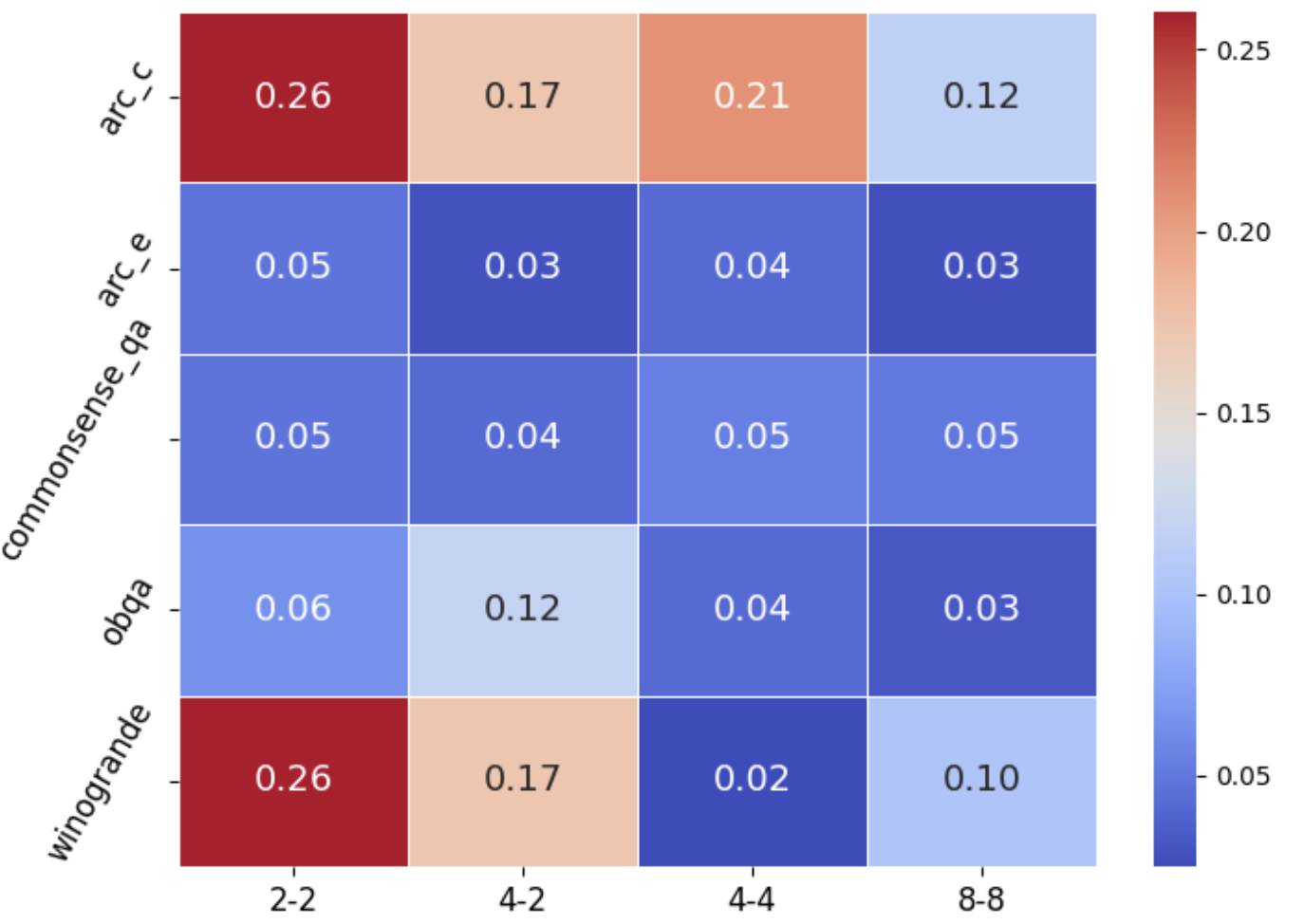}
    \caption{Cost of router (\eqnref{eq: router}) relative to expert propagation (\eqnref{eq: layer aggr l}), measured by their number of arithmetic operations. Here we consider {\smore} with noisy top-$k$ gate on LLaMA 3.2-1B}
    \label{fig: exp r cost}
\end{figure}

In \figref{fig: exp r cost}, we visualize the router computation cost (\eqnref{eq: router}) relative to that of the experts' layer propagation (\eqnref{eq: layer aggr l}), corresponding to the best-performing models in \tabref{tab:sota_combined}. 
The $x$-axis denotes the different {\smore} structures (in \tabref{tab:sota_combined}, we do not include the results corresponding to the ``4-2'' and ``8-8'' {\smore} architectures, due to space limit). 
The costs are measured by the total number of arithmetic operations performed by the routers versus by the experts. 
In general, when the residual rank $r_\ell$ is lower, the cost of routing becomes \emph{relatively} higher (since the router operation is independent of the ranks). 
However, in all cases, the routing cost is insignificant compared to the cost of expert propagation (at most 26\%). 
This is consistent with our theoretical complexity analysis in \secref{sec: complexity}.

\section{Limitations and Broader Impact}
\label{appendix: limitation}

\paragraph{Limitations. }
This paper focuses on a novel model architecture design for parameter-efficient MoE, whose computation graph differs from those of standard LoRA and single-layer MoE. 
We do not focus on the corresponding system-level optimization, and thus our implementation of the {\smore} layers is written in native PyTorch. 
To optimally utilize the GPU resources and further accelerate the model execution on commercial hardware,
dedicated CUDA kernels may be developed and different levels of execution parallelism (e.g., data-, model- and pipeline-parallelism) may be explored. 
In addition, we may integrate {\smore} into state-of-the-art LLM acceleration frameworks such as vLLM~\citep{vllm} or LMDeploy~\citep{lmdeploy} to boost the practical execution speed. 
We treat such system-level optimization as meaningful future work. 

\paragraph{Broader impact. }
This work focuses on developing a new PEFT model for the general LLM fine-tuning tasks. It does not have any direct negative societal impact. 
In the future, {\smore} may be extended to other tasks or models, to broaden its impact on the enhanced model capacity. 
For example, we may apply the hierarchical residual design to foundation models under pre-training. 
The dramatically improved ``structural flexibility'' under the same parameter and computation budget has the potential to break the ceiling of the current scaling law for both the dense and MoE LLMs. 


\end{document}